\documentclass[a4paper]{article}
\usepackage[a4paper, margin=2.5cm]{geometry} %

\usepackage[square]{natbib}
\let\cite\citep %

\usepackage[svgnames]{xcolor}

\newcommand{\data}[1]{#1}

\newcommand{\bld}[1]{\boldmath\textbf{#1}\unboldmath}

\usepackage{tabularx}
\usepackage{booktabs}
\newcommand{\bestCovr}[1]{\textbf{#1}}

\usepackage{xurl}                                 %
\usepackage[backref=page]{hyperref}               %
\hypersetup{
linktocpage,                                      %
colorlinks  = true,                               %
urlcolor    = cyan,
linkcolor   = DarkGreen,
citecolor   = blue
}
\renewcommand*{\backref}[1]{}
\renewcommand*{\backrefalt}[4]{[%
\ifcase #1 Not cited.%
  \or Cited on page~#2.%
  \else Cited on pages #2.%
\fi]}

\usepackage{url}

\usepackage{cancel} %
\usepackage{subcaption} %
\usepackage[titletoc]{appendix}
\usepackage[tight-spacing=true]{siunitx} %
\usepackage{multirow}
\usepackage{enumitem} %
\usepackage[export]{adjustbox}
\usepackage[ruled,vlined,linesnumbered]{algorithm2e} %
\SetKwProg{function}{Function}{}{}
\usepackage{lscape}
\usepackage{makecell}
\usepackage{xspace}
\usepackage{fwt_math_and_notation}
\usepackage{fwt_lp}
\usepackage{cuted}
\usepackage{framed}
\usepackage{mathrsfs}
\usepackage{bm}
\usepackage{tikz-network}
\usepackage{tikz}
\usetikzlibrary{fit, calc, patterns}
\usetikzlibrary{shapes.multipart}  %

\newcommand{\citeAuthor}[1]{\citet{#1}} %

\usepackage{cleveref}

\crefname{algorithm}{algorithm}{algorithms}
\crefname{line}{line}{lines}
\crefname{definition}{definition}{definitions}
\crefname{lemma}{lemma}{lemmata}
\crefname{theorem}{theorem}{theorems}
\crefname{figure}{figure}{figures}
\crefname{table}{table}{tables}
\crefname{assumption}{assumption}{assumptions}

\newcommand{\wrt}{\ensuremath{\text{w.r.t.\ }}}
\newcommand{\st}{\ensuremath{\text{s.t.\ }}}

\newcommand{\astar}{A$^*$\xspace}
\newcommand{\peastar}{PE\astar}
\newcommand{\lao}{LAO$^*$\xspace}
\newcommand{\ilao}{\ensuremath{\text{iLAO}^*}\xspace}
\newcommand{\cgilao}{CG-\ilao}

\newcommand{\lrtdp}{LRTDP\xspace}
\newcommand{\ftvi}{FTVI\xspace}

\newcommand{\bellmanTied}{\ensuremath{\textsc{All-Greedy-Actions}}\xspace}
\newcommand{\bellmanSingle}{\ensuremath{\textsc{Single-Greedy-Action}}\xspace}

\newcommand{\cgilaoBellmanTied}{\ensuremath{\text{CG-iLAO}^{*}_{\text{tied}}}\xspace}
\newcommand{\cgilaoBellmanSingle}{\ensuremath{\text{CG-iLAO}^{*}_{\text{single}}}\xspace}
\newcommand{\cgilaoBellmanComplete}{\ensuremath{\text{CG-iLAO}^{*}_{\text{all}}}\xspace}
\newcommand{\cgilaoTrial}{\ensuremath{\text{CG-iLAO}^{*}_{\text{trial}}}\xspace}
\newcommand{\cgilaoFFAO}{\ensuremath{\text{CG-iLAO}^{*}_{\text{FF-AO}}}\xspace}
\newcommand{\cgilaoFFMLO}{\ensuremath{\text{CG-iLAO}^{*}_{\text{FF-MLO}}}\xspace}
\newcommand{\cgilaoBellmanTiedActionElim}{\ensuremath{\text{CG-iLAO}^{*}_{\text{tied-elim}}}\xspace}
\newcommand{\cgilaoBellmanSingleActionElim}{\ensuremath{\text{CG-iLAO}^{*}_{\text{single-elim}}}\xspace}
\newcommand{\cgilaoBellmanCompleteActionElim}{\ensuremath{\text{CG-iLAO}^{*}_{\text{all-elim}}}\xspace}

\newcommand{\find}{\textsc{Find}\xspace}
\newcommand{\revise}{\textsc{Revise}\xspace}
\newcommand{\findAndRevise}{\textsc{Find-and-Revise}\xspace}

\newcommand{\roc}{\ensuremath{\text{ROC}}\xspace}
\newcommand{\lmcut}{\ensuremath{\text{LMCut}}\xspace}
\newcommand{\pdb}{\ensuremath{\text{PDB}}\xspace}

\newcommand{\supp}{\ensuremath{\text{succ}}}
\newcommand{\definedas}{\ensuremath{\overset{\text{\tiny{def}}}{=}}}

\newcommand{\partssptuple}{\ensuremath{\langle \partstates, \sZ, \partgoals, \partactions, \pr,\C, \h \rangle}\xspace}
\newcommand{\partssp}{\ensuremath{\widehat{\ssp}}\xspace}
\newcommand{\partstates}{\ensuremath{\widehat{\Ss}}\xspace}
\newcommand{\partgoals}{\ensuremath{\widehat{\Sg}}\xspace}
\newcommand{\partactions}{\ensuremath{\widehat{\A}}\xspace}

\newcommand{\bw}{BW\xspace}

\newcommand{\elevators}{Elev\xspace}
\newcommand{\exbw}{ExBW\xspace}
\newcommand{\random}{Rand\xspace}
\newcommand{\recttireworld}{\ensuremath{\Box\text{TW}}\xspace}
\newcommand{\schedule}{Sched\xspace}
\newcommand{\sar}{S\&R\xspace}
\newcommand{\sysadmin}{Sys\xspace}
\newcommand{\tritireworld}{\ensuremath{\Delta\text{TW}}\xspace}
\newcommand{\zenotravel}{Zeno\xspace}
\newcommand{\coresec}{Coresec\xspace}
\newcommand{\parc}{PARC\xspace}
\newcommand{\parcn}{PARC-N\xspace}
\newcommand{\parcr}{PARC-R\xspace}

\newcommand{\lpxs}{\ensuremath{\bm{x}}\xspace}

\newcommand{\duals}{\ensuremath{\bm{\mathcal{V}}}\xspace}

\newcommand{\incDecTol}{\ensuremath{\eta}\xspace} %
\newcommand{\currp}{\ensuremath{\widehat{\p}_{\text{curr}}}\xspace}
\newcommand{\oldp}{\ensuremath{\widehat{\p}_{\text{old}}}\xspace}
\newcommand{\partp}{\ensuremath{\widehat{\p}_{\V}}\xspace}
\newcommand{\colsToCheck}{\ensuremath{\Gamma}\xspace}
\newcommand{\envelope}{\ensuremath{\mathcal{E}}\xspace}

\newcommand{\residual}{\ensuremath{\textsc{res}}\xspace}

\newcommand{\improvePolicy}{\ensuremath{\textsc{Backups}}}
\newcommand{\CGimprovePolicy}{\ensuremath{\textsc{CG-Backups}}}
\newcommand{\addStateActions}{\ensuremath{\textsc{Add-Actions}}}
\newcommand{\expandFringes}{\textsc{Expand-Fringes}}
\newcommand{\partiallyExpandFringes}{\textsc{Partly-Expand-Fringes}}
\newcommand{\fixViolatedConstrs}{\textsc{Fix-Constrs}}
\newcommand{\predecessors}{\ensuremath{\textsc{Preds}}\xspace}
\newcommand{\successors}{\ensuremath{\textsc{Ext-Succs}}\xspace}

\newcommand{\V}{\ensuremath{V}\xspace}
\newcommand{\Vlb}{\ensuremath{\V_{\text{lb}}}\xspace}
\newcommand{\Vub}{\ensuremath{\V_{\text{ub}}}\xspace}
\newcommand{\Q}{\ensuremath{Q}\xspace}
\newcommand{\Qlb}{\ensuremath{\Q_{\text{lb}}}\xspace}
\newcommand{\Qub}{\ensuremath{\Q_{\text{ub}}}\xspace}
\newcommand{\Qsa}{\ensuremath{Q(\s, \ac)}\xspace}
\newcommand{\qvalue}{\Q-value\xspace}
\newcommand{\qvalues}{\Q-values\xspace}

\newcommand{\econsistent}{\ensuremath{\epsilon\text{-consistent}}\xspace}
\newcommand{\econsistency}{\ensuremath{\epsilon\text{-consistency}}\xspace}
\newcommand{\N}[3]{\ensuremath{N({#1, #2, #3})}\xspace}
\newcommand{\Nbar}[2]{\ensuremath{\overline{N}({#1, #2})}\xspace}

\usepackage{authblk}

\title{Efficient Constraint Generation for \\ Stochastic Shortest Path Problems}
\author[1,2]{Johannes Schmalz}
\author[1]{Felipe Trevizan}
\affil[1]{Australian National University, Australia}
\affil[2]{Saarland University, Germany}
\date{} %

\begin{document}

\maketitle

\begingroup
\renewcommand\thefootnote{}

\makeatletter
\long\def\@makefntext#1{\noindent #1}
\makeatother

\footnotetext{\textit{Email addresses:} \texttt{johannes.schmalz@anu.edu.au} (Johannes Schmalz), \texttt{felipe.trevizan@anu.edu.au} (Felipe Trevizan).\vspace{4mm}}
\footnotetext{\noindent \textit{This manuscript has been accepted for publication in Artificial Intelligence. The final version of record is available at:} \url{https://doi.org/10.1016/j.artint.2026.104505}}
\endgroup

\begin{abstract}
\noindent
Stochastic Shortest Path problems (SSPs) are traditionally solved by computing each state's cost-to-go by applying Bellman backups.
A Bellman backup updates a state's cost-to-go by iterating through every applicable action, computing the cost-to-go after applying each one, and selecting a minimal action's cost-to-go.
State-of-the-art algorithms use heuristic functions; these give an initial estimate of costs-to-go, and lets the algorithm apply Bellman backups only to promising states, determined by low estimated costs-to-go.
However, each Bellman backup still considers all applicable actions, even if the heuristic tells us that some of these actions are too expensive, with the effect that such algorithms waste time on unhelpful actions.
To address this gap we present a technique that uses the heuristic to avoid expensive actions, by reframing heuristic search in terms of linear programming and introducing an efficient implementation of constraint generation for SSPs.
We present \cgilao, a new algorithm that adapts \ilao with our novel technique, and considers only \data{\(40\%\)} of \ilao's actions on many problems, and as few as \data{\(1\%\)} on some.
Consequently, \cgilao computes on average \data{\(3.5\times\)} fewer costs-to-go for actions than the state-of-the-art \ilao and \lrtdp, enabling it to solve problems faster an average of \data{\(2.8\times\) and \(3.7\times\)} faster, respectively.
\end{abstract}

\section{Introduction}

Planning problems are abstract tasks where an agent must move through a system of states.
To do so, the agent is given a set of actions that move it through the states according to some rules.
This type of problem is ubiquitous throughout theory and practice, and has been used to encode many real world problems, e.g., finding the shortest route on a map, coordinating warehouse robots to fulfil orders in an efficient and collaborative way~\cite{Koenig2023}, and producing fuel-efficient flight routes that take into account bad weather and complex aviation restrictions~\cite{Geisser2020:airbus}.
In the classical planning problem, actions move the agent between states in a deterministic way.
This paper focuses on probabilistic planning, in particular Stochastic Shortest Path Problems (SSPs)~\cite{Bertsekas1991:SSPs}.
SSPs generalise classical planning by making the effect of applying an action probabilistic: when the agent applies an action, it moves into its next state according to a probability distribution that is known a priori.
These have been used to model many practical problems, e.g., penetration testing (pentesting)~\cite{Hoffmann2015:pentesting}, ecological management~\cite{Peron2017:MDPs-for-mosquitos}, and management of hydroelectric and thermal power generation~\cite{White1985:ApplicationsOfMDPs}.

All state-of-the-art algorithms for optimally solving SSPs compute the optimal costs-to-go of relevant states.
The cost-to-go of a state \s can be understood as the expected cost that the agent incurs by starting at \s and repeatedly applying actions until it reaches a goal.
Respectively, the optimal cost-to-go gives the expected cost to reach a goal with an optimal policy, which minimises the expected costs.
The purpose of computing the optimal costs-to-go is that they induce an optimal policy: if the agent chooses its action for state \s greedily, i.e., it picks an action that minimises the optimal cost-to-go, then this induces an optimal policy.
The state-of-the-art algorithms are all based on one particular algorithm for computing optimal costs-to-go, called Value Iteration (VI)~\cite{Bellman57}.
VI is a dynamic programming algorithm that iteratively applies \emph{Bellman backups} to converge to the optimal costs-to-go.
A Bellman backup applied to state \s
\begin{itemize}
\item evaluates the cost-to-go after applying each applicable action \ac,
\item selects an action with minimal cost-to-go, and then
\item sets the cost-to-go of \s to the minimal action's cost-to-go.
\end{itemize}

The two state-of-the-art algorithms for solving SSPs, \ilao~\cite{Hansen2001:ilao} and \lrtdp~\cite{Bonet2003:lrtdp}, build on VI and can be orders of magnitude faster because they use heuristics to guide their search.
VI is a blind-search algorithm, which means it considers all states in the problem.
In contrast, \ilao and \lrtdp have access to heuristic functions that estimate the optimal costs-to-go for all states, which lets the algorithms focus on promising states with small heuristic values, and avoid unpromising states with large values.
Thus, heuristic-search algorithms apply Bellman backups only to a promising subset of states.
In practice, heuristic-search algorithms scale to much larger problems than blind-search algorithms because the promising subset of states is usually orders of magnitude smaller than the total reachable state space.\footnote{There are situations where heuristic search is not faster, e.g., if the heuristic gives misleading estimates of the optimal costs-to-go. However, with the heuristics and problems we consider, heuristic search is always significantly faster.}
So, these heuristic-search algorithms enjoy a significant performance advantage by restricting the number of states considered, but there is a gap because they do not restrict the actions to be considered.
These state-of-the-art algorithms are fundamentally built on Bellman backups, so they always consider all applicable actions for their states, but it is not clear that heuristic-search algorithms need to do this --- they can use their heuristics to avoid not only expensive states, but also actions that lead to expensive states.
This is an important issue because considering unneeded actions is not a one-time expense: in SSP planning, states must often be backed up many times before the algorithm converges.
In particular, cycles require many backups before converging to the correct cost-to-go.
Also, the search may reconsider the same states in many of their iterations, for example, \sZ is backed up in every iteration of \lrtdp and \ilao.
Each time the state is backed up we compute all its actions' costs-to-go, thus incurring many unneeded cost-to-go computations.

To make this concept of unneeded actions that waste computation more concrete, consider the gridworld example in \cref{fig:ilao-grid-problem}, which was used as a demonstration by~\citeAuthor{Hansen2001:ilao}.
The agent must navigate from the starting cell to the goal cell using the movement actions North (N), East (E), and South (S), each of which, assuming that the destination is unblocked, has a probability 0.5 of moving the agent to the expected cell, and a 0.5 probability of remaining in the same cell.
\Cref{fig:ilao-iter-2} shows which states and actions are considered by \ilao after 2 iterations.\footnote{In the original paper~\cite{Hansen2001:ilao} this figure is used as a demonstration of \lao, but in this case \lao behaves the same as \ilao.}
Observe that there are unneeded actions:
\begin{itemize}
\item the heuristic (shown in the bottom of each node) determines that cell 8 is too expensive and it will never be expanded, but nevertheless, the action S from cell 1 needs to be considered and its cost-to-go re-evaluated in each Bellman backup for cell 1; and
\item actions such as S and N in cell 2 do not progress the agent and will never have the lowest costs-to-go, but their costs-to-go are still evaluated in each Bellman backup on cell 2.
\end{itemize}
These actions are not useful for finding the optimal solution, and the heuristic values make this clear --- \ilao is wasting computational resources by considering these actions and re-evaluating their costs-to-go each time the relevant states are backed up.

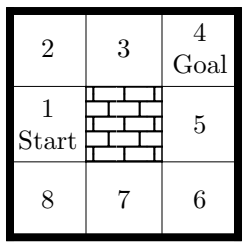
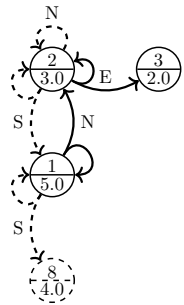
\begin{figure}[t]

\begin{subfigure}[t]{0.5\textwidth}
\centering
\begin{tikzpicture}[align=center]
\draw[pattern=bricks] (1,1) rectangle (2,2);
\draw[step=1cm,black,very thin] (0,0) grid (3,3);
\draw (0.5,1.5) node{1 \\ Start};
\draw (0.5,2.5) node{2};
\draw (1.5,2.5) node{3};
\draw (2.5,2.5) node{4 \\ Goal};
\draw (2.5,1.5) node{5};
\draw (2.5,0.5) node{6};
\draw (1.5,0.5) node{7};
\draw (0.5,0.5) node{8};
\draw[line width=3pt,-] (-0.05, 0) -- (3+0.05, 0);
\draw[line width=3pt,-] (-0.05, 3) -- (3+0.05, 3);
\draw[line width=3pt,-] (0, -0.05) -- (0, 3+0.05);
\draw[line width=3pt,-] (3, -0.05) -- (3, 3+0.05);
\end{tikzpicture}
\caption{Probabilistic navigation problem.}
\label{fig:ilao-grid-problem}
\end{subfigure}
\begin{subfigure}[t]{0.5\textwidth}
\centering
\scalebox{0.7}{
\begin{tikzpicture}
\begin{scope}[every node/.style={circle split, thick, draw, minimum size=0.6cm, inner sep=0.04cm}]
        \node (1) at (0, 0) {1 \nodepart{lower} \(5.0\)};
        \node (2) at (0, 2) {2 \nodepart{lower} \(3.0\)};
        \node (3) at (2, 2) {3 \nodepart{lower} \(2.0\)};
        \node[dashed] (8) at (0, -2) {8 \nodepart{lower} \(4.0\)};
\end{scope}
\begin{scope}[every edge/.style={draw, very thick}]
        \path [->] (1) edge[out=60, in=-60] node[right] {N} (2);
        \path [->] (1) edge[out=60, in=0, looseness=4] (1);
        \path [->] (2) edge[out=270+60, in=270-60] node[above] {E} (3);
        \path [->] (2) edge[out=270+60, in=20, looseness=4] (2);
        \path [dashed, ->] (2) edge[out=90-30, in=90+30, looseness=4] node[above] {N} (2);
        \path [dashed, ->] (2) edge[out=180+60, in=180-60] node[left] {S} (1);
        \path [dashed, ->] (2) edge[out=180+60, in=180, looseness=4] (2);
        \path [dashed, ->] (1) edge[out=180+60, in=180-60] node[left] {S} (8);
        \path [dashed, ->] (1) edge[out=180+60, in=180, looseness=4] (1);
\end{scope}
\end{tikzpicture}}
\caption{\ilao's states and actions after 2 iterations. The top number describes the cell index and the bottom number is the corresponding state's heuristic value.}
\label{fig:ilao-iter-2}
\end{subfigure}

\caption{A gridworld probabilistic navigation problem and the states and actions that \ilao considers on this problem after 2 iterations.
This example was taken from \citeAuthor{Hansen2001:ilao}.}

\end{figure}

There is an existing approach for removing such unneeded actions called \emph{action elimination}~\cite{Bertsekas1995}.
Action elimination enables an algorithm to prove that an action can never be part of the optimal solution, so that the action can be subsequently permanently removed from search, i.e., eliminated.
Action elimination requires a lower bound and an upper bound on an SSP's costs-to-go, and then determines that an action \(\tilde{\ac}\) may be eliminated if the lower bound on \(\tilde{\ac}\)'s cost-to-go is larger than the upper bound on another action \(\ac\)'s cost-to-go, i.e., \(\text{LB}(\tilde{\ac}) > \text{UB}(\ac)\).
Domain-independent lower bounds (a.k.a. admissible heuristics) can be automatically computed efficiently; however, we are not aware of any efficient ways to automatically derive informative domain-independent upper bounds.
Consequently, action elimination has not found much use in domain-independent planning, and \ftvi~\cite{Dai2009:ftvi} is the only algorithm we know that uses action elimination in the context of optimal heuristic planning for SSPs.
There are other algorithms that use upper bounds, but they do not explicitly use action elimination to remove actions from their search, e.g., BRTDP~\cite{McMahan2005:BRTDP}, FRTDP~\cite{Smith2006:FRTDP}, VPI-RTDP~\cite{Sanner2009:vpirtdp}, and IBLAO\(^*\)~\cite{Warnquist2010:IBLAO}.

So, the state-of-the-art algorithms for SSPs suffer from considering all applicable actions, even when it is clear that some actions are not needed for finding the optimal solution, and action elimination does not satisfactorily address this gap.
We address this open problem by exploiting connections between planning and operations research, and use the technique of constraint generation to intelligently select promising actions.
More concretely: our algorithm initially considers none of the actions, and only adds actions lazily when they are deemed promising by the current costs-to-go, which is determined with theory from constraint generation.
This contrasts with action elimination, which initially considers all actions and only removes them when it can prove they are suboptimal.
Moreover, action elimination requires lower and upper bounds, whereas we only need lower bounds.
Our approach improves upon state-of-the-art algorithms by saving on cost-to-go computations, which action elimination has not been able to do.
This is an important step in heuristic search, which lets algorithms use heuristics to select actions as well as states.

We introduce the novel algorithm \cgilao, which uses our technique of ignoring actions and adding them as required to generalise \ilao.
We confirm experimentally that \cgilao often considers only \data{40\%} of \ilao's actions, and in some cases considers as little as \data{\(1\%\)}.
Consequently, \cgilao computes \data{\(3.5\times\)} fewer \qvalues on average, and solves problems \data{\(2.8\times\) and \(3.7\times\)} faster than \ilao and \lrtdp, respectively.
In specific problems, the savings in \qvalues reach up to \data{\(80\times\)} resulting in a speedup of over \data{\(50\times\)}.

We give a breakdown of this paper's contents by section:
\begin{itemize}
\item \Cref{sec:background} gives the background for SSPs, value functions, Bellman backups, and Value Iteration.
\item \Cref{sec:ilao} describes \ilao, a state-of-the-art heuristic search algorithm for SSPs upon which we build.
We also present a novel way to view \ilao under the lens of linear programming.
\item \Cref{sec:cgilao} introduces our novel algorithm \cgilao, and give proofs of its correctness.
\item \Cref{sec:cgilao-expansions} describes different expansions methods that can be used for \cgilao.
\item \Cref{sec:related-work} discusses action elimination as an alternative way to cope with unneeded actions.
It also gives the context of existing constraint generation approaches in planning and relates \cgilao to the Partial Expansion \astar algorithm.
\item \Cref{sec:experiments} describes the setup and results of our experiments.
\item \Cref{sec:conclusion} and \cref{sec:future-work} are the conclusion and future work.
\end{itemize}

\clearpage

\section{Background}\label{sec:background}

Stochastic Shortest Path problems (SSPs)~\cite{Bertsekas1991:SSPs} are used to model problems where an agent must navigate through a state space, using actions with probabilistic effects that are known a priori:

\begin{definition}[Stochastic Shortest Path problem (SSP)~\cite{Bertsekas1991:SSPs}]
\label{def:ssp}
An SSP is defined by the tuple \sspTuple where:
\begin{itemize}
\item \Ss is the finite set of states;
\item \(\sZ \in \Ss\) is the agent's starting state, called the initial state;
\item \(\Sg \subsetneq \Ss\) is the set of goal states -- the agent finishes its task successfully if it reaches one of these.
We assume \(\Sg \neq \emptyset\) and \(\sZ \not\in \Sg\);
\item \(\A\) is the finite set of actions available to the agent and \(\A(\s) \subseteq \A\) denotes the actions applicable in state \s;
\item \(\pr(\s'|\s, \ac)\) indicates the probability of reaching \(\s'\) after applying action \ac to state \s;
\item \(\C(\s, \ac) \in \Rp\) gives the cost incurred when the agent applies action \ac in state \s.
\end{itemize}
\end{definition}

The set of states that may be reached after applying action \ac to state \s are called the successors of \s and \ac, and are given by \(\supp(\s, \ac) \definedas \{\s' \in \Ss : \pr(\s' | \s, \ac) > 0\}\).
SSPs are solved by policies, which are potentially partial functions \(\p: \Ss \to \A\) that map states \s that the agent may encounter onto actions \(\p(\s)\) that the agent should apply there.
\(\Ss^{\p,\s} \subseteq \Ss\) denotes the set of states that can be reached by following \p from \s, and is called the policy envelope of \p rooted at \s.
In this work we only consider policy envelopes rooted at \sZ, so we write \(\Ss^{\p} = \Ss^{\p,\sZ}\) and call \(\Ss^{\p}\) the policy envelope of \p.
It is desirable that the agent knows what to do in each state it encounters, and eventually reaches the goal, which leads to the following definitions:
\begin{definition}[Closed and Open Policies]
A policy \p is \textit{closed} \wrt \sZ if for all states in the policy envelope \(\s \in \Ss^{\p}\), either \(\p(\s)\) is defined or \s is a goal state.
It is open otherwise.
\end{definition}
\begin{definition}[Proper and Improper Policies]
A policy \p is \textit{proper} \wrt \sZ if, by following \p from \sZ, the agent will reach a goal with probability 1.
It is improper otherwise.
\end{definition}
Properness implies closedness.
For closed policies \p, their expected cost is the cumulative action cost incurred by the agent following \p from \sZ, over expectation.
An optimal policy \(\p^*\) is any proper policy that minimises the expected cost.
In this paper, we make two standard assumptions for SSPs:

\begin{assumption}[Reachability]\label{assump:reachability}
There exists at least one proper policy \wrt \sZ.
\end{assumption}

\begin{assumption}\label{assump:infinite-improper}
All improper policies have infinite expected cost.
\end{assumption}

Note that \(\A(\s) \neq \emptyset\) for all \(\s \in \Ss \setminus \Sg\) as a consequence of \cref{assump:infinite-improper}.
If this were not the case, then there could be a state \(\s \in \Ss \setminus \Sg\) with \(\A(\s) = \emptyset\) and we could construct an example where a policy \p reaches \s with probability 1 --- such \p's expected cost is finite even though \p is improper.
In our experiments, we consider SSPs that violate \cref{assump:reachability}, i.e., SSPs with dead ends; we address this by applying the fixed-penalty transformation of SSPs~\cite{trevizan17:mcmp}, which yields a new SSP that has no dead ends and is equivalent to the original SSP, provided that the penalty term is large enough.
Other approaches for relaxing our assumptions on SSPs, such as S\(^{3}\)P~\cite{Teichteil-Koenigsbuch2012:S3P}, can also be implemented without significant changes.

\textbf{Value Function.}
A value function \(\V : \Ss \to \Reals_{\geq 0}\) tries to encode the costs-to-go of each state, and we will use \V and costs-to-go interchangeably.
We use the following notations:
\begin{itemize}
\item Given \V, the \qvalue of \s and \ac is a common shorthand \(\Qsa \definedas \C(\s, \ac) + \sum_{\s' \in \A(\ac)} \pr(\s'|\s, \ac) \V(\s)\), which can be read as the expected cost-to-go from \s if the agent applies \ac.
\item Whenever we have a value function \(\V^x\) or \(\V_y\) we will refer to its associated \qvalues as \(\Q^x\) or \(\Q_y\) respectively.
\item We will write \(\V \leq \V'\) to compare value functions element-wise, i.e., \(\V \leq \V'\) iff \(\V(\s) \leq \V'(\s) \; \forall \s \in \Ss\), and similarly for other comparators, e.g., \(=, \geq\).
\end{itemize}
The optimal value function \(\V^*\) gives optimal costs-to-go for each state, and is the unique fixed-point solution to the following set of equations, called the Bellman equations:
\begin{align}\label{eqn:bellman-equation}
\V(\s) &= \begin{cases}
0 &\text{ if } \s \in \Sg \\
\min_{\ac \in \A(\s)} \Qsa &\text{ if } \s \in \Ss \setminus \Sg.
\end{cases}
\end{align}
Although an SSP may have multiple optimal policies, the optimal value function is unique.

Given a value function \V, its greedy policy is \(\p_{\V}(\s) \definedas \argmin_{\ac \in \A(\s)} \Qsa\), where any ties can be broken arbitrarily.
The set of all greedy policies for \(\V^*\), obtained by breaking ties in all possible ways, is precisely the set of optimal policies.
To avoid dealing with multiple greedy policies we make the following tie-breaking assumption:
\begin{assumption}[Tie-breaking for Greedy Policies] \label{assump:greedy-policy-tie-breaking}
Given \V, we ensure the greedy policy \(\p_{\V}\) is unique by breaking ties in \(\argmin_{\ac \in \A(\s)} \Qsa\) with arbitrary but reproducible tie-breaking rules.
An example of such a rule is lexicographical tie breaking.
\end{assumption}

Now that we have defined value functions and greedy policies, we briefly justify \cref{assump:infinite-improper}.
In \cref{fig:violate-assumption-infinite-improper} we present two SSPs that violate \cref{assump:infinite-improper}.
Notably, in \cref{fig:zero-cost-cycle} the state \(\s_1\) has a zero-cost action to itself, and in \cref{fig:trivial-dead-end} the state \(\s_1\) has no applicable actions.
Both SSPs have optimal value functions \(\V^*\) with \(\V^*(\sZ) = 1, \V^*(\s_1) = 0, \V^*(\s_g) = 0\) (arguably \(\V^*(\s_1) = -\infty\) for \cref{fig:trivial-dead-end}, depending on how we define the \(\min\) operator over the empty set), and the greedy policies \(\p_{\V^*}\) have \(\p_{\V^*}(\sZ) = \ac'_{0}\), making them improper, even though proper policies exists.
Thus, the optimal value function \(\V^*\) does not induce an optimal policy.
In general, for SSPs that violate \cref{assump:infinite-improper}, an optimal value function \(\V^*\) need not induce an optimal policy with \(\p_{\V^*}\), and moreover, \(\V^*\) need not even be unique, e.g., \(\V^*(\sZ) = 3, \V^*(\s_1) = 2, \V^*(\s_g) = 0\) satisfies the Bellman equations for \cref{fig:zero-cost-cycle}.
\Cref{assump:infinite-improper} avoids such issues, and prohibits both examples in \cref{fig:violate-assumption-infinite-improper}.
In practice, it is difficult to ensure an SSP satisfies \cref{assump:infinite-improper}, so it is common to make stronger assumptions that are easy to check.
In particular, it is common to require all states \s to have \(\A(\s) \neq \emptyset\) and restricting to strictly positive action costs to prohibit zero-cost cycles (and their generalisation, zero-cost end components), which is sufficient, but not necessary to ensure \cref{assump:infinite-improper} holds.
We take this practical approach, except when stated otherwise.

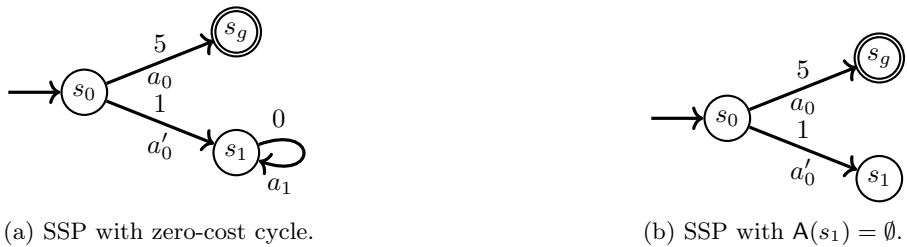
\begin{figure}[ht!]

\begin{subfigure}{0.5\textwidth}
\centering
\scalebox{1.0}{
\begin{tikzpicture}
\begin{scope}[every node/.style={circle, thick, draw, minimum size=0.6cm, inner sep=0.04cm}]
        \coordinate (-1) at (-1, 0);
        \node (0) at (0, 0) {\(\sZ\)};
        \node[double] (g) at (2, 0.8) {\(\s_g\)};
        \node (1) at (2, -0.8) {\(\s_1\)};
\end{scope}
\begin{scope}[every edge/.style={draw=black, very thick}]
        \path [->] (-1) edge (0);
        \path [->] (0) edge[] node[above] {5} node[below] {\(\ac_0\)} (g);
        \path [->] (0) edge[] node[above] {1} node[below] {\(\ac'_0\)} (1);
        \path [->] (1) edge[out=20, in=-20, looseness=10] node[above left=2mm and 1mm] {0} node[below left=2mm and 0.0mm] {\(\ac_1\)} (1);
\end{scope}
\end{tikzpicture}}
\caption{SSP with zero-cost cycle.}
\label{fig:zero-cost-cycle}
\end{subfigure}
\begin{subfigure}{0.5\textwidth}
\centering
\scalebox{1.0}{
\begin{tikzpicture}
\begin{scope}[every node/.style={circle, thick, draw, minimum size=0.6cm, inner sep=0.04cm}]
        \coordinate (-1) at (-1, 0);
        \node (0) at (0, 0) {\(\sZ\)};
        \node[double] (g) at (2, 0.8) {\(\s_g\)};
        \node (1) at (2, -0.8) {\(\s_1\)};
\end{scope}
\begin{scope}[every edge/.style={draw=black, very thick}]
        \path [->] (-1) edge (0);
        \path [->] (0) edge[] node[above] {5} node[below] {\(\ac_0\)} (g);
        \path [->] (0) edge[] node[above] {1} node[below] {\(\ac'_0\)} (1);
\end{scope}
\end{tikzpicture}}
\caption{SSP with \(\A(\s_1) = \emptyset\).}
\label{fig:trivial-dead-end}
\end{subfigure}

\caption{SSPs that violate \cref{assump:infinite-improper}, and consequently have optimal value functions that induce suboptimal policies.
One SSP violates it with a zero-cost cycle, and the other SSP violates it by having a state with no applicable actions.}
\label{fig:violate-assumption-infinite-improper}

\end{figure}

\textbf{Value Iteration (VI)}~\cite{Bellman57} is the foundational algorithm for finding a solution to the Bellman equations~\eqref{eqn:bellman-equation}, and thereby solving an SSP, that modern algorithms build on.
VI starts with an arbitrary value function \(\V^0\), and computes a sequence of value functions \(\V^1, \V^2, \dots\) which in the limit converge to \(\V^*\).
These value functions are computed by applying \textit{Bellman backups}:

\begin{definition}[Bellman Backup]
Given value functions \V and \(\V'\), a Bellman backup applied to state \s populates \(\V'(\s)\) with
\[
\V'(\s) \gets \min_{\ac \in \A(\s)} \Qsa
\]
and does not update \(\V'(\s')\) for any other state \(s' \neq \s\).
\end{definition}

To compute \(\V^{t+1}\) for \(t \in \Naturals_{>0}\), the classical version of VI applies Bellman backups on \(\V^{t}\) and \(\V^{t+1}\) for each \(\s \in \Ss\).
Asynchronous VI (also called Gauss-Seidel VI) computes \(\V^{t+1}\) by initialising it with \(\V^{t+1} = \V^{t}\) and then it only performs a Bellman backup for a single state in each \(t\); but importantly, it ensures that all states are backed up fairly, i.e., if the sequence is infinite then each state is updated infinitely often.
In both versions, for any choice of \(\V^0\), the sequence of value functions produced by VI will asymptotically converge to the optimal value function, i.e., \(\lim_{t \to \infty} \V^t = \V^*\) \cite{Bertsekas1995}.
In practice, VI requires a termination condition that is satisfied in finite time.
VI considers the difference between successive value functions, measured by the \textit{Bellman residual}, defined as \(\residual(\s) \definedas |\V^t(\s) - \min_{a \in \A(s)} \Q^t(\s,\ac)|\) and terminates when the residual between successive value functions is small over all states.
Formally, for a user-selected value \(\epsilon \in \Reals_{>0}\), VI terminates upon the following condition:

\begin{definition}[Global \econsistency]
\V is globally \econsistent for an SSP \ssp when \(\residual(\s) \leq \epsilon \; \forall \s \in \Ss\).
\end{definition}

Importantly, when VI is stopped with this condition its value function is only an approximate solution and for any \(\epsilon\) it is possible to construct an SSP so that VI terminates with \(\V\) whose greedy policy is not optimal.
In practice, for reasonably small \(\epsilon\), e.g., 0.0001, the greedy policy for globally \econsistent \(\V\) will be optimal or close to optimal.
Moreover, there is a bound on how much such a \V can deviate from \(\V^*\):

\begin{definition}\label{def:N}
Let
\(
\N{\s}{\V}{\ssp} = \max\{N^*(\s), N^{\p_{\V}}(\s)\}
\)
where \(N^{\p_{\V}}(\s)\) denotes the expected number of steps to reach some goal from \s by following the greedy policy \(\p_{\V}\), and \(N^*(\s)\) gives the maximum expected number of steps to reach a goal from \s over \ssp's optimal policies.
\end{definition}

\begin{theorem}[VI Error \cite{Mausam2012:MDPs}]\label{thm:vi-error}
Consider globally \econsistent \V.
Then,
\[
|\V(\s) - \V^*(\s)| \leq \epsilon \N{\s}{\V}{\ssp} \; \forall \s \in \Ss.
\]
\end{theorem}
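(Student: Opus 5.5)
We establish the upper and lower bounds on \(\V(\s) - \V^*(\s)\) separately, each by comparing \V with the cost of a fixed proper policy and accumulating the per-step residual error \(\epsilon\) along that policy's trajectories. Let \(B\V(\s) \definedas \min_{\ac \in \A(\s)} \Qsa\) for non-goal \s and \(B\V(\s) = 0\) for goals, and assume \(\V(\s) = 0\) on goals. Global \econsistency then says \(|\V(\s) - B\V(\s)| \leq \epsilon\) for every \s. For a fixed policy \p write \(T_{\p}\V(\s) \definedas \C(\s,\p(\s)) + \sum_{\s'} \pr(\s'|\s,\p(\s))\V(\s')\). The only tool needed is the standard fact that for a proper policy \p the evaluation map \(T_{\p}\) is monotone and affine with transition part transient. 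Consequently, if \(\V \leq T_{\p}\V + c\) (resp.\ \(\geq\)) pointwise with constant \(c\), unrolling gives \(\V \leq \V^{\p} + c\,N^{\p}\) (resp.\ \(\geq\)). Here \(\V^{\p}\) is the cost of \p and \(N^{\p}(\s) = \sum_{t\geq 0}\Pr(\text{not at goal at step } t)\) is the expected number of steps.

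\textbf{Upper bound.} Let \(\p^*\) be an optimal proper policy attaining the maximum in \(N^*(\s)\); this is possible since there are finitely many deterministic optimal policies. Then \(\V(\s) \leq B\V(\s) + \epsilon \leq T_{\p^*}\V(\s) + \epsilon\), because \(B\V\) minimises over all actions. Unrolling along \(\p^*\) gives \(\V(\s) - \V^*(\s) \leq \epsilon N^{\p^*}(\s) = \epsilon N^*(\s)\).

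\textbf{Lower bound.} Use the greedy policy \(\p_{\V}\), for which \(T_{\p_{\V}}\V = B\V \leq \V + \epsilon\). Unrolling gives \(\V(\s) \geq \V^{\p_{\V}}(\s) - \epsilon N^{\p_{\V}}(\s) \geq \V^*(\s) - \epsilon N^{\p_{\V}}(\s)\), using \(\V^{\p_{\V}} \geq \V^*\). Combining the two bounds yields \(|\V(\s)-\V^*(\s)| \leq \epsilon\max\{N^*(\s),N^{\p_{\V}}(\s)\}\).

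\textbf{Main obstacle.} The unrolling step requires the greedy policy \(\p_{\V}\) to be proper. Otherwise the telescoping leaves a residual term \(\mathbb{E}[\V(\s_t)]\) that need not vanish, and \(N^{\p_{\V}}(\s) = \infty\). In that case I will observe that the claimed bound is trivially true, since its right-hand side is \(+\infty\). When \(\p_{\V}\) is proper, \(\Pr(\s_t \notin \Sg) \to 0\) geometrically on the finite state space, so the tail term \(\mathbb{E}[\V(\s_t)]\) tends to \(0\) because \V is bounded on finitely many states. This makes the unrolled inequalities rigorous. The same argument handles \(\p^*\), which is proper by the assumptions on the SSP.
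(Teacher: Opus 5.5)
The paper gives no proof of this theorem; it is quoted from Mausam and Kolobov, so there is no argument of the authors' to compare against. Your argument is the standard proof and it is correct: you unroll $\V \le T_{\p^*}\V+\epsilon$ along an optimal policy for the upper bound, unroll $\V \ge T_{\p_{\V}}\V-\epsilon$ along the greedy policy for the lower bound, and absorb the improper case through $N^{\p_{\V}}(\s)=\infty$.

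Two readings should be made explicit:
\begin{itemize}
\item ``Proper'' and ``optimal'' must be understood relative to $\s$, not $\sZ$ as in the paper's definitions. Take $\p^*$ greedy with respect to $\V^*$ on all of $\Ss$, and split on whether $\p_{\V}$ reaches a goal almost surely from $\s$.
\item Your assumption $\V=0$ on $\Sg$ is a genuine hypothesis, not a convenience. At a goal $g$ we have $N(g,\V,\ssp)=0$, so the statement itself forces $\V(g)=0$; a nonzero goal value would also leak into the bound at every other state.
\end{itemize}

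Your decomposition gives more than the two-sided statement, because each half uses only one side of the residual bound. The upper half needs only $\V \le B\V+\epsilon$. Since $B\V\le T_{\p}\V$ for every $\p$, it holds along any optimal policy. So $N^*(\s)$ could even be taken as a minimum over optimal policies, and you never need to pick a maximiser. The lower half needs only $B\V\le\V+\epsilon$.

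This one-sided granularity is what the paper later needs:
\begin{itemize}
\item \cref{thm:action-specified-econsistency-optimal} uses only your lower half, along $\partp$.
\item In \cref{lemma:empty-cols-to-check-implies-admissibility}, only $\V(\s)\le\Qsa+\epsilon$ for $\ac\in\A(\s)$ is established on $\tilde{\Ss}$, not a two-sided residual bound, so the theorem as stated does not literally apply there. Your upper-half unrolling, stopped on leaving $\tilde{\Ss}$ (where $\V\le\V^*$), gives $\V\le\V^*+\epsilon N^{\p^*}$ directly.
\end{itemize}
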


Note that the error term \(\epsilon \N{\s}{\V}{\ssp}\) can get significantly larger than \(\epsilon\).
Nevertheless, \(\N{\s}{\V}{\ssp}\) can be bounded from above, and as \(\epsilon \to 0\) the error disappears and VI's \V converges to \(\V^*\).

A critical shortcoming of VI is that it applies Bellman backups to every single state \(\s \in \Ss\).
This means VI can not scale to large problems, which is especially problematic for compactly encoded problems, such as Probabilistic PDDL~\cite{Younes2005:ippc}, where the state space can grow exponentially with respect to the problem size.
An important insight is that not all states are needed to define an optimal policy, and we only need \(\V^*\) for the optimal policy's envelope.
Then, instead of considering global \econsistency, we consider \textit{\econsistency}:

\begin{definition}[\econsistency~\cite{Bonet2003:hdp}]\label{def:epsilon-consistency}
A value function \V is \econsistent for an SSP \ssp if \(\residual(\s) \le \epsilon \; \forall \s \in \Ss^{\p_{\V}}\).
\end{definition}
Note that \econsistency relies on \(\p_{\V}\), which is ambiguous when \V has multiple greedy policies; so, we fall back on \cref{assump:greedy-policy-tie-breaking} and assume that ties are broken such that \V has a single greedy policy.
This definition of \econsistency requires the residual to be small for states in the greedy policy's envelope, and states outside can have arbitrarily large residual.
Importantly, to ensure that \econsistent value functions produce optimal policies as \(\epsilon \to 0\), we need additional properties over the value functions.
In particular, we require \textit{admissibility}:

\begin{definition}[Admissible Value Function]
\V is admissible if \(\V \leq \V^*\).
\end{definition}

An admissible value function is simply a lower bound on the optimal value function.
Without admissibility, \econsistency does not produce optimal policies.
For example, consider \cref{fig:econsistency-weirdness}: the value function shown in the bottom of nodes is \econsistent \wrt the greedy policy with \(\p_{\V}(\sZ) = \ac_0\), but it is inadmissible, and fails to capture the optimal policy with \(\p^*(\sZ) = \ac'_0\).

\begin{figure}[ht!]
\centering
\scalebox{1.0}{
\begin{tikzpicture}
\begin{scope}[every node/.style={circle split, thick, draw, minimum size=0.6cm, inner sep=0.04cm}]
        \coordinate (-1) at (-1, 0);
        \node (0) at (0, 0) {\(\sZ\) \nodepart{lower} \(6\)};
        \node (1) at (2, 0.8) {\(\s_1\) \nodepart{lower} \(4\)};
        \node (2) at (2, -0.8) {\(\s_2\) \nodepart{lower} \(5\)};
        \node (3) at (4, 0.8) {\(\s_3\) \nodepart{lower} \(2\)};
        \node (4) at (4, -0.8) {\(\s_4\) \nodepart{lower} \(1\)};
        \node[double] (g) at (6, 0) {\(\s_g\) \nodepart{lower} \(0\)};
\end{scope}
\begin{scope}[every edge/.style={draw=black, very thick}]
        \path [->] (-1) edge (0);
        \path [->] (0) edge[] node[above] {2} node[below] {\(\ac_0\)} (1);
        \path [->] (0) edge[] node[above] {2} node[below] {\(\ac'_0\)} (2);
        \path [->] (1) edge[] node[above] {2} node[below] {\(\ac_1\)} (3);
        \path [->] (2) edge[] node[above] {1} node[below] {\(\ac_2\)} (4);
        \path [->] (3) edge[] node[above] {2} node[below] {\(\ac_3\)} (g);
        \path [->] (4) edge[] node[above] {1} node[below] {\(\ac_4\)} (g);
\end{scope}
\end{tikzpicture}}
\caption{An SSP with a value function that is \econsistent but inadmissible, and its greedy policy is suboptimal.}
\label{fig:econsistency-weirdness}
\end{figure}
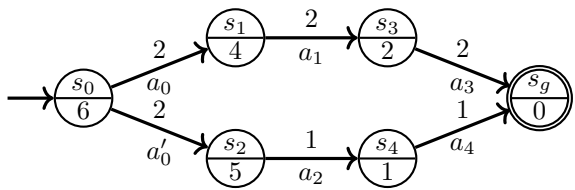

\clearpage

Another useful condition for value functions, which is stronger and implies admissibility, is \textit{monotonicity}:

\begin{definition}[Monotonic Value Function]
\V is monotonic if \(\V(\s) = 0 \; \forall \s \in \Sg\) and
\[\V(\s) \leq \min_{\ac \in \A(\s)} \Qsa \; \forall \s \in \Ss \setminus \Sg.\]
\end{definition}
An alternative name for monotonic value functions is consistent value functions, and it generalises the notion of consistent heuristics in deterministic planning.
Monotonic value functions get their name from the property that Bellman backups will never decrease a value function, i.e., the sequence \(\V^0, \V^1, \V^2, \dots\) obtained by applying Bellman backups is monotonically non-decreasing with \(\V^i \leq \V^{i+1} \; \forall i \in \Naturals\).
This property holds for any \(\V^{i+1}\) obtained from \(V^{i}\) via Bellman backups, for any arbitrary number of Bellman backups applied to an arbitrary set of states.
Conveniently, Bellman backups preserve both admissibility and monotonicity.
So, an admissible value function with arbitrarily many Bellman backups applied to an arbitrary set of states will yield another admissible value function, and similarly for monotonicity.
Thus, an algorithm that is initialised with an admissible value function and only uses Bellman backups to modify \V, has \(\V \leq \V^*\) as an invariant.

Heuristic-search algorithms that are initialised with an admissible value function \(\V^0 \leq \V^*\) and apply Bellman backups until \econsistency is satisfied, produce optimal policies.
Note that this contrasts with VI, which does not have any requirements over its initial value function \(\V^0\).
We present this result formally, via the \textit{\findAndRevise} framework:

\begin{definition}[\findAndRevise Algorithm~\cite{Bonet2003:hdp}]
A \findAndRevise algorithm applied to SSP \ssp uses a value function \V and consists of the steps
\begin{itemize}
\item \find a state \s in the greedy policy envelope \(\Ss^{\p_{\V}}\) \st \(\residual(\s) > \epsilon\),
\item \revise the found state by applying a Bellman backup to \s,
\item repeat these steps until no such state is found and \V is \econsistent \wrt \ssp.
\end{itemize}
\end{definition}

The \findAndRevise schema describes many heuristic search algorithms, including the state-of-the-art \ilao and \lrtdp.
\findAndRevise algorithms are optimal under the following conditions:

\begin{theorem}[Optimality of \findAndRevise \cite{Bonet2003:hdp,Mausam2012:MDPs}]\label{thm:find-and-revise-is-optimal}
Consider a \findAndRevise algorithm initialised with an admissible heuristic, and terminated when its \V is \econsistent for any given \(\epsilon \in \Reals_{>0}\).
As \(\epsilon \to 0\), its output \(\V(\s)\) approaches \(\V^*(\s)\) for \(\s \in \Ss^{\p_{\V}}\).
\end{theorem}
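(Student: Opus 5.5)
The plan is to reduce the statement to \cref{thm:vi-error}, applied to a smaller SSP built from the greedy policy's envelope. The two ingredients are admissibility, which is preserved by Bellman backups and so holds at termination, and the residual bound, which holds only on \(\Ss^{\p_{\V}}\). We aim for a sandwich
\[
\V(\s) \le \V^*(\s) \le \V(\s) + \epsilon\, c(\s)
\]
for \(\s \in \Ss^{\p_{\V}}\), with a constant \(c(\s)\) that stays bounded as \(\epsilon \to 0\).

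\textbf{Lower bound.} The algorithm starts from an admissible heuristic and modifies \V only by Bellman backups, which preserve admissibility. Hence \(\V \le \V^*\) holds at termination.

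\textbf{Upper bound.} Write \(\p = \p_{\V}\) for the returned greedy policy.
\begin{itemize}
\item For every \(\s \in \Ss^{\p}\setminus\Sg\), \econsistency gives \(\Q(\s,\p(\s)) \le \V(\s) + \epsilon\), since \(\p(\s)\) attains the minimum and the residual is at most \(\epsilon\).
\item Suppose first that \p is proper. Unrolling this inequality along trajectories of \p from \s gives \(\V^{\p}(\s) \le \V(\s) + \epsilon N^{\p}(\s)\). This is the standard policy-evaluation argument, equivalently \cref{thm:vi-error} applied to the SSP restricted to \(\Ss^{\p}\) with the single action \(\p(\s)\) in each state.
\item Then \(\V^*(\s) \le \V^{\p}(\s) \le \V(\s) + \epsilon N^{\p}(\s)\), which is the desired sandwich with \(c(\s) = N^{\p}(\s)\).
\end{itemize}

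\textbf{Main obstacle.} Two points need care: showing that \p is proper, and showing that \(N^{\p}(\s)\) stays bounded uniformly as \(\epsilon \to 0\).

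For properness, suppose \p is improper. Then some closed recurrent class \(R \subseteq \Ss^{\p}\setminus\Sg\) exists under \p. Summing the \econsistency inequality over the stationary distribution of \p on \(R\) gives an expected per-step cost at most \(\epsilon\). Every improper policy has infinite cost (\cref{assump:infinite-improper}), and with strictly positive costs the per-step cost on \(R\) is at least \(\min \C > 0\). So for \(\epsilon < \min \C\) this is a contradiction, and \p is proper. I would first attempt exactly this argument under the practical strictly-positive-cost assumption, and only then worry about general zero-cost end components.

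For the uniform bound, there are finitely many deterministic policies. Among the proper ones, \(N^{\p}\) takes only finitely many values, so \(\max_{\p \text{ proper}} N^{\p}(\s)\) is a finite constant independent of \(\epsilon\). Therefore \(|\V(\s) - \V^*(\s)| \le \epsilon \cdot \text{const}\) on \(\Ss^{\p_{\V}}\), which tends to \(0\) as \(\epsilon \to 0\).
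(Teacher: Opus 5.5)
Your proof is correct and takes the same route the paper uses for its generalisation \cref{thm:action-specified-econsistency-optimal}; the present theorem is only cited in the paper, not proved. That route is: \cref{thm:vi-error} on the single-action SSP induced by \(\p_{\V}\) gives \(\V_{\p_{\V}} \le \V + \epsilon N^{\p_{\V}}\), then \(\V^* \le \V_{\p_{\V}}\) together with admissibility sandwiches \(\V\), and the step count is bounded by a finite maximum over proper policies, exactly as in the paper's \cref{def:Nbar}. Your stationary-distribution argument that \(\p_{\V}\) is proper once \(\epsilon < \min \C\) fills in a step the paper leaves implicit, since the paper only calls the greedy policy closed, yet the VI error bound is finite only for proper policies; the same argument also covers zero-cost actions under \cref{assump:infinite-improper} if you apply it to a reachable recurrent class with positive average cost.
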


Other termination conditions are possible in place of \econsistency, e.g., \(\epsilon\)-optimality~\cite{Hansen2001:ilao,Hansen2015:UB-for-V}.
We do not consider these alternatives because they build on top of \econsistency, and are essentially automatic methods for selecting the parameter for \econsistency so that stronger guarantees can be provided.
Consequently, our \econsistent algorithms can be adapted to their termination conditions.

\section{\ilao}\label{sec:ilao}

\begin{algorithm}[!t]
{\small
\DontPrintSemicolon
  \caption{\ilao}\label{alg:ilao}
  \function{\ilao\((\ssp, \h, \epsilon)\)} {
    \(\partssp \gets\) partial SSP \(\langle \{\sZ\}, \sZ, \{\sZ\}, \emptyset, \pr, \C, \h \rangle\) \;
    \(\V \gets \text{Value Function initialised by } \h\) \;
    \(\currp \gets \text{candidate policy initialised as undefined everywhere (used to approximate \partp)}\) \;
    \Repeat{\(\fringe = \emptyset\) and \(\oldp = \currp\) and \(\residual \leq \epsilon\)} {
      \(\envelope \gets\) post-order DFS traversal of \currp from \sZ
      \; \label{line:ilao:dfs}
      \(\partssp, \currp \gets \expandFringes{}(\ssp, \partssp, \currp, \envelope)\)
       \; \label{line:ilao:call-to-expandFringes}
      \(\fringe \gets \Ss^{\currp} \cap (\partgoals \setminus \Sg)\) \; \label{line:ilao:updateF}
      \(\V, \residual, \currp, \oldp \gets \improvePolicy{}(\partssp, \currp, \envelope, \V, \fringe, \epsilon)\) \; \label{line:ilao:call-to-improvePolicy}
    }
    \Return \V
  }

  \function{\(\expandFringes{}(\ssp, \partssp, \currp, \envelope)\)}{
    \( \fringe \gets \envelope \cap (\partgoals \setminus \Sg) \) \;
    \For{\(\s_f \in \fringe\)} {
      \(\partssp \gets \addStateActions(\ssp, \partssp, \s_f, \A(\s_f))\) \; \label{line:expandFringes:addStateActions}
      \(\currp(\s_f) \gets \argmin_{\ac \in \partactions(\s)} \Qsa\) \;
    } \label{line:expandFringes:column-generation-begin}

    \Return \partssp, \currp \label{line:ilao:expandFringes-end}
    \label{line:expandFringes:column-generation-end}
  } \label{line:ilao:expandFringes-begin}

  \function{\(\addStateActions{}(\ssp, \partssp, \s, \A')\)}{
    \(\partgoals \gets \partgoals \setminus \{\s_f\}\) \;
    \(\partactions(\s) \gets \partactions(\s) \cup \A'\) \; \label{line:expandFringes:constraint-generation-end}
    \For{\(\ac \in \A'\)} {
      \(\partgoals \gets \partgoals \cup (\supp(\s, \ac) \setminus \partstates)\) \;
      \(\partstates \gets \partstates \cup \supp(\s, \ac)\) \;
    }
    \Return \partssp \;
  }

  \function{\(\improvePolicy{}(\partssp, \currp, \envelope, \V, \fringe, \epsilon)\)} {
    \(\oldp \gets \currp\) \;
    \Repeat{\(\fringe \neq \emptyset\) or \(\currp \neq \oldp\) or \(\residual \leq \epsilon\)} {
      \(\residual \gets 0\) \;
      \For {\(\s \in \envelope \setminus \partgoals\)} {
        \(\Q_{\min} \gets \min_{\ac \in \partactions(\s)} \Qsa\) \;
        \(\residual \gets \max(|V(\s) - \Q_{\min}|, \residual)\) \;
        \(\V(\s) \gets \Q_{\min}\) \;
      }
      \(\currp \gets \partp\) \; \label{line:ilao:improvePolicy:update-policy}

    }
    \Return \(\V, \residual, \currp, \oldp\)
  }
}
\end{algorithm}

In this section, we explain \ilao~\cite{Hansen2001:ilao} in detail, since it will be necessary for the insights and definition of \cgilao.
Then, we conclude the section by expressing \ilao in terms of linear programming, which is a novel way to look at the algorithm and opens the door to define \cgilao in \cref{sec:cgilao}.

In each step, \ilao considers a subproblem called a \emph{partial SSP}, which is a copy of the original SSP with a subset of states and actions.\footnote{Partial SSPs are called explicit graphs in the original paper~\cite{Hansen2001:ilao}.}
To define them, it is convenient to use SSPs with terminal costs:

\begin{definition}[SSP with Terminal Costs]\label{def:terminal-ssp}
An SSP with terminal costs is given by the tuple \(\ssp_{T} = \langle \Ss, \sZ, \Sg, \A, \pr, \C, T \rangle\).
All terms except \(T\) appear in the definition of SSPs, and are defined identically.
The new term \(T : \Sg \to \Reals_{\geq 0}\) is a terminal-cost function that applies the cost \(T(g)\) once, whenever the agent reaches the goal \(g \in \Sg\).
\end{definition}

SSPs with terminal costs are not more expressive than regular SSPs, and they are in fact equivalent.\footnote{SSPs with terminal costs can be converted to regular SSPs thus: add a new artificial goal state \(\overline{g}\), make all old goal states non-goals, and encode the one-time terminal cost for each old goal state \(g\) with a deterministic action that leads to \(\overline{g}\) with cost \(T(g)\).}
We use SSPs with terminal cost only because they are more convenient for the definition of partial SSPs:

\begin{definition}[Partial SSP]\label{def:partial-ssp}
Consider an SSP \(\ssp = \sspTuple\) and a heuristic \h.
A partial SSP for \ssp with \h is an SSP with terminal costs \(\partssp = \partssptuple\) where
\begin{samepage}
\begin{itemize}
\item \partstates and \partactions are subsets of their counterparts in \ssp, i.e., \(\partstates \subseteq \Ss\) and \(\partactions(\s) \subseteq \A(\s) \; \forall \s \in \partstates\),
\item \sZ is identical to \ssp,
\item \partgoals is a set of goals that satisfies \(\partgoals \subseteq \partstates, \Sg \cap \partstates \subseteq \partgoals\),
\item \pr and \C are the same as \ssp but restricted to \partstates and \partactions (we abuse notation and reuse the same symbols as in \ssp to emphasise that these functions are otherwise unchanged), and
\item \h is the terminal cost.
\end{itemize}
\end{samepage}
In \partssp, whenever the agent reaches an artificial goal \(\widehat{g} \in \partgoals\), it incurs a cost of \(\h(\widehat{g})\) once, and then terminates.
\end{definition}

The Bellman equations for partial SSPs are similar to \eqref{eqn:bellman-equation}, but the equations only consider the partial SSP's subset of states and actions, and the goals incur terminal costs:
\begin{align}\label{eqn:bellman-equation-with-terminal-costs}
\V(\s) = \begin{cases}
\h(\s) &\text{ if } \s \in \partgoals \\
\min_{\ac \in \partactions(\s)} \Qsa &\text{ if } \s \in \partstates \setminus \partgoals.
\end{cases}
\end{align}
We assume that \(\s \in \partstates, \ac \in \partactions(\s)\) implies that \(\supp(\s, \ac) \subseteq \partstates\), and therefore \Qsa is always defined in \eqref{eqn:bellman-equation-with-terminal-costs}.
For a partial SSP \partssp, we will use the following terminology
\begin{itemize}
\item $\partgoals \setminus \St$ is called the set of \emph{artificial goals},
\item \(\partstates \setminus \partgoals\) are called \emph{internal states},
\item \partactions are \emph{internal actions},
\item \partp is the greedy policy over \V restricted to \(\partstates \setminus \partgoals\), the non-goal states of \partssp.
\end{itemize}

\ilao~(\cref{alg:ilao}) is an iterative algorithm.
In each iteration, it works towards two opposing objectives:
\begin{enumerate}
\item \textbf{Optimise:} \ilao tries to find the optimal policy for \partssp by working towards solving the partial SSP \partssp's Bellman equations~\eqref{eqn:bellman-equation-with-terminal-costs}.
\item \textbf{Close:} \ilao expands any artificial goals reachable by its candidate policy \currp (called \emph{fringe states}) so that eventually \currp has no artificial goals and is closed \wrt the original SSP.
\end{enumerate}

To achieve a closed candidate policy (objective 2), \ilao performs a Depth-First Search (DFS) on \currp to find all fringe states (\cref{alg:ilao}~\cref{line:ilao:dfs}) and then expands them (\cref{alg:ilao}~\cref{line:ilao:call-to-expandFringes}).
To expand fringe state \s, \ilao adds \(\A(\s)\) to the partial SSP, and any states reachable from \s, i.e., \(\bigcup_{\ac \in \A(\s)} \supp(\s, \ac)\) are added to the partial SSP as new artificial goals, if they are not already in \partssp.
Expanding the fringe states in each iteration ensures that the candidate policy \currp will eventually be closed \wrt \sZ on the original SSP.

Simultaneously, \ilao works towards finding an optimal policy on its partial SSP (objective 1), by applying Bellman backups to all states in \currp's envelope \envelope (\cref{alg:ilao}~\cref{line:ilao:call-to-improvePolicy}).
The aim is to make \V \econsistent in order to show that \currp is optimal (\wrt \(\epsilon\)).
There is a subtlety regarding how \currp approximates \partp: during \expandFringes{} and \improvePolicy{} an update to \currp can cause it to reach internal states from which \currp leads to unaccounted fringe states, and these fringes are not tracked in \envelope.
\ilao recomputes \fringe in \cref{alg:ilao}~\cref{line:ilao:updateF}, immediately after \expandFringes{}, so there are no unaccounted fringe states there.
\improvePolicy{} addresses the issue by recomputing \(\currp \gets \partp\) after each pass of Bellman backups; recall that the issue only appears if \currp is updated so that it reaches unaccounted fringe states, so the main loop of \improvePolicy{} stops as soon as \(\currp \neq \oldp\), and \fringe will be recomputed in the main loop at \cref{alg:ilao}~\cref{line:ilao:updateF}.
Thus, this implementation of \ilao handles the issue by ensuring \partp's fringe states are in fact tracked in \fringe, and requires the policy to be recomputed in each loop of \improvePolicy{} (\cref{alg:ilao}~\cref{line:ilao:improvePolicy:update-policy}).
Our algorithm \cgilao deviates from this, as we explain in \cref{sec:cgilao}.

The objectives of optimising and closing interfere with each other, in the sense that expanding fringe states can break \V's \econsistency and create more work for \improvePolicy{}; and applying backups to the policy envelope can make \currp change in a way that introduces new fringe states that need to be expanded.
Arguably, this is what makes \ilao so efficient, because it is able to abandon working towards objective 1 if it conflicts with objective 2 and avoid redundant work, and vice versa.
Once \ilao achieves both objectives, it has a value function \V that is \econsistent \wrt the original SSP, since \currp has no artificial goals and \(\residual \leq \epsilon\).
We require that \ilao is initialised with an admissible heuristic, and since it only modifies \V with Bellman backups it preserves the invariant \(\V \leq \V^*\).
Consequently, we can refer to \cref{thm:find-and-revise-is-optimal}, and conclude that \ilao is optimal.

\subsection{\ilao as Linear Program}

Now, we present a new way to interpret \ilao in terms of Linear Programs (LPs) and constraint and variable generation.
As a starting point, it is well-known that SSPs can be solved by the LP presented in~\ref{lp:vi}.

\LP{11cm}{
\obj{\max_{\duals}}{V_{\sZ}}{lp:vi} \\
\st
\constr{\V_{\s} \leq \C(\s,\ac) +\!\sum_{\mathclap{\s' \in \Ss}}\pr(\s'|\s,\ac)\V_{\s'}}{\forall \s \in \Ss\!\setminus\!\St, \ac\!\in\!\A(\s)}{c:vi:consistency}\\
\constr{\V_g = 0}{\forall g \in \St}{c:vi:goal}
}\vspace{3mm}

\noindent
Each variable \(\V_{\s} \in \duals\) can be interpreted as \(\V(\s)\), and then the constraints \ref{c:vi:consistency} are requiring that \(\V(\s) \leq \min_{\ac \in \A(\s)} \Qsa\) for each \(\s \in \Ss\).
To emphasise this connection we write, when unambiguous, \(\V(\s)\) instead of \(\V_{\s}\), and \Qsa instead of the right-hand side of \s and \ac's constraint~\ref{c:vi:consistency}.
This LP encodes the Bellman equations~\eqref{eqn:bellman-equation}: its constraints together with the maximisation objective search for \(\V(\s)\) that is a maximal lower bound (infimum) on all \(\Qsa\), which is a way to express the minimum for a set of values (provided the infimum is in the set), i.e., we are encoding \(\V(\s) = \min_{\ac \in \A(\s)} \Qsa\).
Interestingly, the Bellman equations only need to be satisfied for states in the optimal policy envelope \(\Ss^{\p*}\).
In other words, given an optimal LP solution \(\duals^*\) that induces the policy \(\p^*\), the LP's constraints are active (tight) for the pairs \(\big( \s, \p^*(\s) \big)\) for all \(\s \in \Ss^{\p*}\) and may be inactive (slack) everywhere else.
We give an example of this in \cref{fig:vlp-not-accurate-everywhere}, where an optimal solution \(\duals^*\) is shown in the bottom of all nodes; this solution encodes the optimal policy \(\p^*\) with \(\p^*(\sZ) = \ac'_0\), but does not give the optimal cost-to-go for \(\s_1\) (it should be 4).

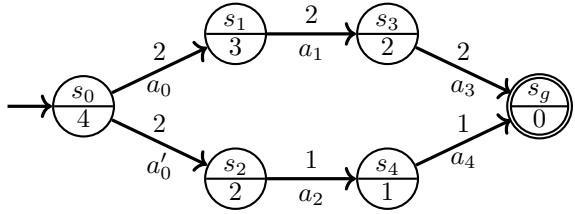
\begin{figure}[t!]
\centering
\scalebox{1.0}{
\newcommand{\scaleX}{1} %
\newcommand{\scaleY}{1.2} %
\begin{tikzpicture}
\begin{scope}[every node/.style={circle split, thick, draw, minimum size=0.8cm, inner sep=0.04cm}]
        \coordinate (-1) at (-1, 0);
        \node (0) at (\scaleX*0, \scaleY*0) {\(\sZ\) \nodepart{lower} \(4\)};
        \node (1) at (\scaleX*2, \scaleY*0.8) {\(\s_1\) \nodepart{lower} \(3\)};
        \node (2) at (\scaleX*2, \scaleY*-0.8) {\(\s_2\) \nodepart{lower} \(2\)};
        \node (3) at (\scaleX*4, \scaleY*0.8) {\(\s_3\) \nodepart{lower} \(2\)};
        \node (4) at (\scaleX*4, \scaleY*-0.8) {\(\s_4\) \nodepart{lower} \(1\)};
        \node[double] (g) at (\scaleX*6, \scaleY*0) {\(\s_g\) \nodepart{lower} \(0\)};
\end{scope}
\begin{scope}[every edge/.style={draw=black, very thick}]
        \path [->] (-1) edge (0);
        \path [->] (0) edge[] node[above] {2} node[below] {\(\ac_0\)} (1);
        \path [->] (0) edge[] node[above] {2} node[below] {\(\ac'_0\)} (2);
        \path [->] (1) edge[] node[above] {2} node[below] {\(\ac_1\)} (3);
        \path [->] (2) edge[] node[above] {1} node[below] {\(\ac_2\)} (4);
        \path [->] (3) edge[] node[above] {2} node[below] {\(\ac_3\)} (g);
        \path [->] (4) edge[] node[above] {1} node[below] {\(\ac_4\)} (g);
\end{scope}
\end{tikzpicture}}
\undef{\scaleX} %
\undef{\scaleY} %

\caption{An SSP where \ref{lp:vi}'s solution encodes an optimal policy, but does not give the optimal value function at \(\s_1\).}
\label{fig:vlp-not-accurate-everywhere}
\end{figure}

For an alternative interpretation of \ref{lp:vi}, observe that its constraints are requiring \duals to be a monotonic value function.
Monotonicity implies admissibility, so \ref{lp:vi} is looking for an admissible value function that maximises the value at \sZ, i.e., \(\V^*\).
Again, the objective only pushes against the constraints for states and actions encountered by \(\p^*\).

Similar to VI, \ref{lp:vi} can be initialised with any vector.
An LP solver will make the vector feasible and then improve its quality until it is optimal.
\ref{lp:vi}'s constraints require \duals to be a monotonic value function, so an admissible value function may not describe a feasible solution, e.g., the following admissible heuristic is not monotonic and therefore infeasible:
\[
\h(\s) = \begin{cases}
  \V^*(\s) &\text{ if } \s = \sZ \\
  0 &\text{ if } \s \neq \sZ.
\end{cases}
\]
So, given a non-monotonic value function, the LP solver will first make it monotonic, and then work towards making the value function \econsistent \wrt \sZ.

We now present our key insight that \ilao and its incremental growing of partial SSPs can be reinterpreted as an algorithm that solves \ref{lp:vi} with \emph{variable and constraint generation}~\cite{Bertsimas1997:LPs}.\footnote{Variable and constraint generation are also known column generation and the cutting plane method, respectively.}

\textbf{Variable generation} enables the solving of LPs with prohibitively many variables.
It works by considering a small LP with a subset of variables, and iteratively adding variables that are needed for the small LP to have the same solution as the original.
The original LP is called the Master Problem (MP), and the smaller LP with a subset of MP's variables is called the Reduced Master Problem (RMP).
An optimal solution for the RMP must also be feasible for the MP, but, assuming it is a maximisation problem, the RMP's solution may have a lower objective than the MP's optimal solution.
Variable generation iteratively adds variables to the RMP that have the potential to increase its objective, so that eventually the RMP's optimal solution is also optimal for the MP.
When solving an LP directly (with the simplex algorithm), the solver explicitly enumerates all unused (non-basic) variables and evaluates whether using it (making it basic) can improve the objective.
Variable generation also does this, but instead of enumerating all variables, it determines what variables to add by solving a \emph{pricing problem}.
Thus, variable generation is guaranteed to eventually terminate such that its RMP's optimal solution is also optimal for the MP.
Variable generation is often more efficient than solving the MP directly because the RMP often induces the optimal solution with significantly fewer variables than the original problem.
Note the similarity to heuristic search, where the problem can be solved by considering only a small subset of states.

\textbf{Constraint generation} is similar to variable generation, but solves LPs with excessively many constraints rather than variables.
It analogously works by considering small LPs with a subset of constraints, and adding constraints as needed to obtain an optimal solution for the MP.
The smaller LPs have fewer constraints and thereby relax the MP, so they are called \emph{relaxed LPs}.
For a maximisation problem, the solution to a relaxed LP must have its objective greater or equal to the MP's optimal objective, but, since some constraints are missing, the relaxed solution may not be feasible for the MP.
Constraint generation uses a \emph{separation oracle} to detect which of the MP's constraints are violated by the relaxed solution, and adds these to relaxed LP, so that its next solution is closer to being feasible for MP.
Once the separation oracle finds no violations and the relaxed solution is feasible for the MP, the solution is immediately optimal for the MP.

\textbf{Simultaneous variable and constraint generation} operates over LPs that are simultaneously RMPs with a subset of the MP's variables, and relaxations with a subset of the MP's variables.
Respectively, this requires a pricing problem to determine which variables to add, as well as a separation oracle that detects which of the MP's constraints are violated.
Solutions for intermediate LPs can be difficult to interpret, because they need not give lower bounds on the optimal solution (since variables may be missing) and they need not give upper bounds (because constraints may be missing).
However, as soon as we have a solution where the pricing problem does not flag any new variables and the separation oracles does not detect any violations, we can be sure that the solution is optimal for the MP.
To justify this fact, consider a MP with variables \(X\) and constraints \(\mathcal{C}\), and a relaxed RMP with variables \(\widehat{X} \subseteq X\) and constraints \(\widehat{\mathcal{C}}\) with an optimal solution \(\widehat{\lpxs}\).
If the pricing problem does not add any variables to the relaxed RMP, then we know that \(\widehat{\lpxs}\) is an optimal solution for the LP with all variables \(X\) but the subset of constraints \(\widehat{\mathcal{C}} \subseteq \mathcal{C}\).
If \(\widehat{\lpxs}\) additionally does not violate any constraints in \(\mathcal{C}\), then it must immediately be an optimal solution for the MP.

We can interpret \ilao as a variable and constraint generation algorithm over \ref{lp:vi}.
Its partial SSP induces \ref{lp:rmp-vi} which is simultaneously an RMP and a relaxation of for \ref{lp:vi}, because it has the subset of variables~\(\{\V_{\s} : \s \in \partstates \setminus \partgoals\}\) and only requires monotonicity constraints for actions in \partactions (\ref{c:rmp-vi:consistency}).
Recall that \ilao's partial SSPs are  SSPs with terminal costs of \h at \partgoals~(\cref{def:partial-ssp}), so we include the constraints \ref{c:rmp-vi:goal} for artificial goals, in order to capture the terminal costs.
Observe that \(\V_{g}\) are constants for \(g \in \partgoals\), similarly to \ref{lp:vi}'s \(\V_{g} = 0\) for \(g \in \Sg\).
Strictly, due to these artificial goal constraints, \ref{lp:rmp-vi} is not a true relaxation of \ref{lp:vi}, since these constraints do not appear in \ref{lp:vi}.
However, it can still be considered a relaxation because these constraints only affect the ``boundary condition'' of constants, and since we use an admissible heuristic \h, we know that an optimal solution for \ref{lp:rmp-vi} remains a lower bound an optimal solution for \ref{lp:vi}.

\LP{11cm}{
\obj{\max_{\duals}}{V_{\sZ}}{lp:rmp-vi} \\
\st
\constr{\V_{\s} \leq \C(\s,\ac) +\!\sum_{\mathclap{\s' \in \Ss}}\pr(\s'|\s,\ac)\V_{\s'}}{\forall \s \in \partstates\!\setminus\!\partgoals, \ac\!\in\!\partactions(\s)}{c:rmp-vi:consistency}\\
\constr{\V_g = \h(g)}{\forall g \in \partgoals}{c:rmp-vi:goal}
}\vspace{3mm}

In terms of \ref{lp:rmp-vi}, \ilao generates new variables for its RMP (partial SSP) by expanding \currp's fringe states \(\s_f\), and inserting them as variables \(\V_{\s_f}\), replacing the previous constant \(\V_{\s_f} = \h(\s_f)\).
Note that this procedure is guided by the heuristic \h in the sense that \currp takes \h into account.
The separation oracle for \ilao's constraint generation, i.e., the mechanism for detecting constraints in the MP that are violated, is very simple: whenever a variable \(\V_{\s}\) is added, all constraints associated with \s are immediately added.
Adding all constraints trivially ensures that the relaxed LP will not violate any of the MP's constraints, but \ilao does not get any savings from leaving out non-violated constraints.

We illustrate \ilao's variable and constraint generation with a small example.
Consider the SSP in \cref{fig:ssp-for-ilao-as-constraint-and-variable-generation} with the heuristic values shown in the bottom of each node.
Then, \ilao's iterations will look like this:
\begin{enumerate}[label=Iter. \arabic*, leftmargin=*]
\item \(\partstates = \{\sZ\}, \partgoals = \{\sZ\}\) --- \(\currp = \{\}\) --- expand \(\sZ\);
\item \(\partstates = \{\sZ, \s_1, \s_2, \s_3\}, \partgoals = \{\s_1, \s_2, \s_3\}\) --- \(\currp = \{\sZ \mapsto \ac'_0\}\) --- expand \(\s_2\);
\item \(\partstates = \{\sZ, \s_1, \s_2, \s_3, \s_g\}, \partgoals = \{\s_1, \s_3, \s_g\}\) --- \(\currp = \{\sZ \mapsto \ac_0\}\) --- expand \(\s_1\);
\item \(\partstates = \{\sZ, \s_1, \s_2, \s_3, \s_g\}, \partgoals = \{\s_3, \s_g\}\) --- \(\currp = \{\sZ \mapsto \ac_0, \s_1 \mapsto \ac_1, \s_2 \mapsto \ac_2\}\) --- done.
\end{enumerate}

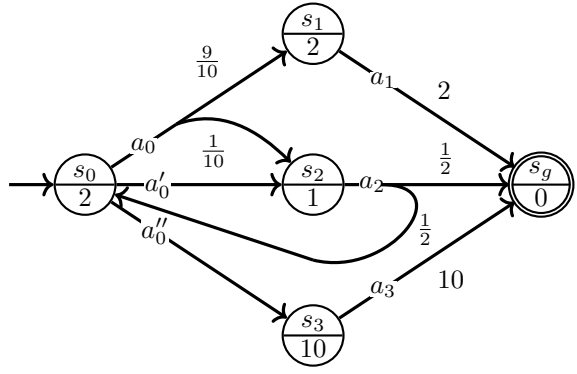
\begin{figure}[t]
\centering
\scalebox{1.0}{
\newcommand{\scaleX}{1.5} %
\newcommand{\scaleY}{1.0} %
\begin{tikzpicture}
\begin{scope}[every node/.style={circle split, thick, draw, minimum size=0.8cm, inner sep=0.04cm}]
  \coordinate (-1) at (-1, 0);
	\node (0) at (0, 0) {\(\sZ\) \nodepart{lower} \(2\) };
	\node (1) at (2*\scaleX, 2*\scaleY) {\(\s_1\) \nodepart{lower} \(2\) };
	\node (2) at (2*\scaleX, 0*\scaleY) {\(\s_2\) \nodepart{lower} \(1\) };
	\node (3) at (2*\scaleX, -2*\scaleY) {\(\s_3\) \nodepart{lower} \(10\) };
	\node[double] (g) at (4*\scaleX, 0*\scaleY) {\(\s_g\) \nodepart{lower} \(0\) };
\end{scope}
\begin{scope}
  \coordinate (fork-a00) at ($(0)!0.4!(1)$);
  \coordinate (fork-a2) at ($(2)!0.3!(g)$);
  \coordinate (curve-a2) at ($(2)!0.5!(3)$);
\end{scope}
\begin{scope}[every edge/.style={draw=black, very thick}]
\path [->] (-1) edge[] (0);
\path [-] (0) edge[] node[midway, fill=white, inner sep=0pt] {\(\ac_0\)} (fork-a00);
\path [->] (fork-a00) edge[] node[above left] {\({\frac{9}{10}}\)} (1);
\path [->] (fork-a00) edge[out=20] node[below left] {\({\frac{1}{10}}\)} (2);
\path [->] (0) edge[] node[near start, fill=white, inner sep=0pt] {\(\ac'_0\)} (2);
\path [->] (0) edge[] node[near start, fill=white, inner sep=0pt] {\(\ac''_0\)} (3);
\path [->] (1) edge[] node[near start, fill=white, inner sep=0pt] {\(\ac_1\)} node[above right] {\({2}\)} (g);
\path [-] (2) edge[] node[near end, fill=white, inner sep=0pt] {\(a_2\)} (fork-a2);
\path [->] (fork-a2) edge[] node[above] {\({\frac{1}{2}}\)} (g);
\path [-] (fork-a2) edge[out=0, in=-14, looseness=2] node[right] {\({\frac{1}{2}}\)} (curve-a2);
\path [->] (curve-a2) edge[] (0);
\path [->] (3) edge[] node[near start, fill=white, inner sep=0pt] {\(\ac_3\)} node[below right] {\({10}\)} (g);
\end{scope}
\end{tikzpicture}
\undef{\scaleX} %
\undef{\scaleY} %
}

\caption{Example SSP that we solve with \ilao under the lens of constraint and variable generation.
\(\h(\s)\) are given in the bottom of each node.}
\label{fig:ssp-for-ilao-as-constraint-and-variable-generation}
\end{figure}

\begin{figure}[t]
\begin{minipage}{\textwidth}
\begin{framed}
\vspace{-0.4cm}
\begin{equation*}
\begin{split}
&\text{Iter. 2} \\
&\underset{\V_{\sZ}}{\max} \V_{\sZ} \text{ s.t. } \\
&\V_{\sZ} \leq \Q(\sZ, \ac_0) \\
&\V_{\sZ} \leq \Q(\sZ, \ac'_0) \\
&\V_{\sZ} \leq \Q(\sZ, \ac''_0) \\
& \\
& \\
&\V_{\s_1} = \h(\s_1) = 2 \\
&\V_{\s_2} = \h(\s_2) = 1 \\
&\V_{\s_3} = \h(\s_3) = 10 \\
& \\
\end{split}
\hspace{2cm}
\begin{split}
&\text{Iter. 3} \\
&\underset{\V_{\sZ}, \V_{\s_2}}{\max \V_{\sZ}} \text{ s.t. } \\
&\V_{\sZ} \leq \Q(\sZ, \ac_0) \\
&\V_{\sZ} \leq \Q(\sZ, \ac'_0) \\
&\V_{\sZ} \leq \Q(\sZ, \ac''_0) \\
& \\
&\V_{\s_2} \leq \Q(\s_2, \ac_2) \\
&\V_{\s_1} = \h(\s_1) = 2 \\
& \\
&\V_{\s_3} = \h(\s_3) = 10 \\
&\V_{\s_g} = 0 \\
\end{split}
\hspace{2cm}
\begin{split}
&\text{Iter. 4} \\
&\underset{\V_{\sZ}, \V_{\s_1}, \V_{\s_2}}{\max \V_{\sZ} \text{ s.t. }} \\
&\V_{\sZ} \leq \Q(\sZ, \ac_0) \\
&\V_{\sZ} \leq \Q(\sZ, \ac'_0) \\
&\V_{\sZ} \leq \Q(\sZ, \ac''_0) \\
&\V_{\s_1} \leq \Q(\s_1, \ac_1) \\
&\V_{\s_2} \leq \Q(\s_2, \ac_2) \\
& \\
& \\
&\V_{\s_3} = \h(\s_3) = 10 \\
&\V_{\s_g} = 0 \\
\end{split}
\end{equation*}
\vspace{-0.35cm}
\end{framed}
\end{minipage}
\caption{RMPs corresponding to \ilao's partial SSPs at the start of iterations 2--4 from the example in \cref{fig:ssp-for-ilao-as-constraint-and-variable-generation}.}
\label{fig:ilao-rmps-example}
\end{figure}

The LPs for \ilao's partial SSPs at the starts of iterations 2--4 are shown in \cref{fig:ilao-rmps-example}.
For each partial SSP \partssp, its LP has variables for each internal state \(\s \in \partstates \setminus \partgoals\), and the goals \partgoals correspond to constants, e.g., \(\V_{\s_g} = 0\).
Thus, when \ilao expands a state, we generate the corresponding variable for the LP, as well as the constraints for its outgoing actions.
Note that the constants for \(\s \in \partgoals\) disappear when \s gets expanded, and they are replaced with variables.

We point out that the solution to the final partial SSP has
\begin{itemize}
\item \(\V_{\sZ} = 3.05\)
\item \(\V_{\s_1} = 2.0\)
\item \(\V_{\s_2} = 2.52\).
\end{itemize}
Importantly, notice that the constraint \(\V_{\sZ} \leq \Q(\sZ, \ac''_0)\) is trivially satisfied because \(3.05 \leq 11\); i.e., the constraint \(\V_{\sZ} \leq \Q(\sZ, \ac''_0)\) is not needed, and corresponds to an unneeded action that could be left out of the partial SSPs because \(\h(\s_3)\) is large and deems \(\s_3\) unpromising.
This is a consequence of \ilao's na{\"i}ve separation oracle that adds constraints for all outgoing actions as states get expanded.

This interpretation extends the currently understood connections between planning algorithms and operations research, and highlights the shortcoming of Bellman backups that consider all actions because these correspond to a na{\"i}ve separation oracle that adds all constraints.
In the next section, we address this shortcoming, and introduce an efficient separation oracle that only adds violated constraints.

\section{\cgilao}\label{sec:cgilao}

\begin{algorithm}[]
{\small
\DontPrintSemicolon
  \caption{\cgilao}\label{alg:cg-ilao}
  \function{\cgilao\((\ssp, \h, \epsilon, \incDecTol)\)} {

    \(\partssp \gets\) partial SSP \(\langle \{\sZ\}, \sZ, \{\sZ\}, \emptyset, \pr, \C, \h \rangle\) \;
    \(\V \gets \text{value function initialised by } \h\) \;
    \(\currp \gets \text{candidate policy initialised as undefined everywhere (used to approximate \partp)}\) \;
    \(\colsToCheck \gets \emptyset\) \;
    \Repeat{\(\fringe = \emptyset\) and \(\oldp = \currp\) and \(\residual \leq \epsilon\)} {
      \label{line:cgilao:main-loop-start}
      \(\envelope \gets\) post-order DFS traversal of \currp from \sZ \; \label{line:cgilao:dfs}
      \(\partssp, \currp, \envelope \gets \partiallyExpandFringes{}(\ssp, \partssp, \currp, \envelope, \V)\) \; \label{line:cgilao:call-to-partiallyExpandFringes}
      \(\fringe \gets \Ss^{\currp} \cap (\partgoals \setminus \Sg)\) \;
      \(\V, \residual, \currp, \oldp, \colsToCheck \gets \CGimprovePolicy{}(\ssp, \partssp, \currp, \envelope, \V, \fringe, \colsToCheck, \epsilon, \incDecTol)\) \; \label{line:cgilao:call-to-CGimprovePolicy}
      \(\V, \residual, \colsToCheck, \partssp \gets \fixViolatedConstrs{}(\ssp, \partssp, \currp, V, \colsToCheck, \residual, \epsilon)\) \; \label{line:cgilao:call-to-fixViolatedConstrs}
      \label{line:cgilao:main-loop-end-minus-one}
    } \label{line:cgilao:whileCondition}
      \label{line:cgilao:main-loop-end}
    \Return \V
  }

  \function{\(\partiallyExpandFringes{}(\ssp, \partssp, \currp, \envelope, \V)\)}{
    \While{\(\exists \s_f \in \envelope \cap (\partgoals \setminus \Sg)\)} {
    \label{line:cgilao:partiallyExpandFringes:main-loop-start}
      \(\A' \gets \{ \ac \in \A(\s_f) : \Q(\s_f, \ac) = \min_{\ac' \in \A(\s_f)} \Q(\s_f, \ac')\}\) \tcc*{Select greedy actions \(\argmin_{\ac \in \A} \Qsa\)}
      $\partssp \gets \addStateActions(\ssp, \partssp, \s_f, \A')$ \tcc*{Note: \(\s_f \not\in \partgoals\) after \addStateActions}
      \(\currp(\s_f) \gets \text{some greedy action } \ac \in \partactions(\s)\) \;
      \For{\(\s_{\text{int}} \in \supp(\s_f, \currp(\s_f)) \setminus \partgoals\)}{
        \(\envelope \gets \envelope\) updated with post-order DFS traversal of \currp from \(\s_{\text{int}}\) (no duplicate states) \;
      }
    } \label{line:cgilao:partiallyExpandFringes:main-loop-end}
    \Return \partssp, \currp, \envelope \;
  }

  \function{\(\CGimprovePolicy{}(\ssp, \partssp, \currp, \envelope, \V, \fringe, \colsToCheck, \epsilon, \incDecTol)\)} {
    \(\oldp \gets \currp\) \;
    \Repeat{\(\fringe \neq \emptyset\) or \(\currp \neq \oldp\) or \(\residual \leq \epsilon\)}{
      \label{line:cgilao:CGimprovePolicy:main-loop-start}
      \(\residual \gets 0\) \;
      \For {\(\s \in \envelope \setminus \partgoals\)} {
        \(\Q_{\min} \gets \min_{\ac \in \partactions(\s)} \Qsa\) \;
        \If {\(\Q_{\min} - \V(\s) > \incDecTol\)} {
          \label{line:cgilao:improvePolicy:check-violated-constraints-begin}
          \(\colsToCheck \gets \colsToCheck \cup \successors(\s, \ssp, \partssp)\) \; \label{line:cgilao:v-increases}
        }
        \ElseIf {\(V(\s) - \Q_{\min} > \incDecTol\)} {
          \(\colsToCheck \gets \colsToCheck \cup \predecessors(\s, \ssp, \partssp)\) \; \label{line:cgilao:v-decreases} \label{line:cgilao:improvePolicy:check-violated-constraints-end}
        }
        \(\residual \gets \max(|\V(\s) - \Q_{\min}|, \residual)\) \;
        \(\V(\s) \gets \Q_{\min}\) \;
        \(\currp(\s) \gets \text{some greedy action } \ac \in \partactions(\s) \text{ s.t.\ } \Qsa = \Q_{\min}\) \; \label{line:cgilao:improvePolicy:update-policy}

      }

    } \label{line:cgilao:improvePolicy:repeatCondition}
      \label{line:cgilao:CGimprovePolicy:main-loop-end}
    \Return \(\V, \residual, \currp, \oldp, \colsToCheck\) \;
  }

  \function{\(\fixViolatedConstrs{}(\ssp, \partssp, \currp, \V, \colsToCheck, \residual, \epsilon)\)} {
    \(\colsToCheck' \gets \emptyset\) \;
    \For{\((\s,\ac) \in \colsToCheck \text{ \st } \V(\s) >  \Q(\s,\ac) + \epsilon\)} {
      \If{\(\ac \not\in \partactions(\s)\)} {
        \(\partactions(\s) \gets \partactions(\s) \cup \{\ac\}\) \;
        \(\partgoals \gets \partgoals \cup (\supp(\s, \ac) \setminus \partstates)\) \;
        \(\partstates \gets \partstates \cup \supp(\s, \ac)\) \;
      }
      \(\residual \gets \max(V(\s) - \Qsa, \residual)\) \; \label{line:cgilao:fixViolatedConstrs:track-residual}
      \(\V(\s) \leftarrow \Qsa\) \; \label{line:cgilao:fixViolatedConstrs:fix-violation}
      \(\currp(\s) \gets \ac\) \; \label{line:cgilao:fixViolatedConstrs:update-policy}
      \(\colsToCheck' \gets \colsToCheck' \cup \predecessors(\s, \ssp, \partssp)\) \; \label{line:cgilao:fixViolatedConstrs:check-violated-constraints}
    }
    \Return \(\V, \residual, \colsToCheck', \partssp\)
  }
}
\end{algorithm}

All previous algorithms based on VI update a state's value function \(\V(\s)\) by considering all applicable actions actions.
In \ilao, this happens because all its states are either unexpanded and have no available actions \(\partactions(\s) = \emptyset\), or fully expanded with all applicable actions available \(\partactions(\s) = \A(\s)\); \ilao is not able to partially expand states.
We address this shortcoming by allowing states to be partially expanded so that backups in the partial SSP only consider a subset of \(\A(\s)\).
To this end, we first define which actions can be left out of the partial SSP without affecting optimality.

\begin{definition}[Inactive Action]
Consider an SSP \ssp, its partial SSP \partssp, a value function \V, and a state $\s \in \partstates$.
An action $\ac \in \A(\s) \setminus \partactions(\s)$ is inactive in state \s if $\min_{\widehat{\ac} \in \partactions(\s)} \Q(\s, \widehat{\ac}) < Q(s,a).$
\end{definition}

If an action \ac is inactive for \s, then \Qsa can not reduce \(\V(\s)\), and we can still obtain \(\V^*\) if \ac is ignored by our Bellman backups.
Of course \ac may become active later, but while it is inactive, it can be safely left out of the partial SSP.
To understand inactive actions in terms of linear programming, consider the \ref{lp:rmp-vi} associated with \partssp and let \V be a solution for it.
Each state-action pair \((\s, \ac)\) corresponds to the consistency constraint \ref{c:rmp-vi:consistency}.
An inactive action \ac for state \s does not have an associated constraint in the LP since \(\ac \not\in \partactions(\s)\), and importantly, if the constraint were added it would be inactive (also called loose).
Therefore, adding the constraint does not affect the solution and only introduces redundant work.

To exploit this insight, we generalise \ilao so it can ignore inactive actions in each of its partial SSPs and partially expand states, i.e., states may have a non-empty strict subset of applicable actions available in the partial SSP, \(\emptyset \neq \partactions(\s) \subsetneq \A(\s)\).
Then, we introduce a mechanism that efficiently adds actions that are not inactive as they appear.
In terms of linear programming, we are using the constraint generation framework to leave unneeded constraints out of the LP, and use a separation oracle to identify constraints that may be needed to encode the optimal solution.
This yields a new dynamic programming algorithm, which, in honour of this tight connection to constraint generation, we call Constraint-Generation \ilao (\cgilao).

\cgilao~(\cref{alg:cg-ilao}) is similar to \ilao~(\cref{alg:ilao}), but with the key difference that in its expansion phase (\cref{alg:cg-ilao} \cref{line:cgilao:call-to-partiallyExpandFringes}), \cgilao calls \partiallyExpandFringes{} and expands a state with only the greedy actions \wrt \V, and not all applicable actions.
This introduces two issues that can affect \cgilao's correctness.
\begin{enumerate}
\item Partial expansion may ignore an optimal action if \V is inaccurate.
This is clearly problematic because \cgilao can not possibly find the optimal policy if its partial SSP does not contain the optimal actions.
So, \cgilao requires a mechanism that can detect and add optimal actions if the relevant state's partial expansion missed them.
\item \V is evaluated \wrt the partial SSP, so if an optimal action \ac is missing, then it is not taken into account by \V (this can cause \(\V \geq \V^*\)).
If the missing action is later added, then \V does not automatically reflect that \ac has been made available, so we must ensure that \ac's improvements to \V are fully propagated.
\end{enumerate}
Elegantly, it turns out that both of these problems are instances of constraint violation, and can therefore be addressed by detecting constraint violations with a separation oracle and then enforcing that the constraints become satisfied.

\textbf{How can we find constraint violations efficiently?}
A na{\"i}ve separation oracle considers all constraints missing in the partial SSP, i.e., the constraints for \(\s \in \partstates, \ac \in \A(\s) \setminus \partactions(\s)\), and evaluates whether they are violated by checking if \(\V(\s) > \Qsa\).
This means we compute \qvalues for each external action and therefore save no work compared to adding them to the partial SSP in the first place.
The key realisation that lets us save \qvalue computations is that non-violated constraints remain non-violated unless \V is updated in a particular way.
Our efficient separation oracle exploits this by computing a set of constraints that might be violated, and then checking violations only in this set.
Suppose there are currently no constraint violations and \(\V(\s)\) is assigned \(\min_{\ac \in \partactions(\s)} \Qsa\).
Then, the separation oracle determines that the constraints associated with \s may become violated according to the following three cases:

\begin{enumerate}
\item{\bld{\(\V(\s)\) stays the same.}}
There can be no new constraint violations.
\item{\bld{\(V(\s)\) increases.}}
The constraints \(\V(\s) \leq \Q(\s, \ac')\) may be violated for \((\s, \ac') \in \successors(\s, \ssp, \partssp)\) where
\[\successors(\s, \ssp, \partssp) \definedas \left\{(\s, \ac) : \ac \in \A(\s) \setminus \partactions(\s)\right\}.\]
Note that we only have to consider external successor actions, and not internal ones \((\s, \ac')\) with \(\ac' \in \partactions(\s)\), because \(\V(\s) \gets \min_{\ac \in \partactions(\s)} \Qsa \leq \Q(\s, \ac')\), and therefore the constraint associated with \((\s, \ac')\) must be satisfied already.
\item{\bld{\(V(\s)\) decreases.}}
The only constraints that may be violated are \(\V(\s') \leq \Q(\s', \ac')\) for \((\s', \ac') \in \predecessors(\s, \ssp, \partssp)\) where
\[\predecessors(\s, \ssp, \partssp) \definedas \left\{(\s', \ac') :  \s' \in \partstates, \ac' \in \A(\s'), \s \in \supp(\s', \ac')\right\}.\]
In this case, we must consider internal actions as well as external ones.
However, we do not consider external predecessor states \(\s' \in \Ss \setminus \partstates\), because such states will eventually be expanded and handled that way if their constraint violations persist.
\end{enumerate}

In the pseudocode, these checks are performed whenever \V is updated, and all potential violations are stored in \colsToCheck:
\begin{itemize}
\item if \(\V(\s)\) increases then \(\colsToCheck \gets \colsToCheck \cup \successors(\s, \ssp, \partssp)\) (\cref{alg:cg-ilao}~\cref{line:cgilao:v-increases});
\item if \(\V(\s)\) decreases, then \(\colsToCheck \gets \colsToCheck \cup \predecessors(\s, \ssp, \partssp)\) (\cref{alg:cg-ilao}~\cref{line:cgilao:v-decreases} and \cref{line:cgilao:fixViolatedConstrs:check-violated-constraints}).
\end{itemize}
At the end of each iteration of its main loop, \cgilao calls \fixViolatedConstrs{} to check all potentially violated constraints \colsToCheck.
If a constraint in \colsToCheck is not violated, it is simply discarded.
If a constraint \((\s, \ac)\) is indeed violated, then \fixViolatedConstrs{} fixes it by setting \(\V(\s) \gets \Qsa\) (\cref{alg:cg-ilao} \cref{line:cgilao:fixViolatedConstrs:fix-violation}), and then removes \((\s, \ac)\) from \colsToCheck.
Importantly, this update to \V may create new constraint violations, so we must apply the same rules as before and track any new potential constraint violations in \colsToCheck for the next iteration.
This ensures that \colsToCheck is always a superset of \cgilao's constraint violations, and the algorithm only terminates once they have all been addressed.

A subtle technical difference that follows from this construction is that \cgilao requires the additional error parameter \(\incDecTol \in \Reals_{>0}\) that is used to decide whether a state's value function has changed sufficiently to warrant updating \colsToCheck.
For now, it is convenient to use \(\incDecTol = \epsilon\), and we explain the effects of this parameter in \cpageref{pg:incDecTol-discussion}.

There are also two subtle differences in the implementations of \ilao and \cgilao as presented in \cref{alg:ilao} and \cref{alg:cg-ilao}, which will turn out to be significant in our experiments (\cref{sec:experiments}).
These changes are orthogonal to \cgilao's partial expansion mechanism, so it is possible to define four similar algorithms: (i) full expansion with implementation 1, (ii) full expansion with implementation 2, (iii) partial expansion with implementation 1, (iv) partial expansion with implementation 2.
In previous work we only considered (i) and (iv)~\cite{Schmalz2024:cgilao}, which is problematic for comparing the effect of using partial expansions versus full expansions.
We address this by comparing (ii) with (iv), and we also include (i) as a way to compare to the previous work.
The two subtle implementation differences are:
\begin{enumerate}
\item \partiallyExpandFringes{} deviates from \expandFringes{} in the case where the expansion adds an internal state \(\s \in \partstates\) where \currp is defined: \ilao stops there and does not update \envelope; \cgilao follows \currp from \s, updating \envelope, and expands any new fringe states that are encountered this way.
Our pseudocode does this by updating \envelope during the \texttt{while} loop, and in practice it is implemented with recursion.
\ilao (\cref{alg:ilao})'s implementation follows \citeAuthor{Hansen2001:ilao}, whereas \cgilao (\cref{alg:cg-ilao})'s implementation follows the recursive definition from \citeAuthor{Hansen2016:GeneralErrorBounds}.
Thus, per call, \cgilao's \partiallyExpandFringes{} can potentially expand more states than \ilao's \expandFringes{} if it keeps expanding states that lead to internal states where \currp is defined.
\item The second change concerns how \currp is tracked, and how to handle its deviation from \partp.
Recall from \cref{sec:ilao}, that \ilao's \expandFringes{} or \improvePolicy{} may encounter an internal state whose policy leads to a fringe state that is not accounted for in \envelope.
There, it is handled by making sure \(\currp = \partp\) during \improvePolicy{}.
\cgilao embraces the property that \currp approximates \partp, and only updates \currp at states \s where \(\V(\s)\) changes (\cref{alg:cg-ilao}~\cref{line:cgilao:improvePolicy:update-policy} and \cref{line:cgilao:fixViolatedConstrs:update-policy}).
This is more efficient, but can have the effect that the inaccurate \currp has no fringe states, even though \partp does.
This is not an issue because in such situations \currp must be updated at some point, setting \(\currp \neq \oldp\), and thereby ensuring that \cgilao does not terminate until it has indeed closed all fringes.
Thus, \cgilao saves the overhead of computing \currp explicitly, but may require additional iterations to discover that its greedy policy still contains fringes.
\end{enumerate}
We will see in our experiments that these implementation changes can have a significant impact on performance.

In summary, \cgilao generalises \ilao by allowing actions to be left out of its partial SSPs.
\cgilao ignores inactive actions and only adds actions as they are required.
Under the lens of linear programming, the inactive actions correspond to inactive constraints, and \cgilao refines \ilao by making the separation oracle more selective, so that only constraints that are actively violated are added to the partial SSP.
We note that \cgilao's separation oracle incurs an overhead of \qvalue computations, since it must determine whether previously non-violated constraints have become violated.
In our experiments (\cref{sec:experiments}), we show that the \qvalue savings from ignoring expensive actions outweigh the separation oracle's \qvalue overhead, and \cgilao reliably computes fewer \qvalues than the state-of-the-art.
These savings let \cgilao outperform the state-of-the-art on our benchmarks.
\cgilao represents an important step for heuristic search, because it enables a heuristic to guide our algorithm away from expensive actions as well as expensive states, letting the algorithm save on \qvalue computations for those actions.

\subsection{Properties of \cgilao and Proof of Correctness}

In this section, we show some properties of \cgilao that distinguish it from previous approaches.
Then, we prove that \cgilao is correct.

\paragraph{\cgilao is not monotonic}
In all previous algorithms based on Bellman backups, such as \lrtdp and \ilao, if they are initialised with a monotonic heuristic, their backups are guaranteed to be monotonically non-decreasing, i.e., \(\V^{t+1} \geq \V^{t}\) for all timesteps \(t\).
\cgilao does not have this guarantee because it ignores inactive actions and may evaluate a suboptimal policy before recognising and inserting the missing optimal action, which decreases the value function.
As an example of this, consider the SSP in \cref{fig:v-decrease-example} where \(\h(\s)\) is given in the bottom of the respective node.
\h is monotonic, but as we step through the iterations of \cgilao, a cost decrease occurs:

\begin{enumerate}[label=Iter. \arabic*, leftmargin=*]
\item Expands \sZ.
\item Partly expands \(\s_1\) with \(\ac_1\).
\item Expands \(\s_2\) with \(\ac_2\) and, after \CGimprovePolicy{}, we have \(\V(\s_2) = 3\), \(\V(\s_1) = 4\), \(\V(\s_0) = 5\) and \(\colsToCheck = \{(\s_1, \ac'_1)\}\). Since \(\colsToCheck \neq \emptyset\), \fixViolatedConstrs{} verifies that \((\s_1, \ac'_1)\) is currently better than the existing action \(\ac_1\) for \(\s_1\), so \(\ac'_1\) is added to \(\partactions(\s_1)\) and \(\V(\s_1)\) is changed from 4 to 3.
Recall that \(\V(\s_0) = 5\), so when \(\V(\s_1)\) is updated to \(3\), \(\{\s_0, \ac_0\}\) is inserted into \colsToCheck, and no further changes are made in this iteration.
\item Expands \(\s_3\) and \CGimprovePolicy{} reduces \(\V(\sZ)\) from 5 to 4, so \V has been decreased by \CGimprovePolicy{}.
\end{enumerate}

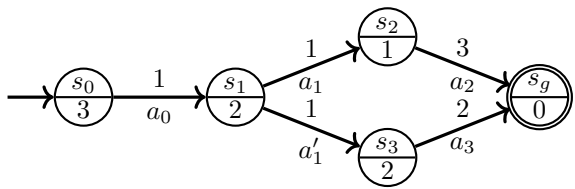
\begin{figure}[ht!]
\centering
\scalebox{1.0}{
\begin{tikzpicture}
\begin{scope}[every node/.style={circle split, thick, draw, minimum size=0.6cm, inner sep=0.04cm}]
        \coordinate (-1) at (-1, 0);
        \node (0) at (0, 0) {\(\sZ\) \nodepart{lower} \(3\)};
        \node (1) at (2, 0) {\(\s_1\) \nodepart{lower} \(2\)};
        \node (2) at (4, 0.8) {\(\s_2\) \nodepart{lower} \(1\)};
        \node (3) at (4, -0.8) {\(\s_3\) \nodepart{lower} \(2\)};
        \node[double] (g) at (6, 0) {\(\s_g\) \nodepart{lower} \(0\)};
\end{scope}
\begin{scope}[every edge/.style={draw=black, very thick}]
        \path [->] (-1) edge (0);
        \path [->] (0) edge[] node[above] {1} node[below] {\(\ac_0\)} (1);
        \path [->] (1) edge[] node[above] {1} node[below] {\(\ac_1\)} (2);
        \path [->] (1) edge[] node[above] {1} node[below] {\(\ac'_1\)} (3);
        \path [->] (2) edge[] node[above] {3} node[below] {\(\ac_2\)} (g);
        \path [->] (3) edge[] node[above] {2} node[below] {\(\ac_3\)} (g);
\end{scope}
\end{tikzpicture}}
\caption{An SSP equipped with a monotonic heuristic where \cgilao's value function is not monotonically non-decreasing.}
\label{fig:v-decrease-example}
\end{figure}

\paragraph{\cgilao is \econsistent on \partssp, not \ssp}
Recall the definition of \econsistency as given in \cref{def:epsilon-consistency}.
For \findAndRevise algorithms, the output \V is \econsistent \wrt the original SSP.
This is not the case for \cgilao: it guarantees \econsistency over its partial SSP by construction, but may produce a solution that is not \econsistent \wrt \ssp.
This is not an issue and does not prevent \cgilao from finding optimal solutions, as we explain later, but it is important to understand.
Consider \cgilao's steps on the pathological SSP in~\cref{fig:cgilao-not-econsistent}:

\begin{enumerate}[label=Iter. \arabic*, leftmargin=*]
\item Expands \sZ.
\item Expands \(\s_3\) and partly expands \(\s_1\) with \(\ac'_1\) since \(\Q(\s_1, \ac'_1) = 4\) and \(\Q(\s_1, \ac_1) = 6\).
After \CGimprovePolicy{}, \(\V(\s_3) = 6, \V(\s_1) = 10, \V(\s_0) = 9\).
Importantly, \(\V(\s_1)\) did not increase, so its successors are not added to \colsToCheck.
\item There are no fringes left to expand, so \cgilao does one pass of \improvePolicy{} which leaves \V unchanged and exits with \(\residual = 0\) and \(\colsToCheck = \emptyset\).
The algorithm terminates.
\end{enumerate}
The algorithm has terminated, but \(\V(\s_1) - \Q(\s_1, \ac_1) = 4 > \epsilon\), so \V is not \econsistent.
Nevertheless, the algorithm has found an optimal policy.
The upshot is that we must be careful and specify the SSP for which the algorithm is \econsistent.
This is analogous to \findAndRevise algorithms not being globally \econsistent, and requiring that the greedy policy is specified.

\begin{figure}[t!]
\centering
\scalebox{1.0}{
\begin{tikzpicture}
\begin{scope}[every node/.style={circle split, thick, draw, minimum size=0.6cm, inner sep=0.04cm}]
        \coordinate (-1) at (-2, -3);
        \node (0) at (-1, -3) {\(\sZ\) \nodepart{lower} \(6\)};
        \node (1) at (0, 0) {\(\s_1\) \nodepart{lower} \(10\)};
        \node (2) at (2, 2) {\(\s_2\) \nodepart{lower} \(5\)};
        \node (3) at (2, -2) {\(\s_3\) \nodepart{lower} \(0\)};
        \node[double] (g) at (4, 0) {\(\s_g\) \nodepart{lower} \(0\)};
        \coordinate (branch) at (0-0.35, -2-0.35);
\end{scope}
\begin{scope}[every edge/.style={draw=black, very thick}]
        \path [->] (-1) edge (0);
        \path [-] (0) edge[] node[above left] {1} node[below right] {\(\ac_0\)} (branch);
        \path [->] (branch) edge[out=45, in=270] node[left] {\(\frac{1}{2}\)} (1);
        \path [->] (branch) edge[out=45, in=180] node[below] {\(\frac{1}{2}\)} (3);
        \path [->] (1) edge[] node[above left] {1} node[below right] {\(\ac_1\)} (2);
        \path [->] (1) edge[] node[above right] {4} node[below left] {\(\ac'_1\)} (3);
        \path [->] (2) edge[] node[above right] {9} node[below left] {\(\ac_2\)} (g);
        \path [->] (3) edge[] node[above left] {6} node[below right] {\(\ac_3\)} (g);
\end{scope}
\end{tikzpicture}}
\caption{An SSP where \cgilao does not return an \econsistent w.r.t. the whole SSP.}
\label{fig:cgilao-not-econsistent}
\end{figure}

Thus, \cgilao has different behaviours to previous algorithms based on Bellman backups.
Consequently, we can not immediately apply previous proof techniques, and we must be careful with our definition of optimality.
We take a moment to formalise the notion of an optimal algorithm.
\begin{definition}[Optimal Algorithm]
An algorithm parameterised by the error term \(\epsilon \in \Reals_{>0}\) is called optimal if, as \(\epsilon \to 0\), it outputs an optimal policy.
\end{definition}
Observe that this definition was used implicitly in the background (\cref{sec:background}), and we are just making it explicit now.
\Cref{thm:vi-error} and \cref{thm:find-and-revise-is-optimal} demonstrate that VI and \findAndRevise algorithms are optimal, respectively.

It turns out that \econsistency over partial SSPs \partssp is sufficient to prove optimality over the original SSP \ssp, provided that \V is admissible \wrt \ssp, and the greedy policy \partp does not terminate in one of \partssp's artificial goal states \(\partgoals \setminus \Sg\).
The first condition is inherited from regular \econsistency, and the second condition ensures that the greedy policy \partp induces a closed policy for \ssp.
To formalise and prove this claim, we start by introducing the following upper bound over the expected number of steps to reach the goal.

\begin{definition}\label{def:Nbar}
Let \(\ssp_{\Sg'}\) denote an SSP that is identical to \ssp but has different goal states \(\Sg' \subseteq \Ss\), and let \(N(\s, \p, \ssp_{\Sg'})\) denote the expected number of steps to reach the goals \(\Sg'\) by following the proper policy \p from \s.
Then, \(\Nbar{\s}{\ssp}\) is the smallest number that satisfies \(\Nbar{\s}{\ssp} \geq N(\s, \p, \ssp_{\Sg'})\) for all \(\ssp_{\Sg'}\) and all policies \p that are proper for \(\ssp_{\Sg'}\) from \s, omitting any \(\ssp_{\Sg'}\) that have no proper policies from \s.
Importantly, for a fixed \ssp, the number of ways to pick \(\ssp_{\Sg'}\) is finite, and for each \(\ssp_{\Sg'}\) the number of proper policies from \s is also finite.
Therefore, we can indeed pick \(\Nbar{\s}{\ssp}\) to be the minimal number that satisfies our constraints, so it is well-defined and finite.
\end{definition}

This term will be useful because it lets us use the same term over many variants of the SSP \ssp.
Without it, we would have a different \(\N{\s}{\V}{\ssp'}\) for each SSP \(\ssp'\), and we would have to be careful how they relate.
The fact that \(\Nbar{\s}{\ssp}\) can get large is unimportant for our proofs as long as \(\epsilon \Nbar{\s}{\ssp} \to 0\) as \(\epsilon \to 0\).
It is possible to sharpen our proofs with tighter upper bounds, but this does not contribute towards proving that \cgilao is correct.

We are now equipped to formalise and prove our claim that \econsistency over partial SSPs \partssp is sufficient to prove optimality over the original SSP \ssp, provided that \V is admissible \wrt \ssp, and the greedy policy \partp does not terminate in one of \partssp's artificial goal states \(\partgoals \setminus \Sg\).
Note that we weaken the requirement of \(\V \leq \V^*\), which will be useful for later proofs.

\begin{theorem}\label{thm:action-specified-econsistency-optimal}
Consider an algorithm that for each \(\epsilon\) outputs \partssp and \V such that
\begin{itemize}
\item \V is \econsistent \wrt \partssp,
\item \V satisfies \(\V(\s) \leq \V^*(\s) + \epsilon \Nbar{\s}{\ssp} \; \forall \s \in \Ss^{\partp}\), and
\item \partp does not reach any artificial goal states, i.e., \(\Ss^{\partp} \cap (\partgoals \setminus \Sg) = \emptyset\).
\end{itemize}
As \(\epsilon \to 0\), the greedy policy \partp becomes an optimal policy, i.e., the algorithm is optimal.
\end{theorem}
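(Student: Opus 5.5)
The plan is to show that, writing \(\p = \partp\), the expected cost \(\V^{\p}(\sZ)\) of following \(\p\) in the original SSP \ssp is at most \(\V^*(\sZ) + 2\epsilon\Nbar{\sZ}{\ssp}\). Optimality then follows from finiteness. There are only finitely many deterministic policies, so let \(\delta>0\) be the smallest gap between \(\V^*(\sZ)\) and the cost of any non-optimal policy. Once \(2\epsilon\Nbar{\sZ}{\ssp} < \delta\), the policy \(\p\) must be optimal.

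\textbf{Step 1: \(\p\) is a closed policy of \ssp.} By the third hypothesis, every state of \(\Ss^{\p}\) is either a true goal or an internal state of \partssp. At internal states \(\p\) is defined through \(\partactions(\s)\subseteq\A(\s)\), and successors of internal actions lie in \partstates. So \(\p\), read as a policy of \ssp, is closed from \sZ. At the true goals in \(\Ss^{\p}\) the value is the terminal cost \(\h(g)\). Since \(\h\) is admissible, \(\h(g)=0\), so \(\V(g)=0\).

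\textbf{Step 2: \(\p\) is proper for small \(\epsilon\).} I expect this to be the main obstacle. I would argue by contradiction, using the paper's practical assumption that action costs are strictly positive with minimum \(c_{\min}>0\). Suppose \(\p\) is improper. Then the Markov chain induced by \(\p\) on the finite set \(\Ss^{\p}\) has a closed recurrent class \(R\) that contains no goal. On \(R\), \econsistency w.r.t.\ \partssp gives, for each \(\s\in R\),
\[
\V(\s) \geq \C(\s,\p(\s)) + \sum_{\s'}\pr(\s'|\s,\p(\s))\V(\s') - \epsilon .
\]
Averaging these inequalities under a stationary distribution of the chain restricted to \(R\) makes the \V terms cancel. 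This leaves \(0 \geq c_{\min} - \epsilon\), which is impossible for \(\epsilon < c_{\min}\). If I wanted to avoid the positive-cost assumption, I would instead try to use \cref{assump:infinite-improper} together with boundedness of \V on the finite set \(R\), but the stationary-distribution argument is what I would try first.

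\textbf{Step 3: bound the cost of \(\p\).} Once \(\p\) is proper, define the residual \(r(\s) = \Q(\s,\p(\s)) - \V(\s)\). Because \(\p\) is greedy, \(r(\s)\) equals \(\min_{\ac\in\partactions(\s)}\Qsa - \V(\s)\), so \econsistency gives \(r(\s)\leq\epsilon\) on \(\Ss^{\p}\). Unrolling the policy-evaluation equations along trajectories of \(\p\), which reach the goal with probability 1 and where \(\V=0\) by Step 1, gives
\[
\V^{\p}(\s) - \V(\s) = \mathbb{E}\Bigl[\textstyle\sum_t r(\s_t)\Bigr] \leq \epsilon\, N(\s,\p,\ssp) \leq \epsilon\,\Nbar{\s}{\ssp},
\]
where the last inequality is \cref{def:Nbar}. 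Combining this with the second hypothesis at \(\s=\sZ\) yields
\[
\V^{\p}(\sZ) \leq \V^*(\sZ) + 2\epsilon\Nbar{\sZ}{\ssp}.
\]
Since \(\Nbar{\sZ}{\ssp}\) does not depend on \(\epsilon\), the finiteness argument from the first paragraph completes the proof.
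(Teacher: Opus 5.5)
Your argument is correct, given the strictly positive action costs that the paper adopts as its working assumption. It reaches the conclusion by a different route from the paper's.

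The paper applies \cref{thm:vi-error} to an auxiliary SSP in which each state keeps only its greedy action. This gives the two-sided bound $|\V(\s)-\V_{\partp}(\s)|\le\epsilon\Nbar{\s}{\ssp}$ on the whole envelope. The paper then combines $\V_{\partp}\ge\V^*$ with the second hypothesis to squeeze $\V$ to within $\epsilon\Nbar{\s}{\ssp}$ of $\V^*$, and concludes in the limit that $\partp$ is greedy for $\V^*$. That route tacitly assumes $\partp$ is proper. If it were not, the auxiliary SSP would have no proper policy and the expected step count of $\partp$ would be infinite. The comparison with $\Nbar{\s}{\ssp}$, which ranges only over proper policies, would then fail, and the remark that $\partp$ is closed does not supply properness. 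Your Step~2 fills exactly this hole, using stationary-distribution averaging on a goal-free recurrent class.

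Your Step~3 replaces the two-sided VI error theorem with a one-sided telescoping of residuals, and uses the second hypothesis only at $\sZ$. Your finite-gap argument then turns the paper's informal limit, in which the output envelope itself changes with $\epsilon$, into an explicit threshold $\epsilon<\min\{c_{\min},\delta/(2\Nbar{\sZ}{\ssp})\}$. Below it, the returned policy is exactly optimal. What you give up is the paper's by-product that $\V$ converges to $\V^*$ on the envelope, which the theorem itself does not need.

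Two minor points:
\begin{itemize}
\item Step~1's appeal to admissibility of the heuristic is unnecessary. Since $\V\ge 0$ at goals, you already get that $\V_{\partp}(\sZ)-\V(\sZ)$ is at most the expected sum of residuals.
\item To drop $c_{\min}>0$ in favour of \cref{assump:infinite-improper}, boundedness of $\V$ is not what you need. If $\partp$ is improper, its infinite expected cost forces some goal-free recurrent class reachable under $\partp$ to contain a positive-cost state. Averaging over that particular class gives a contradiction once $\epsilon$ is below the smallest positive stationary average cost over the finitely many pairs of policy and recurrent class.
\end{itemize}
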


\begin{proof}
First, \econsistency over \partssp implies that \partp is accurately evaluated by \V, i.e.,
\[| \V(\s) - \V_{\partp}(\s) | \leq \epsilon \Nbar{\s}{\ssp} \; \forall \s \in \Ss^{\partp}.\]
To get this bound, we construct a copy of \ssp called \(\ssp'\) with the actions changed thus: \(\A'(\s) = \{\p_{\V}(\s)\} \; \forall \s \in \Ss \setminus \Sg\).
The optimal value function of \(\ssp'\) is \(\V_{\partp}\) (defined over \(\Ss^{\partp}\)), where \(\V_{\partp}(\s)\) is the cost-to-go of \partp from \s.
Our \econsistent \V must be globally \econsistent for \(\ssp'\), and we can apply \cref{thm:vi-error} to obtain \(| \V(\s) - \V_{\partp}(\s) | \leq \epsilon \N{\s}{\V}{\ssp'} \; \forall \s \in \Ss^{\partp}\).
We get the desired bound by observing that \(\N{\s}{\V}{\ssp'} \leq \Nbar{\s}{\ssp}\) for all \(\s \in \Ss^{\partp}\) by \cref{def:Nbar}.

Second, we have \(\V_{\partp}(\s) \geq \V^*(\s) \; \forall \s \in \Ss^{\partp}\) because \partp is a closed policy for \ssp which can not be cheaper than an optimal one.
We combine this with the previous observation to get
\[\V(\s) + \epsilon \Nbar{\s}{\ssp'} \geq \V_{\partp}(\s) \geq \V^*(\s) \; \forall \s \in \Ss^{\partp}\]
which can be rearranged into
\[\V(\s) \geq \V^*(\s) - \epsilon \Nbar{\s}{\ssp} \; \forall \s \in \Ss^{\partp}.\]
On the other hand, we have assumed the upper bound \(\V(\s) \leq \V^*(\s) + \epsilon \Nbar{\s}{\ssp} \; \forall \s \in \Ss^{\partp}\), which lets us squeeze \V thus:
\[\V^*(\s) - \epsilon \Nbar{\s}{\ssp} \leq \V(\s) \leq \V^*(\s) + \epsilon \Nbar{\s}{\ssp} \; \forall \s \in \Ss^{\partp}.\]
As \(\epsilon \to 0\), the errors disappear, so \(\V(\s) = \V^*(\s) \; \forall \s \in \Ss^{\partp}\) and \(\V(\s) \leq \V^*(\s) \; \forall \s \in \Ss\).
Thus, the greedy policy \partp is greedy \wrt \(\V^*\) since \(\V(\s) = \V^*(\s)\) over \partp's envelope, i.e., \partp must be an optimal policy.
\end{proof}

Thus, \cref{thm:action-specified-econsistency-optimal} lets us relax the assumption that algorithms must be \econsistent \wrt the original SSP, and shows that \econsistency \wrt partial SSPs still provides optimality guarantees, provided that some additional requirements are satisfied.
We briefly point out that \econsistency \wrt partial SSPs can be relaxed even further by specifying a closed policy \p, and inferring a partial SSP from \p's envelope.
Algorithms that are in this sense \econsistent \wrt a specified policy are still optimal under similar assumptions to \econsistency \wrt partial SSPs by similar arguments.
The notion of \econsistency \wrt partial SSPs accommodates \cgilao, and equips us to prove that \cgilao is sound and complete, i.e., it always returns some solution and it is optimal.
We start by showing that \cgilao always terminates:

\begin{lemma}\label{lem:partiallyExpandFringes-terminates}
\partiallyExpandFringes{} terminates.
\end{lemma}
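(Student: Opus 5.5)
The plan is a potential argument: the set of artificial goals that can be expanded is finite and only shrinks, so the \texttt{while} loop at \cref{line:cgilao:partiallyExpandFringes:main-loop-start}--\cref{line:cgilao:partiallyExpandFringes:main-loop-end} runs only finitely many times. Each individual iteration is also finite. It computes \(\Q(\s_f,\ac)\) for the finitely many \(\ac \in \A(\s_f)\), calls \addStateActions{}, which only touches the finitely many successors of the selected actions, and then performs a DFS over \currp. That DFS visits each state at most once (``no duplicate states'') inside the finite set \partstates\ \(\subseteq \Ss\), so it terminates.

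The key step is to show that every loop iteration removes its chosen \(\s_f\) from \(\partgoals \setminus \Sg\) permanently, at least for the duration of the call. When \(\s_f\) is expanded, \addStateActions{} removes it from \partgoals. The only way a state enters \partgoals\ inside \addStateActions{} is as an element of \(\supp(\s,\ac) \setminus \partstates\). Since \(\s_f \in \partstates\) already, it can never be re-added during \partiallyExpandFringes{}. I also need \(\A' \neq \emptyset\), so that \(\s_f\) genuinely becomes internal with a greedy action and \currp\((\s_f)\) is defined. This holds because \(\s_f \notin \Sg\), and by the remark after \cref{assump:infinite-improper} we have \(\A(\s_f) \neq \emptyset\), so the argmin over a finite nonempty set is nonempty.

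The termination measure is then \(|\Ss \setminus (\partstates \setminus \partgoals)|\), the number of states of \ssp\ that are not yet internal. It is a nonnegative integer bounded by \(|\Ss|\), and it strictly decreases in each iteration, because \(\s_f\) moves from \(\partgoals\) to internal and no internal state ever becomes a goal again. Hence the loop runs at most \(|\Ss|\) times.

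The main subtlety I expect is checking that updating \envelope\ with newly discovered states, which can add new fringes to \(\envelope \cap (\partgoals \setminus \Sg)\), does not break the measure argument. It does not: new fringes are drawn from the finite \Ss, and each one is expanded at most once. I would also confirm that \envelope\ never grows unboundedly, which follows because it is a duplicate-free subset of \Ss.
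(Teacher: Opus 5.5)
Your proof is correct and follows the same approach as the paper: each pass of the loop turns a fringe state in \(\partgoals \setminus \Sg\) into an internal state, and since \(\Ss\) is finite this can happen only finitely often. Your explicit measure, the check that an expanded state cannot re-enter \(\partgoals\), the argument that \(\A' \neq \emptyset\), and the finiteness of each pass (including the DFS) spell out details the paper's short proof leaves implicit.
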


\begin{proof}
In each pass of \partiallyExpandFringes{}'s main loop~(\cref{alg:cg-ilao}~lines~\ref{line:cgilao:partiallyExpandFringes:main-loop-start}--\ref{line:cgilao:partiallyExpandFringes:main-loop-end}), a new state is removed from the artificial goals \(\partgoals \setminus \Sg\) and added as an internal state.
But the SSP \ssp has finitely-many states, so eventually \partiallyExpandFringes{} must run out of new states to add, and will therefore exit its main loop and terminate.
\end{proof}

\begin{lemma}\label{lem:CGimprovePolicy-terminates}
\CGimprovePolicy{} terminates.
\end{lemma}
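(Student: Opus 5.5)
We need to show the repeat loop of \CGimprovePolicy{} (lines~\ref{line:cgilao:CGimprovePolicy:main-loop-start}--\ref{line:cgilao:CGimprovePolicy:main-loop-end}) exits. Each pass visits the finitely many states of \(\envelope \setminus \partgoals\) and computes finitely many \qvalues over \(\partactions(\s)\), so a single pass is finite. The updates to \colsToCheck only enlarge a subset of the finite set of state-action pairs of \ssp, and so cannot cause non-termination. It therefore suffices to show that the loop condition ``\(\fringe \neq \emptyset\) or \(\currp \neq \oldp\) or \(\residual \leq \epsilon\)'' is met after finitely many passes.

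Inside the loop, \partssp and \envelope are fixed, and \fringe is a fixed input. If \(\fringe \neq \emptyset\), the loop exits after the first pass. Otherwise, suppose the loop never exits. Then \(\currp = \oldp\) after every pass, so the policy is frozen on \envelope. Each pass then performs Gauss--Seidel backups on the envelope states, but every backup realises its minimum at the frozen action \(\oldp(\s)\). Hence the passes are exactly iterative policy evaluation of the fixed policy \oldp on \envelope, with the values of partial goals and of states outside \envelope held constant as terminal costs.

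By construction (post-order DFS of \currp from \sZ, extended in \partiallyExpandFringes{}), \envelope is closed under \oldp up to states of \partgoals, and \(\fringe=\emptyset\) means the reachable goals lie in \(\Sg\) or are held constant. We can therefore view the iteration as VI on the SSP \(\ssp'\), restricted to \envelope, in which each state has the single action \(\oldp(\s)\). This is an SSP with terminal costs, convertible to a standard SSP as in \cref{def:terminal-ssp}.

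\textbf{Main obstacle: properness of \oldp.} The key step is to argue that \oldp is proper on this restricted SSP. Under \cref{assump:infinite-improper}, with the practical strictly-positive-cost assumption, an improper \oldp would make the policy-evaluation values diverge to \(+\infty\) along some cycle that avoids the goals. The first attempt is to show this contradicts greediness. A value that grows without bound at a state \s eventually exceeds \(\Q(\s,\ac)\) for some other action \(\ac\in\partactions(\s)\) whose successors escape the cycle. At that point the greedy choice changes, so \(\currp\neq\oldp\) and the loop exits. If no such escaping action exists in \partactions, the states of the cycle form a trap in \partssp whose every internal action incurs positive cost without reaching a goal. Such a trap cannot exist in \ssp under \cref{assump:infinite-improper} for the corresponding actions, although this part needs care with partial expansions; the fallback is again that the values diverge, which forces a policy change once \(\fringe\) or another greedy action intervenes.

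Once \oldp is known to be proper, we apply the convergence of VI (for any initial \(\V^0\), \(\V^t\to\V^*\) of \(\ssp'\)). This gives \(\residual\to 0\), so \(\residual\leq\epsilon\) after finitely many passes and the loop terminates, contradicting the assumption that it never exits.
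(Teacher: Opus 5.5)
Your reduction to evaluating a frozen \(\oldp\) is fine, but the ``main obstacle'' paragraph does not go through, and it cannot be patched. The claim that a trap ``cannot exist'' conflates two notions. Closure under all of \(\A(\s)\) is what the assumptions exclude. Closure under \(\partactions(\s)\) is what partial expansion can create, and \cref{assump:infinite-improper} only says such a cycle is an improper policy of infinite cost, not that it is absent. The fallback is also false. \(\fringe\) is an input of \CGimprovePolicy{} and is never recomputed inside its loop, and if the trapped states have a single internal action, the greedy choice can never change.

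Concretely, let \(\sZ\) be the only non-goal state, with a deterministic self-loop \(\ac\) of cost \(1\) and an action \(\ac'\) to the goal of cost \(10\), and let \(\h \equiv 0\). This heuristic is admissible, even monotonic, and all assumptions hold. \partiallyExpandFringes{} adds only \(\ac\), because \(\Q(\sZ,\ac) = 1 < 10 = \Q(\sZ,\ac')\), which leaves \(\partgoals = \emptyset\) and \(\fringe = \emptyset\). Every pass of \CGimprovePolicy{} then sets \(\V(\sZ) \gets 1 + \V(\sZ)\), with \(\residual = 1\) and \(\currp = \oldp\), so for \(\epsilon < 1\) the loop never exits. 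The escaping pair \((\sZ,\ac')\) does enter \colsToCheck after the first pass, but \fixViolatedConstrs{} only runs after the loop returns, so that repair mechanism is never reached.

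So what is missing is an extra hypothesis, not a cleverer argument. The paper's proof simply asserts that the loop is VI on the fixed partial SSP and therefore reaches \(\residual \leq \epsilon\). That premise holds when no nonempty subset of \(\envelope \setminus \partgoals\) is closed under the actions in \(\partactions\). In that case, with positive costs, standard SSP theory gives convergence of VI from any initial \(\V\), and your detour through properness of \(\oldp\) and the delicate ``greedy switch'' argument is unnecessary. The example above shows the premise can fail, so the paper's one-line proof has the same hole your proposal exposes.

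A correct version needs an additional condition or a change to the algorithm. For instance, require every internal state in the envelope to keep an exit action in \(\partactions(\s)\), such as the give-up action of the fixed-penalty transformation. That guarantees a proper policy inside the restricted partial SSP, and the VI argument then applies.
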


\begin{proof}
The main loop of \CGimprovePolicy{}~(\cref{alg:cg-ilao}~lines~\ref{line:cgilao:CGimprovePolicy:main-loop-start}--\ref{line:cgilao:CGimprovePolicy:main-loop-end}) is running VI on \(\envelope \setminus \partgoals\) until \econsistency, with additional bookkeeping~(\cref{alg:cg-ilao}~lines~\ref{line:cgilao:improvePolicy:check-violated-constraints-begin}--\ref{line:cgilao:improvePolicy:check-violated-constraints-end}) that does not affect VI.
Given that \partssp and \envelope are fixed, this VI eventually reduces the residual to \(\residual \leq \epsilon\), so \CGimprovePolicy{} exits the main loop and terminates.
Note that the other conditions for termination \(\fringe \neq \emptyset\) or \(\currp \neq \oldp\) may cause \CGimprovePolicy{} to terminate sooner.
\end{proof}

\begin{theorem}\label{thm:cgilao-terminates}
\cgilao terminates.
\end{theorem}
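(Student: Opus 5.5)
The plan is to separate termination of a single pass of the main loop (\cref{alg:cg-ilao}~lines~\ref{line:cgilao:main-loop-start}--\ref{line:cgilao:main-loop-end}) from the claim that there are only finitely many passes. For a single pass, \partiallyExpandFringes{} and \CGimprovePolicy{} terminate by \cref{lem:partiallyExpandFringes-terminates} and \cref{lem:CGimprovePolicy-terminates}. \fixViolatedConstrs{} makes a single sweep over the finite set \colsToCheck, which is a subset of \(\{(\s,\ac) : \s \in \Ss, \ac \in \A(\s)\}\), so it also terminates. The DFS on \cref{line:cgilao:dfs} visits finitely many states. Hence every pass terminates, and what remains is to bound the number of passes.

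The first step is a stabilisation argument. \partssp only ever grows: states leave \partgoals and become internal, \partstates gains successors, and \partactions gains actions, and none of these are ever removed. Since \Ss and \A are finite, only finitely many passes can change \partssp. So there is a pass \(t_0\) after which \partstates, \partgoals and \partactions stay fixed. After \(t_0\), two consequences follow. First, \partiallyExpandFringes{} never finds a fringe state in \envelope, since expanding one would change \partssp. Second, every violated pair \((\s,\ac) \in \colsToCheck\) handled by \fixViolatedConstrs{} must already have \(\ac \in \partactions(\s)\), since otherwise \ac would be added. From then on, the algorithm runs on a fixed partial SSP, and it only performs two kinds of update: Bellman backups over \(\envelope \setminus \partgoals\), and \emph{fix} updates \(\V(\s) \gets \Qsa\) for internal actions with \(\V(\s) > \Qsa + \epsilon\).

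The second step, which I expect to be the main obstacle, is showing that after \(t_0\) the termination condition \(\fringe = \emptyset\), \(\oldp = \currp\), \(\residual \le \epsilon\) eventually holds. The difficulty is that \V is no longer monotone (as the example in \cref{fig:v-decrease-example} shows), and that the two update types interleave. My first attempt is as follows. Every fix update strictly decreases some \(\V(\s)\) by more than \(\epsilon\). Every \qvalue is bounded below by \(0\), since \(\C \ge 0\) and \(\h \ge 0\), and bounded above by a constant \(M\) determined by the fixed finite partial SSP and the finite initial values. I would then argue that the backups on the fixed partial SSP behave like asynchronous VI on the partial SSP with terminal costs \h. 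Its Bellman operator has a unique fixed point, by \cref{assump:infinite-improper} applied to the partial SSP, where artificial goals act as terminal states. A fix update is itself a monotone ``partial backup'' that moves \(\V(\s)\) toward \(\min_{\ac \in \partactions(\s)} \Qsa\). Using the standard VI convergence result quoted in \cref{sec:background}, the combined sequence should converge on the states it keeps touching. Once all values are within \(\epsilon\) of this fixed point, no internal constraint is violated by more than \(\epsilon\), so \fixViolatedConstrs{} makes no changes and returns an empty \(\colsToCheck'\). Likewise, \CGimprovePolicy{} reaches \(\residual \le \epsilon\) without changing \(\V\) by more than \(\epsilon\), and so without changing \currp.

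Finally, I would rule out \currp cycling forever. \currp is only reassigned where \V changes, and there are finitely many greedy policies over the fixed partial SSP, so once \V has settled to within \(\epsilon\) the policy stops changing and \(\oldp = \currp\). Combined with the fact that no fringes remain after \(t_0\), the loop condition on \cref{line:cgilao:whileCondition} is met and \cgilao terminates. If the convergence argument in the second step proves too delicate, my fallback is a potential-function argument. On the finitely many envelope states, each fix lowers \(\sum_\s \V(\s)\) by more than \(\epsilon\). I would then try to show that increases caused by backups can only occur a bounded number of times between fixes, which would bound the number of fixes directly.
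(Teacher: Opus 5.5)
Your proposal follows the same route as the paper's proof: the partial SSP (and with it the fringe) stabilises because there are finitely many states and actions, and afterwards the remaining updates are treated as value iteration on a fixed sub-problem, so the residual drops below \(\epsilon\), \currp stops changing, and the main loop exits. The paper makes your ``converges on the states it keeps touching'' precise with the set \(X\) of states updated infinitely often, which induces its own partial SSP with all other states frozen, so the relevant fixed point belongs to that sub-problem rather than to all of \partssp; it also counts the \fixViolatedConstrs{} updates as Bellman backups on \(X\) (so the interleaving you flag is asserted rather than proved there too), and it gets policy stabilisation from \V no longer being updated, not from finiteness of the policy set, which on its own would not rule out \currp cycling.
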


\begin{proof}
For contradiction, suppose \cgilao does not terminate.
By \cref{lem:partiallyExpandFringes-terminates} and \cref{lem:CGimprovePolicy-terminates} we know that the functions that \cgilao calls must terminate, so if \cgilao does not terminate, it must be because it gets stuck in its main loop~(\cref{alg:cg-ilao}~lines~\ref{line:cgilao:main-loop-start}--\ref{line:cgilao:main-loop-end}).
Suppose that \(\fringe \neq \emptyset\) for infinitely-many steps.
In each pass of the main loop where \(\fringe \neq \emptyset\), \partiallyExpandFringes{} changes at least one of \currp's fringe states to an internal state.
But this is impossible, because \ssp only has finitely-many states and we only allow fringe states to be expanded once, so \cgilao must eventually run out of states to partially expand, and after finitely-many steps \fringe remains empty.
Moreover, after finitely-many steps \partssp remains constant by a similar argument, since \ssp only has finitely-many actions that can be added to \partssp.
Then there must be a finite set of states \(X \subseteq \partstates\) that are updated with Bellman
backups infinitely often by \CGimprovePolicy{} or \fixViolatedConstrs{}.
But \(X\) induces a new partial SSP, and applying Bellman backups infinitely often to all \(X\) solves this new partial SSP with VI.
Consequently, \V converges to a fixed point and the residual will be less than \(\epsilon\) in finite time.
Since \V will not be updated, \currp will also no longer be updated.
At this point all the termination conditions are satisfied~(\cref{alg:cg-ilao}~\cref{line:cgilao:whileCondition}), forcing \cgilao to terminate, giving us the desired contradiction.
\end{proof}

We now work towards proving that \cgilao is optimal by showing that it satisfies \econsistency \wrt its partial SSPs, and that it satisfies the extra conditions required by \cref{thm:action-specified-econsistency-optimal}.
To help with this, we prove some simpler properties of \cgilao first:

\begin{lemma}\label{lem:violations-are-in-gamma}
At the end of \cgilao's main loop~(\cref{alg:cg-ilao}~after~\cref{line:cgilao:main-loop-end-minus-one}), if there is \(\s \in \partstates \setminus \partgoals\) and \(\ac \in \A(\s)\) such that \(\V(\s) > \Qsa + \epsilon\), then \((\s, \ac) \in \colsToCheck\) or \(\V(\s) \leq \V^*(\s)\).
\end{lemma}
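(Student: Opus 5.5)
Throughout, we call \((\s,\ac)\) \emph{violated} if \(\V(\s) > \Qsa + \epsilon\). The plan is an invariant argument over the run of \cgilao. We strengthen the statement slightly and prove it holds at every point where \colsToCheck has just been (re)assigned, namely at the end of \CGimprovePolicy{} and at the end of \fixViolatedConstrs{}. The invariant is: every violated pair \((\s,\ac)\) with \(\s \in \partstates \setminus \partgoals\) and \(\ac \in \A(\s)\) either lies in \colsToCheck or has \(\V(\s) \leq \V^*(\s)\). We then do induction on the sequence of updates to \V, \partssp and \colsToCheck.

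\textbf{Base case.} Initially only \sZ exists and \(\V = \h\) is admissible, so every state satisfies \(\V(\s) \leq \V^*(\s)\) and the claim holds trivially.

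\textbf{Inductive step.} We examine each place where \V changes or a state becomes internal, and show that a newly violated pair is either recorded in \colsToCheck or covered by \(\V(\s) \leq \V^*(\s)\). We use the three-case analysis that motivated the separation oracle.
\begin{enumerate}
\item In \partiallyExpandFringes{}, a newly internal state \(\s_f\) was an artificial goal with \(\V(\s_f) = \h(\s_f) \leq \V^*(\s_f)\), and \V is not changed there. Its pairs are therefore covered by admissibility.
\item In \CGimprovePolicy{}, assume \(\V(\s)\) changes by more than \incDecTol. If it increases, the external pairs \(\successors(\s,\ssp,\partssp)\) are added. The internal pairs cannot be violated, since \(\V(\s) = \min_{\ac \in \partactions(\s)} \Qsa\). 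If it decreases, \(\predecessors(\s,\ssp,\partssp)\) is added, which covers every pair whose \qvalue drops.
\item In \fixViolatedConstrs{}, \(\V(\s)\) only decreases, to \(\Qsa\). The predecessors are added to \(\colsToCheck'\). The pairs at \s itself other than \ac could still be violated, but they were in \colsToCheck unless covered, and were processed in the same loop.
\end{enumerate}

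\textbf{Discarding pairs.} We must also show that removing pairs from \colsToCheck is safe. \fixViolatedConstrs{} discards a pair only if it is non-violated at the time of checking, or after fixing it (then \(\V(\s) = \Qsa\)). Any later change that could re-violate it passes through one of the cases above. Loop order matters here, since an earlier fix may decrease \(\V(\s')\) and invalidate a pair already discarded. Such a pair is a predecessor of \(\s'\) and is therefore caught in \(\colsToCheck'\).

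\textbf{Main obstacle.} The hardest part is the tolerance \incDecTol, here \(\incDecTol = \epsilon\). Changes of at most \incDecTol are not recorded, yet many of them could accumulate into a violation exceeding \(\epsilon\). The first attempt is to compare \(\V(\s)\) against its value at the last recorded check, so that an unrecorded drift stays within \(\epsilon\) of a previously non-violated state. If that fails, we invoke the escape clause \(\V(\s) \leq \V^*(\s)\): a state whose value only drifts upward by small amounts along Bellman backups on \partssp from an admissible start stays below \(\V^*\), unless some missing action was ignored. In the latter case the missing action's pair was already placed in \colsToCheck by the increase rule. Making this monotonicity-and-drift bookkeeping precise is the step that will need the most care.
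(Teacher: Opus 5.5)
Your case analysis matches the paper's own proof of this lemma. Newly internal states are covered by \(\V = \h \leq \V^*\), increases record \successors, decreases record \predecessors, and \fixViolatedConstrs{} only lowers \V. Your loop-order argument is also right: a pair discarded earlier in the loop is re-caught as a predecessor. That part is sound provided every change of \V is recorded.

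The genuine gap is your ``main obstacle'': with \(\incDecTol = \epsilon\) the statement is false, so no drift bookkeeping can close it. The paper's proofs explicitly assume \(\incDecTol = 0\) for this reason. The right-hand SSP of \cref{fig:pathological-case-for-increase-decrease}, with \(n \geq 4\), is a counterexample:
\begin{itemize}
\item \sZ is expanded with \(\ac_0\) alone, since \(\Q(\sZ,\ac_0) = n+1 < n+1+\epsilon = \Q(\sZ,\ac'_0)\).
\item Each later iteration raises \(\V(\sZ)\) by \(\frac{2\epsilon}{3} < \incDecTol\), and no value ever decreases, so \colsToCheck stays empty throughout.
\item At termination \(\V(\sZ) = n+1+\frac{2n\epsilon}{3}\). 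This exceeds both \(\Q(\sZ,\ac'_0)+\epsilon = n+1+2\epsilon\) and \(\V^*(\sZ) = n+1+\epsilon\), while \((\sZ,\ac'_0) \notin \colsToCheck\).
\end{itemize}
Both disjuncts of the conclusion fail.

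Both of your proposed escapes break on this example. Measuring \(\V(\s)\) against its value at the last recorded check bounds nothing, because sub-\incDecTol increments accumulate across iterations without limit. The claim that the ignored action's pair ``was already placed in \colsToCheck by the increase rule'' is exactly what fails. The rule at \cref{line:cgilao:v-increases} fires only when a single backup raises \(\V(\s)\) by more than \incDecTol. Upward drift caused by the missing \(\ac'_0\) is therefore invisible to it, and the value climbs above \(\V^*\) unrecorded.

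The fix is to prove the lemma for \(\incDecTol = 0\), as the paper does. Then \(\V(\s)\) can only increase inside \CGimprovePolicy{}, and every such increase adds \successors. Your invariant (each violated internal pair lies in \colsToCheck or has \(\V(\s) \leq \V^*(\s)\)) is then preserved step by step. The paper treats \(\incDecTol = \epsilon\) only as a numerical convenience and explicitly flags that it breaks the guarantees.
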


\begin{proof}
We prove by induction over \(n\), the number of iterations of \cgilao.
The base case is \(n = 0\), i.e., the partial SSP \partssp has been initialised and never updated, so \(\partstates \setminus \partgoals = \emptyset\), making the claim is vacuously true.
Our inductive hypothesis is that the lemma holds after \(n\) iterations, and we now prove that this makes the lemma hold after \(n+1\) iterations.
We step through the possible ways that the lemma's precondition can be made true, and how in those cases the algorithm ensures that the relevant column is in \colsToCheck{}:
\begin{enumerate}
\item If \(\V(\s) > \Qsa + \epsilon\) at the start of iteration \(n+1\), then by the inductive hypothesis, \((\s, \ac) \in \colsToCheck{}\) or \(\V(\s) \leq \V^*(\s)\).
So, we only concern ourselves with any constraint violations that are introduced in this iteration.
\item The post-order DFS traversal~(\cref{alg:cg-ilao}~\cref{line:cgilao:dfs}) does not affect \(\partstates \setminus \partgoals\) nor \V, and therefore can not introduce any new constraint violations.
\item \partiallyExpandFringes{}~(\cref{alg:cg-ilao}~\cref{line:cgilao:call-to-partiallyExpandFringes}) does not affect \V, but may introduce new states into \(\s \in \partstates \setminus \partgoals\).
It expands fringe states \s in \(\Ss^{\currp}\) by bringing \s into \(\partstates \setminus \partgoals\), adding some action \(\widehat{\ac} \in \A(\s)\) into \(\partactions(\s)\), and setting \(\currp(\s) \gets \widehat{\ac}\).
This may introduce constraint violations but \s was until now a fringe state, so \(\V(\s) = \h(\s) \leq \V^*(\s)\).
\item In \CGimprovePolicy{}~(\cref{alg:cg-ilao}~\cref{line:cgilao:call-to-CGimprovePolicy}), a new \(\V(\s) > \Qsa + \epsilon\) may occur if \(\V(\s)\) is increased or \Qsa is decreased.
\CGimprovePolicy{} tracks the appropriate \successors or \predecessors for \(\V\) increases and decreases respectively.
\item \fixViolatedConstrs{}~(\cref{alg:cg-ilao}~\cref{line:cgilao:call-to-fixViolatedConstrs}) ensures for each \((\s, \ac) \in \colsToCheck{}\) that \(\V(\s) \leq \Qsa + \epsilon\) before removing \((\s, \ac)\) from \colsToCheck{}.
\(\V(\s)\) can not increase in \fixViolatedConstrs{}, since we are setting \(\V(\s) \gets \min\{\V(\s), \Qsa\}\) for \((\s, \ac) \in \colsToCheck{}\).
However, new instances of \(\V(\s) > \Qsa + \epsilon\) can be introduced if some \Qsa is decreased, which is tracked by \fixViolatedConstrs{}.
\end{enumerate}
So indeed, any constraint violations \(\V(\s) > \Qsa + \epsilon\) introduced by \cgilao in lines~\ref{line:cgilao:dfs}--\ref{line:cgilao:call-to-fixViolatedConstrs} are tracked in \colsToCheck{} or \(\V(\s) \leq \V^*(\s)\).
\end{proof}

Importantly, this lemma applies only to internal states \(\s \in \partstates \setminus \partgoals\), but in terms of actions applies to all applicable actions \(\ac \in \A(\s)\), not only internal ones.
Equipped with this invariant, we prove two important properties of \cgilao when \(\colsToCheck = \emptyset\).

\begin{lemma}\label{lemma:empty-cols-to-check-implies-admissibility}
At the end of \cgilao's main loop~(\cref{alg:cg-ilao}~after~\cref{line:cgilao:main-loop-end-minus-one}), \(\colsToCheck = \emptyset\) implies \V is admissible up to an error, i.e., \(\V(\s) \leq \V^*(\s) + \epsilon \Nbar{\s}{\ssp} \; \forall \s \in \Ss\).
\end{lemma}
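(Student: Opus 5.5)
We would prove the bound by showing that, when \(\colsToCheck = \emptyset\), \V is an approximately monotonic value function \wrt the original SSP \ssp on the states where it matters. Then we compare \V against the cost-to-go of an optimal policy using the same error machinery as in \cref{thm:vi-error}.

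The first step is to collect the local inequalities. By \cref{lem:violations-are-in-gamma}, for every internal state \(\s \in \partstates \setminus \partgoals\) and every \(\ac \in \A(\s)\), either \(\V(\s) \leq \Qsa + \epsilon\) (since \((\s,\ac) \notin \colsToCheck = \emptyset\)) or \(\V(\s) \leq \V^*(\s)\). For artificial goals and states outside \partstates, \V is still the heuristic, so \(\V(\s) = \h(\s) \leq \V^*(\s)\). For true goals, \(\V(\s) = 0 = \V^*(\s)\). Let \(B\) be the set of states where \(\V(\s) \leq \V^*(\s)\) is already known: goals, non-internal states, and internal states falling in the second case of the lemma. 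Every state outside \(B\) is internal and satisfies \(\V(\s) \leq \Qsa + \epsilon\) for all \(\ac \in \A(\s)\), in particular for \(\ac = \p^*(\s)\).

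The second step is a one-step unrolling argument along a fixed optimal policy \(\p^*\). Define \(D(\s) = \V(\s) - \V^*(\s)\). On \(B\) we have \(D \leq 0\). Off \(B\), taking \(\ac = \p^*(\s)\) and using \(\V^*(\s) = \Q^*(\s,\p^*(\s))\), we get
\[D(\s) \leq \epsilon + \sum_{\s'} \pr(\s'|\s,\p^*(\s))\, D(\s').\]
We then regard \(B\) as a new goal set \(\Sg' \supseteq \Sg\) and consider \(\ssp_{\Sg'}\). Since \(\p^*\) is proper for \ssp, it reaches \(\Sg'\) with probability 1 from any \s. Iterating the inequality, with \(D^+ \leq 0\) on absorption, gives \(D(\s) \leq \epsilon \, N(\s, \p^*, \ssp_{\Sg'})\). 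This is the same argument as the proof of \cref{thm:vi-error}, or we can invoke that theorem directly on the policy-restricted SSP as in the proof of \cref{thm:action-specified-econsistency-optimal}. By \cref{def:Nbar}, \(N(\s,\p^*,\ssp_{\Sg'}) \leq \Nbar{\s}{\ssp}\), which yields the claim. If \(\p^*\) is only defined on its envelope, we fix an optimal (proper) policy from every state, which exists because \(\V^*\) is finite under \cref{assump:reachability,assump:infinite-improper}.

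The main obstacle is making the unrolling rigorous. The clean route is the following. Let \(\tau\) be the hitting time of \(B\) under \(\p^*\), which is almost surely finite with finite expectation. The process \(D(X_t) - \epsilon t\) stopped at \(\tau\) is a supermartingale with bounded increments, because the state space is finite. By optional stopping, \(D(\s) \leq \mathbb{E}[D(X_\tau)] + \epsilon\,\mathbb{E}[\tau] \leq \epsilon\, \mathbb{E}[\tau]\). A second, less serious subtlety is checking that the states in \(B\) classified via \cref{lem:violations-are-in-gamma} need not satisfy the one-step inequality, which is fine because they are absorbing in the argument.
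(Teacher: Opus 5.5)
Your proposal is correct, and its skeleton is the paper's. The complement of your set \(B\) is exactly the paper's \(\tilde{\Ss}\) (internal states with \(\V(\s) > \V^*(\s)\)). On that set, both you and the paper read off \(\V(\s) \leq \Qsa + \epsilon\) for every \(\ac \in \A(\s)\) from \cref{lem:violations-are-in-gamma} with \(\colsToCheck = \emptyset\).

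The routes part at the last step. The paper restricts \(\ssp\) to \(\tilde{\Ss}\), argues that VI would leave \(\V\) unchanged there, invokes \cref{thm:vi-error} as a black box, and then bounds \(\N{\s}{\V}{\tilde{\ssp}}\) by \(\Nbar{\s}{\ssp}\). You instead prove directly only the half of that theorem that is needed. Unrolling the one-sided inequality along a fixed optimal proper policy \(\p^*\), with \(B\) absorbing and \(D \leq 0\) on it, gives \(\V(\s) - \V^*(\s) \leq \epsilon\, N(\s, \p^*, \ssp_{B}) \leq \epsilon \Nbar{\s}{\ssp}\).

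The paper's version is shorter because it reuses existing machinery. Yours fits the available hypotheses more exactly, in three ways:
\begin{enumerate}
\item Since \(\colsToCheck\) only tracks violations of \(\V(\s) \leq \Qsa + \epsilon\), an empty \(\colsToCheck\) yields only this upper inequality. It does not give the two-sided residual bound that an appeal to \cref{thm:vi-error} presupposes.
\item You compare directly with \(\V^*\) of \(\ssp\). The paper compares with the optimum of \(\tilde{\ssp}\), which must then be related back to \(\V^*\) through \(\V \leq \V^*\) on the boundary.
\item You need properness only of \(\p^*\), not of the greedy policy \(\p_{\V}\) that enters \(\N{\s}{\V}{\tilde{\ssp}}\).
\end{enumerate}

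One slip to fix. From \(D(X_t) \leq \epsilon + \mathrm{E}[D(X_{t+1}) \mid X_t]\) for \(t < \tau\), the stopped process \(D(X_{t \wedge \tau}) - \epsilon\,(t \wedge \tau)\) is not a supermartingale. It is \(D(X_{t \wedge \tau}) + \epsilon\,(t \wedge \tau)\) that is a submartingale. Optional stopping for that process (bounded increments, \(\mathrm{E}[\tau] < \infty\)) yields exactly the inequality \(D(\s) \leq \mathrm{E}[D(X_\tau)] + \epsilon\,\mathrm{E}[\tau]\) that you use, so your conclusion stands.
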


\begin{proof}
All fringe, external, and true goal states \(\s \in \partgoals \cup (\Ss \setminus \partstates)\), are initialised with \(\V(\s) \gets \h(\s) \leq \V^*(\s)\) and never have their \(\V(\s)\) updated.
For the remaining states, i.e., the internal states \(\partstates \setminus \partgoals\), we can use \cref{lem:violations-are-in-gamma}.
We ignore the internal states \s with \(\V(\s) \leq \V^*(\s)\), because they immediately satisfy the requirement of the lemma, and we focus on the internal states with \(\V(\s) > \V^*(\s)\), which we call
 \(\tilde{\Ss}\).
We assume that \(\colsToCheck = \emptyset\), so by the contrapositive of \cref{lem:violations-are-in-gamma} it must be the case that  \(\V(\s) \leq \Qsa + \epsilon \; \forall \s \in \tilde{\Ss}\).
It follows that \(\V(\s) \leq \V^*(\s) + \epsilon \Nbar{\s}{\ssp} \; \forall \s \in \tilde{\Ss}\).
To see this: if we apply VI to \(\tilde{\ssp}\), the \ssp restricted to the states \(\tilde{\Ss}\), \V will remain unaffected because \(\residual(\s) \leq \epsilon\) for these states, and then we can apply \cref{thm:vi-error} to get \(\V(\s) \leq \V^*(\s) + \epsilon \N{\s}{\V}{\tilde{\ssp}} \; \forall \s \in \tilde{\Ss}\).
Then, \(\N{\s}{\V}{\tilde{\ssp}} \leq \Nbar{\s}{\ssp} \; \forall \s \in \tilde{\Ss}\) (\cref{def:Nbar}), giving the desired bound.
Thus, we have addressed all states in \Ss, proving our claim.
\end{proof}

\begin{lemma}\label{lemma:cols-to-check-is-empty-upon-termination}
Upon termination, \cgilao has \(\colsToCheck = \emptyset\).
\end{lemma}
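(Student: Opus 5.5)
The plan is to argue directly from the structure of the last iteration of \cgilao's main loop, using the interaction between \fixViolatedConstrs{} and the residual. The key observation is that the value of \colsToCheck{} seen by the termination test (\cref{alg:cg-ilao}~\cref{line:cgilao:whileCondition}) is exactly the set \(\colsToCheck'\) returned by the call to \fixViolatedConstrs{} at \cref{line:cgilao:call-to-fixViolatedConstrs}. This set starts empty inside \fixViolatedConstrs{}, and is only enlarged inside the body of its \texttt{for} loop. That body runs only for pairs \((\s,\ac)\in\colsToCheck\) with \(\V(\s) > \Qsa + \epsilon\). So it suffices to show that, in the terminating iteration, this body is never executed.

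Step one: suppose, for contradiction, that in the final iteration \fixViolatedConstrs{} enters its loop body for some pair \((\s,\ac)\). The first time this happens, the guard gives \(\V(\s) - \Qsa > \epsilon\). Immediately afterwards, \cref{line:cgilao:fixViolatedConstrs:track-residual} sets \(\residual \gets \max(\V(\s)-\Qsa, \residual)\), so \(\residual > \epsilon\).

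Step two: later iterations of that loop only ever apply \(\max\) to \residual, so it cannot decrease again within \fixViolatedConstrs{}. Hence the returned \residual satisfies \(\residual > \epsilon\). Since no code between the return of \fixViolatedConstrs{} and the termination test modifies \residual, the condition \(\residual \le \epsilon\) in \cref{line:cgilao:whileCondition} fails. This contradicts the assumption that the algorithm terminates after this iteration.

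Step three: therefore the loop body is never executed in the terminating iteration. The returned \(\colsToCheck'\) is then the initial \(\emptyset\), and \cgilao terminates with \(\colsToCheck=\emptyset\). By \cref{thm:cgilao-terminates} such a final iteration exists, so the statement is not vacuous.

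The main (mild) obstacle is bookkeeping rather than mathematics. I must check carefully that the \colsToCheck{} inspected at termination is indeed the freshly built \(\colsToCheck'\), and not the input \colsToCheck, which may be nonempty and is simply discarded. I must also check that the \residual used in the termination test is the one updated inside \fixViolatedConstrs{}, and not the value returned by \CGimprovePolicy{}. Both follow by reading the assignments on \cref{line:cgilao:call-to-fixViolatedConstrs}.
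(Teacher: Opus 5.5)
Your proposal is correct and follows essentially the same argument as the paper. A nonempty $\Gamma$ at termination must have been produced by the final call to \textsc{Fix-Constrs}, whose loop body only runs for a pair with $V(s) > Q(s,a) + \epsilon$; this forces the returned residual above $\epsilon$ and contradicts the termination test. Your extra bookkeeping (that the returned $\Gamma'$ starts empty, and that the residual only grows via $\max$ within that call) makes explicit what the paper's shorter proof leaves implicit.
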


\begin{proof}
For contradiction, suppose \cgilao does terminate with \(\colsToCheck \neq \emptyset\).
Then, \colsToCheck must have been populated by the final call to \fixViolatedConstrs{}, but constraints are only added to \(\colsToCheck\) in \fixViolatedConstrs{} when \(\V(\s) > \Qsa + \epsilon\), and therefore \(\residual \geq \V(\s) - \Qsa > \epsilon\).
This yields the desired contradiction because \cgilao can not terminate until \(\residual \leq \epsilon\).
\end{proof}

Finally, we are ready to prove that \cgilao is optimal.

\begin{theorem}
\cgilao is optimal.
\end{theorem}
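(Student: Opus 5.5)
The plan is to verify the three hypotheses of \cref{thm:action-specified-econsistency-optimal} for the pair \((\partssp, \V)\) that \cgilao returns, and then invoke that theorem. By \cref{thm:cgilao-terminates} the algorithm terminates, so it suffices to reason about the final state. At that point the termination condition of the main loop holds: \(\fringe = \emptyset\), \(\oldp = \currp\), and \(\residual \leq \epsilon\).

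\textbf{Step 1: \econsistency \wrt \partssp.}
Since the last iteration ended with \(\currp = \oldp\), the final pass of \CGimprovePolicy{} applied Bellman backups (over \(\partactions\)) to every state of \envelope with residual at most \(\epsilon\), and it changed no policy choices. After that pass, \fixViolatedConstrs{} made no violation fixes. Any fix would have set \(\residual > \epsilon\) at \cref{line:cgilao:fixViolatedConstrs:track-residual}, which contradicts termination. Hence \V is unchanged after the last backup pass.

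It remains to show that \envelope covers \(\Ss^{\partp}\), which requires arguing that \(\currp\) agrees with \partp on its envelope. Each state in \envelope was backed up in the last pass with \(\currp(\s)\) set to a minimiser over \(\partactions(\s)\). Combined with \(\currp = \oldp\), this means \currp is greedy on its own envelope, so \(\Ss^{\partp} = \Ss^{\currp} \subseteq \envelope\) under \cref{assump:greedy-policy-tie-breaking}. Therefore \(\residual(\s) \leq \epsilon\) for all \(\s \in \Ss^{\partp}\), computed over \(\partactions\).

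\textbf{Step 2: approximate admissibility.}
By \cref{lemma:cols-to-check-is-empty-upon-termination}, \(\colsToCheck = \emptyset\) at termination. \Cref{lemma:empty-cols-to-check-implies-admissibility} then gives \(\V(\s) \leq \V^*(\s) + \epsilon \Nbar{\s}{\ssp}\) for all \(\s \in \Ss\), and in particular on \(\Ss^{\partp}\).

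\textbf{Step 3: no artificial goals.}
At termination \(\fringe = \Ss^{\currp} \cap (\partgoals \setminus \Sg) = \emptyset\). By the identification \(\Ss^{\partp} = \Ss^{\currp}\) from Step 1, this gives \(\Ss^{\partp} \cap (\partgoals \setminus \Sg) = \emptyset\). With all three hypotheses verified, \cref{thm:action-specified-econsistency-optimal} yields that \partp becomes optimal as \(\epsilon \to 0\), i.e., \cgilao is optimal.

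\textbf{Main obstacle.}
The delicate point is the identification of \currp with \partp in Step 1. The paper notes that \currp only approximates \partp and is updated only where \V changes. The argument must therefore use the fact that the final \CGimprovePolicy{} pass touched every state of \envelope \(= \Ss^{\currp}\) and reset \(\currp\) greedily there. One also needs that \fixViolatedConstrs{} altered neither \V nor \currp on that envelope in the last round, which follows from \(\residual \leq \epsilon\). If this is not airtight, for example because \V changed during the pass after a state was processed, I would fall back on the \(\currp = \oldp\) condition plus one more stable pass. Then every backup is within \(\epsilon\) and the policy envelope is fixed, so \(\Ss^{\partp} \subseteq \envelope\).
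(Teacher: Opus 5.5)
Your proposal is correct and follows essentially the same route as the paper. Like the paper, you invoke \cref{thm:action-specified-econsistency-optimal} and verify its three hypotheses: \econsistency from the termination conditions \(\residual \leq \epsilon\) and \(\currp = \oldp\), approximate admissibility from \cref{lemma:cols-to-check-is-empty-upon-termination} together with \cref{lemma:empty-cols-to-check-implies-admissibility}, and the absence of artificial goals from \(\fringe = \emptyset\). Your extra care in identifying \currp with \partp (including the within-pass Gauss--Seidel subtlety you flag) goes beyond the paper, which simply asserts \(\currp = \partp\) at termination because otherwise a policy change would have occurred.
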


\begin{proof}
Suppose that \cgilao terminates with the value function \V, the partial SSP \partssp, and the candidate policy \currp.
Upon termination, we know that \(\currp = \partp\), otherwise a policy change must have occurred.
Then, \V satisfies \econsistency \wrt \partssp, since \(\residual \leq \epsilon\) is one of \cgilao's termination conditions (\cref{alg:cg-ilao}~\cref{line:cgilao:whileCondition}).
Now, it suffices to show \(\V(\s) \leq \V^*(\s) + \epsilon \Nbar{\s}{\ssp} \; \forall \s \in \Ss\) and that \partp does not encounter any artificial goals \(\partgoals \setminus \Sg\), and then we can apply \cref{thm:action-specified-econsistency-optimal} to conclude that \cgilao is optimal.
Indeed, upon termination, \cgilao has \(\colsToCheck = \emptyset\) by \cref{lemma:cols-to-check-is-empty-upon-termination}, so we can apply \cref{lemma:empty-cols-to-check-implies-admissibility} to get \(\V(\s) \leq \V^*(\s) + \epsilon \Nbar{\s}{\ssp} \; \forall \s \in \Ss\).
We also know that \partp does not encounter any artificial goals thanks to \cgilao's termination condition \(\fringe \neq \emptyset\)~(\cref{alg:cg-ilao}~\cref{line:cgilao:whileCondition}).
\end{proof}

\label{pg:incDecTol-discussion}
In these proofs, we have assumed that \(\incDecTol = 0\) in order to guarantee that all cost increases and decreases are tracked.
In practice, we use \(\incDecTol > 0\) due to the inherent numerical instability issues of floating-point number representations, and in particular it is convenient to choose \(\incDecTol = \epsilon\), to minimise the number of parameters.
This is not problematic in our benchmarks, but one must be careful because \(\incDecTol > 0\) can introduce arbitrarily large errors into the value function.
For example, consider the SSPs in \cref{fig:pathological-case-for-increase-decrease}, with initial value functions shown in the bottom of each node.
For the SSP on the left, \cgilao with \(\incDecTol = \epsilon\) does the following:

\begin{enumerate}[label=Iter. \arabic*, leftmargin=*]
\item Partially expands \sZ with \(\ac_0\).
\item Expands \(\s_1\) with \(\ac_1\) and updates \(\V(\s_1) = 3 + \frac{2\epsilon}{3}, \V(\sZ) = 4 + \frac{2\epsilon}{3}\).
The increases to \(\V(\s_1)\) and \(\V(\sZ)\) are \(\frac{2\epsilon}{3} < \eta = \epsilon\), so nothing is added to \colsToCheck.
\item Expands \(\s_2\) with \(\ac_2\) and updates \(\V(\s_2) = 2 + \frac{2\epsilon}{3}, \V(\s_1) = 3 + \frac{4\epsilon}{3}, \V(\sZ) = 4 + \frac{4\epsilon}{3}\).
Again, the increases to \V are \(\frac{2\epsilon}{3} < \eta = \epsilon\), so nothing is added to \colsToCheck.
\item Expands \(\s_3\) with \(\ac_3\) and updates \(\V(\s_3) = 1 + \frac{2\epsilon}{3}, \V(\s_2) = 2 + \frac{4\epsilon}{3}, \V(\s_1) = 3 + \frac{6\epsilon}{3}, \V(\sZ) = 4 + \frac{6\epsilon}{3}\).
For the last time, the increases to \V are \(\frac{2\epsilon}{3} < \eta = \epsilon\), so nothing is added to \colsToCheck.
\item There are no fringes left to expand, so \cgilao does one pass of \improvePolicy{} which leaves \V unchanged and exits with \(\V(\sZ) = 4 + 2\epsilon\).
The algorithm terminates.
\end{enumerate}
Thus, \cgilao finds a policy for the SSP on the left, which is suboptimal by \(\epsilon\).
\cgilao behaves similarly on the right SSP, accumulating an error of \((2k-1)\epsilon\) which gets arbitrarily large as the number of states \(n\) grows.
Clearly, as \(\incDecTol \to 0\) this error disappears, so in particular \(\incDecTol = \epsilon\) works in that sense, but one has to be careful.
Regardless of the choice of \(\epsilon\), we now have the additional theoretical issue that for any \incDecTol it is possible to construct a pathological SSP that makes \cgilao's policy arbitrarily bad.

\begin{figure}[t]
\centering
\scalebox{0.8}{
\begin{tikzpicture}
\begin{scope}[every node/.style={circle split, thick, draw, minimum size=0.6cm, inner sep=0.04cm}]
        \coordinate (-1) at (-1, 0);
        \node (0) at (0, 0) {\(\sZ\) \nodepart{lower} \(4\)};
        \node (1) at (2, 0.8) {\(\s_1\) \nodepart{lower} \(3\)};
        \node (2) at (4, 0.8) {\(\s_2\) \nodepart{lower} \(2\)};
        \node (3) at (6, 0.8) {\(\s_3\) \nodepart{lower} \(1\)};
        \node[double] (g) at (8, 0) {\(\s_g\) \nodepart{lower} \(0\)};
\end{scope}
\begin{scope}[every edge/.style={draw=black, very thick}]
        \path [->] (-1) edge (0);
        \path [->] (0) edge[] node[above] {1} node[below] {\(\ac_0\)} (1);
        \path [->] (1) edge[] node[above] {\(1 + \frac{2\epsilon}{3}\)} node[below] {\(\ac_1\)} (2);
        \path [->] (2) edge[] node[above] {\(1 + \frac{2\epsilon}{3}\)} node[below] {\(\ac_2\)} (3);
        \path [->] (3) edge[] node[above] {\(1 + \frac{2\epsilon}{3}\)} node[below] {\(\ac_3\)} (g);
        \path [->] (0) edge[bend right=15] node[above] {\(4 + \epsilon\)} node[below] {\(\ac'_0\)} (g);
\end{scope}
\end{tikzpicture}
\hspace{5mm}
\begin{tikzpicture}
\begin{scope}[every node/.style={circle split, thick, draw, minimum size=0.8cm, inner sep=0.04cm}]
        \coordinate (-1) at (-1, 0);
        \node (0) at (0, 0) {\(\sZ\) \nodepart{lower} \(n \hspace{-0.1cm} + \hspace{-0.1cm} 1\)};
        \node (1) at (2, 0.8) {\(\s_1\) \nodepart{lower} \(n\)};
        \node (n) at (6, 0.8) {\(\s_n\) \nodepart{lower} \(1\)};
        \node[double] (g) at (8, 0) {\(\s_g\) \nodepart{lower} \(0\)};
        \coordinate (spaceLeft) at (3.5, 0.8);
        \coordinate (spaceRight) at (4.5, 0.8);
\end{scope}
\begin{scope}[every edge/.style={draw=black, very thick}]
        \path [->] (-1) edge (0);
        \path [->] (0) edge[] node[above] {1} node[below] {\(\ac_0\)} (1);
        \path [->] (1) edge[] node[above] {\(1 + \frac{2\epsilon}{3}\)} node[below] {\(\ac_1\)} (spaceLeft);
        \path [-] (spaceLeft) edge[dotted] (spaceRight);
        \path [->] (spaceRight) edge[] node[above] {\(1 + \frac{2\epsilon}{3}\)} node[below] {\(\ac_{n-1}\)} (n);
        \path [->] (n) edge[] node[above] {\(1 + \frac{2\epsilon}{3}\)} node[below] {\(\ac_n\)} (g);
        \path [->] (0) edge[bend right=15] node[above] {\(n + 1 + \epsilon\)} node[below] {\(\ac'_0\)} (g);
\end{scope}
\end{tikzpicture}}
\caption{SSPs where \cgilao with a large \incDecTol (\(\incDecTol \geq \epsilon\)) accumulates a large error.
The left SSP ends up with an error of \(\epsilon\) and \(\V(\sZ) = 4 + 2\epsilon\), and the right SSP can have an arbitrarily large error of \((2k-1)\epsilon\) with \(\V(\sZ) = n + 1 + 2k\epsilon\) for \(k = 3n\).
}
\label{fig:pathological-case-for-increase-decrease}
\end{figure}
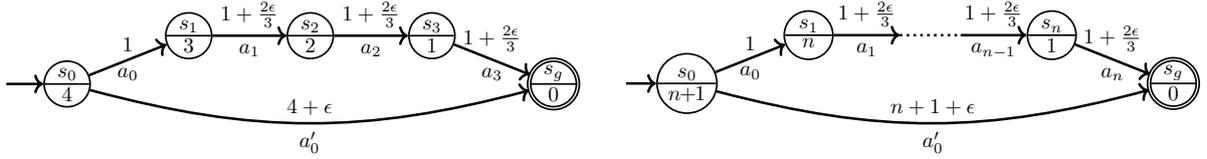

We note that although \cgilao is not \econsistent \wrt \ssp in general, it becomes \econsistent \wrt \ssp if \h is monotonic. %
Also, we can make \cgilao \econsistent \wrt \ssp by modifying \partiallyExpandFringes{} so that for each expanded fringe \(\s_{f} \in \envelope \cap (\partgoals \setminus \Sg)\), we add \(\A(\s_{f})\) to \colsToCheck.
This addresses the edge case in step 3.\ of \cref{lem:violations-are-in-gamma}'s proof.
As explained before, it is not important that \cgilao is \econsistent \wrt \ssp for the normal use-case of \cgilao.

\section{Different Expansion Methods for \cgilao}\label{sec:cgilao-expansions}

In this section we explore different methods that \cgilao can use to expand fringe states.
For the correctness of \cgilao,  \(\partiallyExpandFringes{}(\ssp, \partssp, \partp, \envelope, \V)\) has two requirements:
\begin{enumerate}
\item at least one fringe state \(\s \in \envelope \cap (\partgoals \setminus \Sg)\) is expanded, i.e., afterwards \(\s \not\in \partgoals \setminus \Sg\) and \(\partactions(\s) \neq \emptyset\) or \(\s \in \Sg\),
\item after expansion, \envelope is up to date and new fringe states are accurately recorded in \partgoals.
\end{enumerate}
In this work, \partiallyExpandFringes{} expands all fringe states \(\envelope \cap (\partgoals \setminus \Sg)\), i.e., we do not consider a method that expands a subset of fringe states.
This is based on the empirical results of \citeAuthor{Hansen2001:ilao} that suggest expanding all reachable fringes is more efficient than expanding subsets.

We now describe different methods for expanding each fringe state.

\subsection{One-step-look-ahead}\label{sec:bellman-expansions}

Here, we consider three methods for expanding a state \s that only look one step ahead and greedily select actions based on their \qvalues.

\bellmanSingle expands \s by computing the \qvalues of all applicable actions, and selects a single greedy action, breaking ties arbitrarily but in a reproducible way (\cref{assump:greedy-policy-tie-breaking}).
Formally, \(\bellmanSingle(\s)\) sets
\begin{equation}
\partactions(\s) \gets \left\{ \widehat{\ac} \right\} \text{ for some } \widehat{\ac} \in \A(\s) \text{ s.t. } \Q(\s, \widehat{\ac}) = \min_{\ac \in \A(\s)} \Qsa.
\end{equation}\label{eq:add-single-greedy-action}
In the best case, this method adds precisely the actions that define an optimal policy, and consequently \cgilao's partial SSP is minimal in the sense that it only contains the states and actions that are necessary to define the relevant optimal policy.

\begin{theorem}\label{thm:cgilao-minimal-with-hstar}
If initialised with \(\V^*\), \cgilao with \bellmanSingle as its expansion mechanism will have a minimal partial SSP.
That is, given \cgilao's output \(\p^*\), its partial SSP will consist precisely of \(\partstates =
\Ss^{\p*}\) and \(\partactions(\s) = \{\p^*(\s)\}\) for each \(\s \in \partstates \setminus \Sg\).
\end{theorem}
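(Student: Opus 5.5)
The plan is to trace \cgilao's execution when \(\V\) is initialised with \(\h = \V^*\), and to prove by induction over the main-loop iterations an invariant with four parts:
\begin{enumerate}
\item \(\V(\s) = \V^*(\s)\) for every \(\s \in \Ss\), so \(\V\) never changes;
\item every internal state \(\s \in \partstates \setminus \partgoals\) has \(\partactions(\s) = \{\p^*(\s)\}\), where \(\p^*\) is the greedy policy for \(\V^*\) fixed by \cref{assump:greedy-policy-tie-breaking};
\item \(\currp\) agrees with \(\p^*\) on the internal states;
\item \(\colsToCheck = \emptyset\).
\end{enumerate}
Once this invariant holds, the output policy is \(\p^*\), every internal state lies in \(\Ss^{\p^*}\), and the partial SSP has the claimed form.

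The induction step goes through the procedures in order. In \partiallyExpandFringes{}, a fringe \(\s_f\) is expanded with \bellmanSingle. Because \(\V = \V^*\), the minimising \qvalue is \(\min_{\ac} \Q(\s_f, \ac) = \V^*(\s_f)\), and the reproducible tie-breaking selects exactly \(\p^*(\s_f)\). This is the point where \cref{assump:greedy-policy-tie-breaking} is essential: if several actions are greedy, we must know that expansion and the definition of \(\p^*\) break the tie the same way. As a result, \(\partactions(\s_f) = \{\p^*(\s_f)\}\) and \(\currp(\s_f) = \p^*(\s_f)\). The new states added are \(\supp(\s_f, \p^*(\s_f))\), which lie in \(\Ss^{\p^*}\) because \(\s_f\) is reachable under \(\currp = \p^*\).

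In \CGimprovePolicy{}, each backup over \(\partactions(\s) = \{\p^*(\s)\}\) returns \(\Q(\s, \p^*(\s)) = \V^*(\s)\) by the Bellman equations~\eqref{eqn:bellman-equation}. Since \(\V\) has no increases or decreases, nothing is added to \colsToCheck, \(\residual = 0\), and \currp is unchanged. \fixViolatedConstrs{} then receives an empty \colsToCheck and does nothing.

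To see that every state of \(\Ss^{\p^*}\) is eventually expanded, I argue as follows. The main loop cannot terminate while \(\fringe = \Ss^{\currp} \cap (\partgoals \setminus \Sg)\) is nonempty. Since \(\currp = \p^*\) on internal states, reachability under \currp coincides with reachability under \(\p^*\). By \cref{thm:cgilao-terminates} the loop ends, so at termination every non-goal state of \(\Ss^{\p^*}\) is internal and \(\partstates \subseteq \Ss^{\p^*}\) together with the goal states reached by \(\p^*\). Goal states are never expanded, so for \(\s \in \partstates \setminus \Sg\) we get \(\partactions(\s) = \{\p^*(\s)\}\), and \(\partstates = \Ss^{\p^*}\).

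The main obstacle I expect is the tie-breaking subtlety, namely ensuring that the partial-SSP backup in \CGimprovePolicy{} and the \bellmanSingle choice pick the same greedy action as the \(\p^*\) that \cgilao outputs. Because each internal state has exactly one internal action, the backup is forced and cannot switch actions. For the expansion step, I would identify \(\p^*\) with \(\p_{\V^*}\) under the fixed tie-breaking rule, and then observe that the output policy \(\partp\) equals this policy. A secondary point to verify is that the \(\incDecTol\)-checks never fire; this holds because the value changes are exactly zero.
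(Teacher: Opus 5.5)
Your proposal is correct and follows the same route as the paper. With $V = V^*$, each expansion inserts exactly the greedy action, backups leave $V$ unchanged, $\Gamma$ stays empty and the policy never switches, so the algorithm reduces to expanding precisely the envelope of $\pi^*$. Your contribution is to make the paper's informal sketch rigorous, through the explicit invariant, the observation that singleton internal action sets force the backup's choice (where the paper appeals to tie-breaking in favour of current actions), and the use of \cref{thm:cgilao-terminates} with the empty-fringe termination condition.
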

\begin{proof}
In each expansion, \cgilao will insert the greedy action according to \(\V = \V^*\).
Then, Bellman updates will not change \V since the greedy actions' \Qsa are precisely \(\V^*(\s)\) and \(V = \V^*\).
Therefore \colsToCheck remains empty and the greedy policy is never updated (it breaks ties in
favour of its current actions).
So, \cgilao will perform a greedy best-first search using \(\V = \V^*\), which results in expanding precisely the states in the greedy policy envelope \(\Ss^{\p_{\V}}\) and the corresponding actions.
\end{proof}

Note that if there are multiple optimal policies, there is no guarantee that \cgilao returns the one
with the smallest envelope, i.e., it does not guarantee the fewest possible expansions over all optimal policies.

In practice, it is usually worthwhile to add all tied-greedy actions upon expansion.
So, we define the expansion \(\bellmanTied(\s)\) which sets
\begin{equation}
\partactions(\s) \gets \left\{\widehat{\ac} \in \A(\s) : \Q(\s, \widehat{\ac}) = \min_{\ac \in \A(\s)} \Qsa\right\}.
\end{equation}
To motivate that it often pays off to add tied-greedy actions, suppose \cgilao expands state \s with the single greedy action \(\ac_{\min}\).
If \(\V(\s)\) increases, then \cgilao is forced to add all remaining external successor actions \(\A(\s) \setminus \{\ac_{\min}\}\) to \colsToCheck, and from these must add all actions with \(\Qsa = \Q(\s, \ac_{\min})\) to \partssp.
Such increases to \(\V(\s)\) are common, e.g., an increase is guaranteed when \(\h(\s) < \min_{\ac \in \A(\s)} \Qsa\), which is typical for practical heuristics.
Thus, in this common situation all tied-greedy actions get added to \partssp anyway (assuming their own \qvalues did not increase significantly), so it makes sense to avoid the overhead and add them immediately.
This is the expansion presented in \cgilao's pseudocode (\cref{alg:cg-ilao}).

The final option we consider is to simply add all applicable actions. %
With this expansion, \cgilao implements \ilao, noting the minor differences in implementation discussed in \cref{sec:cgilao}.\footnote{There is a minor exception apart from the implementation changes where \cgilao with \expandFringes{} does not behave like \ilao: if \h is not monotonic, then a decrease to \V can occur, causing constraints to be added to \colsToCheck, which \cgilao has to deal with.
Though possible, we have not observed this edge case in our experiments.}

\subsection{Rollout}\label{sec:expansion-rollout}

We now consider expansion methods that use rollouts.
A rollout from state \s is obtained by simulating the execution of a \textit{base policy}: we start in \s, apply the base policy, randomly select the next state according to \(\pr\), apply the base policy to the new state, and so on.
This produces a trajectory through the SSP from \s, which we can use to inform our expansion.
Rollouts have a long history in value-iteration style algorithms, and they have enjoyed success in planning, e.g., PROST~\cite{Keller2012:PROST}, \lrtdp~\cite{Bonet2003:lrtdp}, and Robust-FF~\cite{Teichteil-Koenigsbuch2010:RobustFF}.
The primary benefit of rollouts is that they enable information from the goal to be propagated sooner.
Observe that the value functions of \cgilao with one-step-lookahead expansions are informed exclusively by the heuristic for most of their lifetimes, and only start propagating information from goal states when a goal enters the partial SSP.
To try to address this, we implement rollouts into \cgilao during the expansion phase:
when expanding state \s, we perform a rollout from \s to generate a trajectory \(\langle \s^0, \ac^0, \s^1, \ac^1, \dots, \s^n \rangle\), and then add all the state-action pairs \((\s^i, \ac^i)\) to the partial SSP.
Although this may insert suboptimal actions into the partial SSP, this expansion satisfies the requirements of \partiallyExpandFringes{}, and therefore does not affect \cgilao's correctness.
We do not consider adding rollouts to \ilao's expansion, because \ilao would have to add all applicable actions for all the states in the trajectory, which makes its partial SSP significantly larger than it needs to be.
We now present two base policies to use for \cgilao's rollout expansions: trials and FF.

\paragraph{Trial rollouts}

Following RTDP~\cite{Barto1995:rtdp}, we roll out with a trial by selecting the greedy action \wrt \V at each step.
The pseudocode for our trials is given in \cref{alg:trial}.

SSPs can have cycles, and a na{\"i}ve implementation of trial is susceptible to getting stuck inside a cycle indefinitely.
RTDP deals with cycles by updating \(\V \gets \min_{\ac \in \A(s)} \Qsa\) for each state \s it encounters; suppose RTDP's simulation is stuck inside a cycle, then the relevant \V values increase until the \qvalues to remain inside the cycle get so large that the simulation leaves the cycle.
In practice, to handle SSPs that violate reachability~(\cref{assump:reachability}), we use the fixed-penalty transform~\cite{trevizan17:mcmp} that allows the agent to ``give-up'' with a large cost \(\dCost \in \Reals_{>0}\) --- consequently, if RTDP is stuck within a cycle, the simulation ``gives up'' as soon as \(\V(\s) \geq \dCost\).
Similarly to RTDP, we update \V in our trial, but, importantly, in order for \cgilao to remain correct, it must preserve that \(\V(\s) \leq \V^*(\s)\) for all external states \(\s \in \Ss \setminus \partstates\).
Recall that \cgilao may have \(\V(\s) \not\leq \V^*(\s)\) for internal states \(\s \in \partstates\).
To preserve \(\V(\s) \leq \V^*(\s)\) for external states, we force the trial to terminate whenever it can reach an internal state from its current state \s, i.e., the trial is stopped whenever there is an action \(\ac \in \A(\s)\) such that \(\supp(\s, \ac) \cap \partstates \neq \emptyset\).
Consequently, \(\V(\s)\) is only updated by the trial if all effects of all applicable actions are external.
External states are in this way only updated with full Bellman backups using other external states.
Recall that external states are initialised with \(\V(\s) = \h(\s) \leq \V^*(\s)\), so by updating \(\V(\s)\) for external states only by applying Bellman backups to other external states, we guarantee that \(\V(\s) \leq \V^*(\s)\) for external states.
Terminating the trial whenever an internal state is reachable also has the intuitive interpretation that we stop the rollout as soon as it reaches a state where ``we know what to do already.''

A trial from \s can either ``succeed'' by reaching a goal or an internal state, or it can ``fail'' by reaching a dead end or by its number of steps exceeding \(t_{\max}\).
\begin{itemize}
\item If the trial succeeded: we first remove any cycles from the trial, then we expand the remaining acyclic trajectory by adding all its state-action pairs to \partssp.\footnote{To remove cycles from the trial we identify the first and last occurrence of each state in the trial, and if these are not equal, remove the subsequence between them (as well as one of the copies of the state).}
\item If the trial failed: it does not make sense to expand the entire trial; the only fact we know about this sequence of states and actions is that they do not result in a goal or an internal state.
Instead, we expand \s with \bellmanTied on the updated \V, with the idea that the updated \V will inform the one-step-lookahead to avoid the action that lead to failure.
\end{itemize}

Our trials are parameterised by \(t_{\max}\).
A low value of \(t_{\max}\) serves as another mechanism to escape cycles, and more generally as a mechanism to avoid wasting time on a simulation that ends up in an expensive part of the state space that \cgilao does not benefit from expanding.
However, this is at the cost of finding fewer successful trajectories that end in goals or internal states.

\begin{algorithm}[t!]
{\small
\DontPrintSemicolon
  \caption{trial}\label{alg:trial}

  \function{trial\((\s_\text{init}, \V, t_{\max})\)} {
    \(\s \gets \s_{\text{init}}\) \;
    \(\texttt{trajectory} \gets \langle \s \rangle\) \;
    \(t \gets 0\) \;
    \While{\(\s \not\in \partstates\) and \(\s \not\in \Sg\)}{

      \tcp{Return failure if the trial reaches a dead end or times out}
      \If {\(t > t_{\max}\) or \(\V(\s) > \dCost\) or \(\A(\s) = \emptyset\)} {
        \Return \texttt{failure} \;
      }

      \tcp{If there is an action that reaches \partstates: take the cheapest one}
      \If{\(\exists \ac \in \A(\s) : \supp(\s, \ac) \cap \partstates \neq \emptyset\)} {
        \(\ac_{\min} \gets \argmin_{\ac \in \A(\s) : \supp(\s, \ac) \cap \partstates \neq \emptyset} \Qsa\) \;
        \(\s \gets \text{ some state } \in \supp(\s, \ac_{\min}) \cap \partstates\) \;
      }
      \tcp{Otherwise continue RTDP trial as normal}
      \Else {
        \(\ac_{\min} \gets \argmin_{\ac \in \A(\s)} \Qsa\) \;
        \(\V(\s) \gets \Q(\s, \ac_{\min})\) \;
        \(\s \gets \text{ state randomly chosen from } \pr(\cdot|\s, \ac_{\min})\) \;
      }
      append \(\langle \ac_{\min}, \s \rangle\) to \(\texttt{trajectory}\) \;
      \(t \gets t + 1\) \;

    }

    remove cycles from \(\texttt{trajectory}\) \;
    \Return \(\texttt{trajectory}\) \;
  }
}
\end{algorithm}

\paragraph{FF rollouts}

Inspired by the success of FF-Replan~\cite{Yoon2007:ffReplan} and Robust-FF~\cite{Teichteil-Koenigsbuch2010:RobustFF}, we use the deterministic algorithm FF~\cite{Hoffmann2001:FF} to find a plan for the SSP's determinisation, and use the plan as a rollout.
A determinisation removes the SSP's probabilistic effects, and produces a deterministic shortest path problem.
We consider two determinisations:
\begin{enumerate}
\item \textbf{All-outcomes determinisation.}
The all-outcomes determinisation splits each probabilistic action \(\ac \in \A(\s)\) into multiple deterministic ones.
For each \(\s' \in \supp(\s,\ac)\), the determinisation produces the actions \(\ac_{\s \to \s'}\) that lead to \(\s'\) deterministically.
This is proper relaxation of the SSP: if a proper policy exists for the SSP, then a plan must exist for the all-outcomes determinisation.
\item \textbf{Most-likely-outcome determinisation.}
The most-likely-outcomes determinisation transforms each probabilistic action \(\ac \in \A(\s)\) into a single deterministic action.
It selects \(\s'\) for some \(\s' \in \argmax_{\s''} \pr(\s''|\s, \ac)\) (breaking ties arbitrarily), and then produces the single action \(\ac_{\s \to \s'}\) that leads to \(\s'\) deterministically.
This determinisation has the benefit that it has fewer actions and maintains some information about probability distributions by picking the most likely; however, it is not a proper relaxation, because it can happen that the single most-likely outcomes that we selected do not allow a plan to be constructed.
\end{enumerate}

We run FF rollouts by running FF on the respective determinisation.
Then, FF returns the plan \(\langle \s^0, \ac^0, \dots, \s^n \rangle\) where \(\s^0 = s\) and \(\s^n \in \Sg\), or returns that no plan exists.
If it finds a plan, we expand it.
If no plan exists on the all-outcomes determinisation, then it is proof that the state we are expanding is a dead end and we mark it as such.
If no plan exists on the most-likely-outcomes determinisation we can not conclude that it is a dead end, so we run FF again, but on the all-outcomes determinisation, to construct a plan or prove a dead end.

\section{Related Work}\label{sec:related-work}

In this section, we present existing work that is related to \cgilao.
First, we present and discuss action elimination, which has the same goal as \cgilao of avoiding \qvalue computations for unneeded actions.
We explain how it is different, and has various disadvantages compared to our approach.
Afterwards, we discuss existing work in planning that uses constraint generation, motivating that our use of constraint generation is novel.
Finally, we compare \cgilao to the Partial Expansion \astar algorithm, which uses the same paradigm of ignoring actions, but in the deterministic setting.

\subsection{Action Elimination}\label{sec:action-elimination}

Action elimination lets us prove that certain actions can not be part of the optimal policy, and can therefore be permanently removed from search without affecting optimality.
This technique, similarly to \cgilao's mechanism for ignoring actions, allows us to avoid computing \qvalues for actions that will not help with the solution.
However, action elimination approaches the issue from the opposite direction, in the sense that it starts with all actions and permanently removes them when they are proved to be outside optimal policies, compared to \cgilao's mechanism which initially ignores all actions and adds them as required.

Action elimination requires upper and lower bounds on \(\V^*\), called \Vub and \Vlb respectively, and determines which actions can be eliminated using the following theorem.

\begin{theorem}[Action Elimination~\cite{Bertsekas1995}]\label{thm:action-elimination}
Given a state \(\s \in \Ss\) and action \(\tilde{\ac} \in \A(\s)\), if \(\exists \ac \in \A(\s)\) such that \(\Qlb(\s, \tilde{\ac}) > \Qub(\s, \ac)\), then action \(\tilde{\ac}\) cannot be optimal for state \s, and can therefore be eliminated.
\end{theorem}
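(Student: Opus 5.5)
We argue by contradiction and compare expected costs. Write \(\Qlb\) and \(\Qub\) for the \qvalues computed from \(\Vlb\) and \(\Vub\), where \(\Vlb \leq \V^* \leq \Vub\). The key fact is that \qvalues are monotone in the value function. For fixed \(\s\) and \(\ac\), the map \(\V \mapsto \C(\s,\ac) + \sum_{\s'} \pr(\s'|\s,\ac)\V(\s')\) is non-decreasing in each coordinate, because all probabilities are non-negative. Applying this to the element-wise bounds gives
\[
\Qlb(\s,\ac') \leq \Q^*(\s,\ac') \leq \Qub(\s,\ac') \quad \text{for every } \ac' \in \A(\s),
\]
where \(\Q^*\) denotes the \qvalues of \(\V^*\).

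Next, combine this sandwich with the hypothesis \(\Qlb(\s,\tilde{\ac}) > \Qub(\s,\ac)\). This yields the strict chain
\[
\Q^*(\s,\tilde{\ac}) \geq \Qlb(\s,\tilde{\ac}) > \Qub(\s,\ac) \geq \Q^*(\s,\ac) \geq \min_{\ac' \in \A(\s)} \Q^*(\s,\ac') = \V^*(\s),
\]
where the last equality is the Bellman equation~\eqref{eqn:bellman-equation}. This step needs \(\s \notin \Sg\). If \(\s \in \Sg\), no action is ever applied there, so the claim is vacuous. Hence \(\Q^*(\s,\tilde{\ac}) > \V^*(\s)\), which means \(\tilde{\ac}\) is not among the minimisers of \(\min_{\ac' \in \A(\s)} \Q^*(\s,\ac')\).

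Finally, link "not a minimiser" to "not in any optimal policy". The background notes that the greedy policies of \(\V^*\), over all tie-breakings, are exactly the optimal policies. So no optimal policy can select \(\tilde{\ac}\) at \(\s\).

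The statement needs one qualification. If \(\s\) lies outside every optimal policy's envelope, then an optimal policy can assign arbitrary actions there. For this reason I would read "optimal for \(\s\)" as "\(\tilde{\ac}\) is not a greedy action with respect to \(\V^*\) at \(\s\)". With this reading, removing \(\tilde{\ac}\) from \(\A(\s)\) leaves \(\V^*\) unchanged: \(\V^*\) still satisfies the Bellman equations of the reduced SSP, because the minimum at \(\s\) is still attained by the remaining action \(\ac\). By uniqueness of the fixed point under \cref{assump:infinite-improper}, \(\V^*\) is also the optimal value function of the reduced SSP. The reduced SSP still has a proper policy, namely any optimal greedy policy, so \cref{assump:reachability} is preserved.

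The main obstacle I expect is stating precisely what "cannot be optimal" means. The inequality chain is routine; the care lies in the last step, which relies on uniqueness of \(\V^*\) and on greedy policies characterising the optimal ones.
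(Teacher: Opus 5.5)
Your argument is correct. The paper gives no proof of this theorem; it only cites \citet{Bertsekas1995}, so there is no alternative route to compare against. Your sandwich \(\Q^*(\s,\tilde{\ac}) \geq \Qlb(\s,\tilde{\ac}) > \Qub(\s,\ac) \geq \Q^*(\s,\ac) \geq \V^*(\s)\) comes from monotonicity of \qvalues in \(\V\) together with \(\Vlb \leq \V^* \leq \Vub\), and it is the standard proof. Reading ``cannot be optimal'' as ``not greedy with respect to \(\V^*\)'' is the right interpretation.

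One small slip is in your last paragraph. The minimum of \(\Q^*(\s,\cdot)\) need not be attained at \(\ac\) itself. It is attained at some action other than \(\tilde{\ac}\), because \(\Q^*(\s,\tilde{\ac}) > \V^*(\s)\), and that is all your reduced-SSP argument needs.
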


We have already discussed how to maintain \Vlb as a lower bound.
Note that outside this section we call such value functions \V, but here we call it \Vlb to emphasise the distinction between upper and lower bounds.
The main drawback of action elimination, and the reason its use is not widespread in planners, is that obtaining the upper bound \Vub is difficult.
There are methods for deriving an upper bound \Vub from only a lower bound and its Bellman residual~\cite{Hansen2001:ilao,Hansen2015:UB-for-V}, but they require the expensive computation of the expected number of steps to reach the goal or that the policy is proper, which is impractical for our purposes.
The standard way to track \Vub is to initialise \Vub \st \(\Vub \geq \V^*\), and then apply Bellman backups to it, which are guaranteed to preserve \(\Vub \geq \V^*\).
This is done in addition to tracking \Vlb, so the algorithm tracks two value functions, and each partial backup must be applied both to \Vlb and \Vub.
Thus, if all else is the same, then tracking the two value functions exactly doubles the number of computed \qvalues.
An additional difficulty for tracking \Vub, on top of the additional maintenance cost, is that we are not aware of any efficient and domain-independent algorithms to compute an initial upper bound.
The only domain-independent upper bound we are aware of, is the trivial upper bound
\[\Vub(\s) = \begin{cases} 0 &\text{ if } \s \in \Sg \\ \dCost &\text{ if } \s \not\in \Sg \end{cases}\]
where \(\dCost = \infty\) or the penalty term, if we are using a fixed-penalty transformation.

Action elimination has not been used by optimal heuristic-search algorithms to remove actions during search.
There are variants of RTDP~\cite{Barto1995:rtdp} that track both \Vub and \Vlb, such as BRTDP~\cite{McMahan2005:BRTDP}, FRTDP~\cite{Smith2006:FRTDP}, VPI-RTDP~\cite{Sanner2009:vpirtdp}; and variants for \lao~\cite{Hansen2001:ilao} such as IBLAO\(^*\)~\cite{Warnquist2010:IBLAO}.
However, none of these use action elimination: they use lower and upper bounds to define different ways to prioritise expansion and backups, and to define alternative termination conditions.

The only algorithm we are aware of that explicitly uses action elimination is \ftvi~\cite{Dai2011:tvi}. %
\ftvi consists of a computation stage where it runs Topological Value Iteration (TVI), and a preprocessing stage, where it tries to make the search space more amenable to TVI.
TVI works well when the state-space has small strongly connected components (SCCs), and degenerates to standard VI as the SCCs get bigger.
The preprocessing stage attempts to break any large SCCs into smaller ones by removing suboptimal actions with action elimination.
This step iteratively performs a Depth-First Search on the state-space, applies Bellman backups on \Vlb and \Vub in post-order, and uses action elimination to remove actions when it can.
Note that \ftvi is not a heuristic search algorithm because its search step does not use a heuristic and considers the whole state-space.

Action elimination has another disadvantage \wrt to \cgilao: it initially considers all actions, and only eliminates them later in its lifetime.
In comparison, \cgilao starts with no actions, and adds them as they are deemed necessary.
This means that \cgilao starts with a small partial SSP, and grows it over time, enjoying a relatively small partial SSP during its entire execution; whereas action elimination has a large partial SSP for most of its execution, only shrinking it when \Vlb and \Vub become sufficiently accurate to prove actions are suboptimal.

To summarise, action elimination has two key disadvantages compared to \cgilao's mechanism for ignoring actions:
\begin{enumerate}
\item Action elimination requires both a lower bound \Vlb and an upper bound \Vub, whereas \cgilao only needs \Vlb.
Maintaining \Vub incurs an overhead of \qvalue computations.
\item Action elimination is only able to shrink its partial SSPs late in its lifetime, whereas \cgilao enjoys smaller partial SSPs from the start.
\end{enumerate}
We show experimentally in \cref{sec:action-elimination} that \cgilao's mechanism for ignoring actions is indeed more efficient than action elimination.

\subsection{Constraint Generation in Planning}

The paradigm of constraint generation lends itself well to complex planning problems where a relaxation can be solved efficiently.
An example that demonstrates this very clearly is Multi-Agent Path-Finding (MAPF)~\cite{Stern2021:MAPF}, where the aim is to coordinate multiple agents to achieve their objectives as efficiently as possible, making sure that the agents do not interfere with each, e.g., they should not crash into each other.
This problem is notoriously difficult, especially as the number of agents increases;
finding optimal solutions is NP-hard~\cite{Yu2013:MAPF-is-NP}, and in some cases even finding a feasible solution is NP-hard~\cite{Nebel2020:MAPF-is-NP}.
On the other hand, if we focus on each agent in isolation, then finding their individual optimal paths is a classical planning problem, which can be solved efficiently.
If the isolated optimal paths do not interfere, then this immediately gives an optimal solution for the MAPF problem, but in general there will be interferences that must be addressed.
We can impose constraints that block such interferences; for example, suppose that following the isolated paths causes agents \(A\) and \(B\) to crash into each other in location \((x, y)\) at time \(t\), then we can prevent the crash by disallowing \(A\) from entering \((x, y)\) at time \(t\).
Then, a viable algorithmic loop is to find paths for the agents in isolation, determine where they interfere, add constraints that block the undesired behaviour, then find updated plans for the agents in isolation that respect the added constraints, and repeat until no interferences remain.
The MAPF community calls this technique Conflict Based Search (CBS)~\cite{Sharon2012:CBS}, and it has seen a reasonable amount of success~\cite{Boyarski2015:iCBS}.
Constraint generation has also been used explicitly (by name) in MAPF with Mixed Integer Programs~\cite{Calliess2021:ConstraintGenerationMultiAgentMIP}.
The same paradigm applies similarly to other complex planning problems, e.g., in metric hybrid factored planning in nonlinear domains~\cite{Say2019:ConstraintGenerationHybridPlanning}.

Counter-Example Guided Abstraction Refinement (CEGAR)~\cite{Seipp2013:CEGAR} can also be considered a form of constraint generation, which has seen success in constructing heuristics~\cite{Rovner2019:CEGAR-heuristics} and policy verification~\cite{Vinzent2023}.
In the context of classical planning, the idea is to use abstraction to compress the original problem's state-space, so that the ensuing problem is smaller and easier to solve, but still captures some of the original problem's structure.
Then, we can solve the smaller problem to obtain a plan.
A plan for the smaller problem may immediately be a solution for the original problem, in which case the problem is solved; more likely, the plan has some \textit{flaw} when we try to apply it to the original problem.
This is called a counter-example, and is used to refine the abstraction so that this particular flaw can not occur again, effectively adding constraints prohibiting it.
Consequently, we get the algorithmic loop that solves an abstraction, finds flaws, refines the abstraction to prohibit these flaws, resolves the abstraction, and then repeats until no more flaws can be found.
Again, this fits the framework of constraint generation.

These constraint-generation approaches differ quite significantly from \cgilao, because their separation oracle determines how a candidate solution violates the constraint of the original planning problem, whereas \cgilao's separation oracle determines violations of monotonicity in the value function.
There is also existing work in the planning literature that applies constraint generation to various value functions.

For Partially Observable MDPs (POMDPs) the value function is encoded by a set of vectors, and an LP with constraint generation can prune unneeded vectors~\cite{Walraven2017:VectorPruningPOMDP}.
Although this LP is related to the value function, pruning vectors is a fundamentally different task from the one we are addressing.
Similarly, a linear value function can give a compact approximation of the value function for factored MDPs, and if we replace the value function in \ref{lp:vi} with this approximation, we obtain the approximate LP (ALP)~\cite{Mausam2012:MDPs} with a significant reduction in the number of variables.
\citeAuthor{Schuurmans2001:ConstrGenMDP} apply constraint generation to the ALP, and the separation oracle has a similar condition for adding constraints as \cgilao; however, it checks for the condition na{\"i}vely, which is practical with the compact representation of factored MDPs with a linear value function approximation, but not for SSPs.

In all these works, to find constraint violations the separation oracle either na{\"i}vely checks all possible constraints or relies on sampling, compared to \cgilao, which efficiently tracks potential violations via changes in the value function.

\subsection{Partial Expansion \astar}

The concept of ignoring unneeded actions has been considered in the deterministic shortest path setting by the Partial Expansion \astar (\peastar) algorithm~\cite{Yoshizumi2000:PEAstar}.
Similarly to \cgilao, states are partially expanded: the algorithm only considers a greedy subset of applicable actions, and efficiently tracks the other actions to be reconsidered when they may become needed.
We assume that the reader is familiar with \astar~\cite{Hart1968:astar,Russell2016}, and we only give a brief reminder of how the algorithm works.
We use similar notation to \citeAuthor{Yoshizumi2000:PEAstar}, but talk about states instead of nodes.
Then, for a state \s
\begin{itemize}
\item \(g(\s)\) denotes the lowest path cost to \s, i.e., the currently best-found cost to move from \sZ to \s;
\item \(h(\s)\) is an admissible heuristic function evaluated at \s (defined similarly to heuristic functions in our setting);
\item \(f(\s) = g(\s) + h(\s)\) is the evaluation function.
\end{itemize}
\astar maintains an \textit{open list} of states (also called the frontier), from which the algorithm iteratively pops a state \s with lowest \(f(\s)\) and then expands it.
We assume that the open list is implemented as a priority queue ordered by \(f\).
To expand \s, \astar iterates over \s's successors and places them into the open list with their respective \(g\), \(h\), and \(f\) values.
A key feature of \astar is that it will stop as soon as it pops a goal from the open list, and any states that remain in the open list are discarded, not needing to be considered for constructing the optimal plan.
For problems with large branching factors the number of such discarded states can get very large, which makes it expensive maintain the priority queue, and in extreme cases causes the algorithm to run out of memory.
\peastar addresses this by using partial expansions.
A partial expansion for \s is defined as follows: let \(\text{SUCC}\) denote the successor states of \s, then the partial expansion only adds the greedy successor states
\[
\text{SUCC}_{\leq 0} = \left\{ \s_{\min} \in \text{SUCC} : f(\s_{\min}) = \min_{\s' \in \text{SUCC}} f(\s') \right\}
\]
and all non-greedy successors are captured by placing the parent state \s back into the open list with an updated priority
\[F(\s) = \min_{\s' \in \text{SUCC} \setminus \text{SUCC}_{\leq 0}} f(\s').\]
This \(F(\s)\) can be read as the smallest \(f\)-value of \s's non-greedy successors.
Note that \peastar uses \(F\) to sort its priority queue, and we use \(F(\s) = f(\s)\) if a state has not been partially expanded before.
The key idea is that, if search determines that the states we have ignored become relevant, then \s will be popped again as a proxy for its successors, and the partial re-expansion of \s will place the newly greedy successors into the open list, and progress as though these states were always on the open list.
Thus, the algorithm can maintain a smaller open list by ignoring unpromising states, at the cost of requiring additional re-expansion steps.

If \ilao is the generalisation of \astar to SSPs, then \cgilao is the generalisation of \peastar, and \cgilao shares the key behaviours of \peastar when applied to deterministic problems.
It is awkward to compare \cgilao and \peastar directly, because their settings are different: \cgilao computes the states' costs-to-go in \V and tracks the current greedy policy \currp, whereas \peastar is computing the lowest path costs \(g\) and implicitly tracking the best plan by recording each state's parent.
Nevertheless, the order in which \cgilao and \peastar handle states and actions is comparable.
We illustrate this in \cref{fig:pea-vs-cgilao-pea} and \cref{fig:pea-vs-cgilao-cgilao}.
\Cref{fig:pea-vs-cgilao-pea} is the example from \citet{Yoshizumi2000:PEAstar} that demonstrates the expansion order of \peastar.
Solid and dotted circles indicate that the state has or has not been added to the open list, respectively.
The number at the bottom of each circle represents the corresponding state's priority on the open list, i.e., its \(F(\s)\), and when a value is crossed out this indicates that \(F(\s)\) was updated in this step.
\Cref{fig:pea-vs-cgilao-cgilao} shows how \cgilao behaves on the same problem.
Importantly, the numbers at the bottom of each node have a different semantic: they now represent \(\V(\s)\), and the solid and dotted circles represent states that are either inside or outside the current partial SSP.
We use the same heuristic for both algorithms, namely \(h\) with:
\[
h(\s_0) = 7,
h(\s_1) = 5,
h(\s_2) = 6,
h(\s_3) = 9,
h(\s_4) = 6,
h(\s_5) = 7,
h(\s_6) = 8,
h(\s_7) = 5,
h(\s_8) = 6,
h(\s_9) = 9.
\]
At an intuitive level, we see that the expansions of \cgilao behave similarly to \peastar's first-time expansions, and \cgilao's fixing of violated constraints lines up with \peastar's re-expansions, so that steps (a), (b), (c), and (d) match between the two figures.
Within these steps, the candidate plans are the same, with the exception that in step (b) \cgilao considers an extra action and state, since \peastar immediately recognises that \sZ needs to be re-expanded, and \cgilao needs to compute the policy first.
To understand the relationship between \cgilao and \peastar, we start by pointing out that \ilao behaves similarly to \astar: in each step of \ilao in a deterministic problem, its current policy is a plan \(p\) in the partial SSP from \sZ to \(\s \in \partgoals\) such that \(\C(p) + h(\s)\) is minimal; and \astar considers the state in its open list with minimal \(f(\s) = g(\s) + h(\s)\).
With consistent tie breaking, it turns out that in matching steps this \s is the same, and \(\C(p) = g(\s)\), so that \(\V(\sZ) = f(\s)\).
\cgilao and \peastar inherit this relationship, and it only remains to understand how \cgilao's fixing of violated constraints corresponds to \peastar's re-expansions.
This does not line up exactly, as we see in our example in step (b), because \peastar immediately places a state \s back into the open list if \(f(\s) > f(\s')\) for its parent \(\s'\), whereas \cgilao does not take such shortcuts and computes the greedy policy.
Nevertheless, the steps roughly correspond: \cgilao recognises a violated constraint when a single action can be added to improve its candidate policy; and \peastar, re-expands \s and brings in a new successor state \(\s'\) with \(F(\s') = F(\s)\), effectively improving the candidate plan by adding the missing action from \s to \(\s'\).
Thus, it seems clear that \cgilao and \peastar have near-identical behaviour on deterministic problems.

We note that the definition of \citet{Yoshizumi2000:PEAstar} is more general than what we have presented, because they allow \(\text{SUCC}_{\leq C}\) for any cutoff \(C \in \Reals_{\geq 0}\) and we only presented the special case \(C = 0\).
With the special case \(\text{SUCC}_{\leq 0}\), \peastar is analogous to \cgilao with the \(\bellmanTied\) expansion~(\cref{sec:bellman-expansions}).
Larger values of \(C\) would translate to a \cgilao expansion that adds actions that are within \(C\) of greedy.
We investigated such expansions, but we did not find them theoretically insightful, and preliminary experiments were unpromising, so we have left them out of this paper.

\begin{figure}

\begin{subfigure}[t]{0.5\textwidth}
\centering
\scalebox{0.7}{
\begin{tikzpicture}
\begin{scope}[every node/.style={circle split, thick, draw, minimum size=0.6cm, inner sep=0.04cm}]
        \node[] (0) at (0, 0) {\(\s_0\) \nodepart{lower} \(\xcancel{6} 7\)};
        \node[] (1) at (-3, -2) {\(\s_1\) \nodepart{lower} \(6\)};
        \node[dashed] (2) at (0, -2) {\(\s_2\) \nodepart{lower} \(7\)};
        \node[dashed] (3) at (3, -2) {\(\s_3\) \nodepart{lower} \(9\)};
\end{scope}
\begin{scope}[every edge/.style={draw, very thick}]
        \path [->] (0) edge[] (1);
        \path [->] (0) edge[dashed] (2);
        \path [->] (0) edge[dashed] (3);
\end{scope}
\end{tikzpicture}}
\caption{First expansion (expanding \(\s_0\)).}
\label{fig:pea-iter1}
\end{subfigure}
\begin{subfigure}[t]{0.5\textwidth}
\centering
\scalebox{0.7}{
\begin{tikzpicture}
\begin{scope}[every node/.style={circle split, thick, draw, minimum size=0.6cm, inner sep=0.04cm}]
        \node[] (0) at (0, 0) {\(\s_0\) \nodepart{lower} \(7\)};
        \node[] (1) at (-3, -2) {\(\s_1\) \nodepart{lower} \(\xcancel{6} 8\)};
        \node[dashed] (2) at (0, -2) {\(\s_2\) \nodepart{lower} \(7\)};
        \node[dashed] (3) at (3, -2) {\(\s_3\) \nodepart{lower} \(9\)};
        \node[dashed] (4) at (-3-1, -4) {\(\s_4\) \nodepart{lower} \(8\)};
        \node[dashed] (5) at (-3, -4) {\(\s_5\) \nodepart{lower} \(9\)};
        \node[dashed] (6) at (-3+1, -4) {\(\s_6\) \nodepart{lower} \(10\)};
\end{scope}
\begin{scope}[every edge/.style={draw, very thick}]
        \path [->] (0) edge[] (1);
        \path [->] (0) edge[dashed] (2);
        \path [->] (0) edge[dashed] (3);
        \path [->] (1) edge[dashed] (4);
        \path [->] (1) edge[dashed] (5);
        \path [->] (1) edge[dashed] (6);
\end{scope}
\end{tikzpicture}}
\caption{Second expansion (expanding \(\s_1\)).}
\label{fig:pea-iter2}
\end{subfigure}

\begin{subfigure}[t]{0.5\textwidth}
\centering
\scalebox{0.7}{
\begin{tikzpicture}
\begin{scope}[every node/.style={circle split, thick, draw, minimum size=0.6cm, inner sep=0.04cm}]
        \node[] (0) at (0, 0) {\(\s_0\) \nodepart{lower} \(\xcancel{7} 9\)};
        \node[] (1) at (-3, -2) {\(\s_1\) \nodepart{lower} \(8\)};
        \node[] (2) at (0, -2) {\(\s_2\) \nodepart{lower} \(7\)};
        \node[dashed] (3) at (3, -2) {\(\s_3\) \nodepart{lower} \(9\)};
        \node[dashed] (4) at (-3-1, -4) {\(\s_4\) \nodepart{lower} \(8\)};
        \node[dashed] (5) at (-3, -4) {\(\s_5\) \nodepart{lower} \(9\)};
        \node[dashed] (6) at (-3+1, -4) {\(\s_6\) \nodepart{lower} \(10\)};
\end{scope}
\begin{scope}[every edge/.style={draw, very thick}]
        \path [->] (0) edge[] (1);
        \path [->] (0) edge[] (2);
        \path [->] (0) edge[dashed] (3);
        \path [->] (1) edge[dashed] (4);
        \path [->] (1) edge[dashed] (5);
        \path [->] (1) edge[dashed] (6);
\end{scope}
\end{tikzpicture}}
\caption{Third expansion (re-expanding \(\s_0\)).}
\label{fig:pea-iter3}
\end{subfigure}
\begin{subfigure}[t]{0.5\textwidth}
\centering
\scalebox{0.7}{
\begin{tikzpicture}
\begin{scope}[every node/.style={circle split, thick, draw, minimum size=0.6cm, inner sep=0.04cm}]
        \node[] (0) at (0, 0) {\(\s_0\) \nodepart{lower} \(9\)};
        \node[] (1) at (-3, -2) {\(\s_1\) \nodepart{lower} \(8\)};
        \node[] (2) at (0, -2) {\(\s_2\) \nodepart{lower} \(\xcancel{7} 8\)};
        \node[dashed] (3) at (3, -2) {\(\s_3\) \nodepart{lower} \(9\)};
        \node[dashed] (4) at (-3-1, -4) {\(\s_4\) \nodepart{lower} \(8\)};
        \node[dashed] (5) at (-3, -4) {\(\s_5\) \nodepart{lower} \(9\)};
        \node[dashed] (6) at (-3+1, -4) {\(\s_6\) \nodepart{lower} \(10\)};
        \node[] (7) at (-1, -4) {\(\s_7\) \nodepart{lower} \(7\)};
        \node[dashed] (8) at (0, -4) {\(\s_8\) \nodepart{lower} \(8\)};
        \node[dashed] (9) at (+1, -4) {\(\s_9\) \nodepart{lower} \(11\)};
\end{scope}
\begin{scope}[every edge/.style={draw, very thick}]
        \path [->] (0) edge[] (1);
        \path [->] (0) edge[] (2);
        \path [->] (0) edge[dashed] (3);
        \path [->] (1) edge[dashed] (4);
        \path [->] (1) edge[dashed] (5);
        \path [->] (1) edge[dashed] (6);
        \path [->] (2) edge[] (7);
        \path [->] (2) edge[dashed] (8);
        \path [->] (2) edge[dashed] (9);
\end{scope}
\end{tikzpicture}}
\caption{Fourth expansion (expanding \(\s_2\)).}
\label{fig:pea-iter4}
\end{subfigure}

\caption{\peastar's behaviour (comparing \cgilao and \peastar).}
\label{fig:pea-vs-cgilao-pea}

\begin{subfigure}[t]{0.5\textwidth}
\centering
\scalebox{0.7}{
\begin{tikzpicture}
\begin{scope}[every node/.style={circle split, thick, draw, minimum size=0.6cm, inner sep=0.04cm}]
        \node[line width=2pt] (0) at (0, 0) {\(\s_0\) \nodepart{lower} \(6\)};
        \node[line width=2pt] (1) at (-3, -2) {\(\s_1\) \nodepart{lower} \(5\)};
        \node[dashed] (2) at (0, -2) {\(\s_2\) \nodepart{lower} \(6\)};
        \node[dashed] (3) at (3, -2) {\(\s_3\) \nodepart{lower} \(8\)};
\end{scope}
\begin{scope}[every edge/.style={draw, very thick}]
        \path [->] (0) edge[line width=2pt] (1);
        \path [->] (0) edge[dashed] (2);
        \path [->] (0) edge[dashed] (3);
\end{scope}
\end{tikzpicture}}
\caption{First expansion (expanding \(\s_0\)).}
\label{fig:cgilao-pea-iter1}
\end{subfigure}
\begin{subfigure}[t]{0.5\textwidth}
\centering
\scalebox{0.7}{
\begin{tikzpicture}
\begin{scope}[every node/.style={circle split, thick, draw, minimum size=0.6cm, inner sep=0.04cm}]
        \node[line width=2pt] (0) at (0, 0) {\(\s_0\) \nodepart{lower} \(\xcancel{6} 8\)};
        \node[line width=2pt] (1) at (-3, -2) {\(\s_1\) \nodepart{lower} \(\xcancel{5} 7\)};
        \node[dashed] (2) at (0, -2) {\(\s_2\) \nodepart{lower} \(6\)};
        \node[dashed] (3) at (3, -2) {\(\s_3\) \nodepart{lower} \(8\)};
        \node[line width=2pt] (4) at (-3-1, -4) {\(\s_4\) \nodepart{lower} \(6\)};
        \node[dashed] (5) at (-3, -4) {\(\s_5\) \nodepart{lower} \(7\)};
        \node[dashed] (6) at (-3+1, -4) {\(\s_6\) \nodepart{lower} \(8\)};
\end{scope}
\begin{scope}[every edge/.style={draw, very thick}]
        \path [->] (0) edge[line width=2pt] (1);
        \path [->] (0) edge[dashed] (2);
        \path [->] (0) edge[dashed] (3);
        \path [->] (1) edge[line width=2pt] (4);
        \path [->] (1) edge[dashed] (5);
        \path [->] (1) edge[dashed] (6);
\end{scope}
\end{tikzpicture}}
\caption{Second expansion (expanding \(\s_1\)).}
\label{fig:cgilao-pea-iter2}
\end{subfigure}

\begin{subfigure}[t]{0.5\textwidth}
\centering
\scalebox{0.7}{
\begin{tikzpicture}
\begin{scope}[every node/.style={circle split, thick, draw, minimum size=0.6cm, inner sep=0.04cm}]
        \node[line width=2pt] (0) at (0, 0) {\(\s_0\) \nodepart{lower} \(\xcancel{8} 7\)};
        \node[] (1) at (-3, -2) {\(\s_1\) \nodepart{lower} \(7\)};
        \node[line width=2pt] (2) at (0, -2) {\(\s_2\) \nodepart{lower} \(6\)};
        \node[dashed] (3) at (3, -2) {\(\s_3\) \nodepart{lower} \(8\)};
        \node[] (4) at (-3-1, -4) {\(\s_4\) \nodepart{lower} \(6\)};
        \node[dashed] (5) at (-3, -4) {\(\s_5\) \nodepart{lower} \(7\)};
        \node[dashed] (6) at (-3+1, -4) {\(\s_6\) \nodepart{lower} \(8\)};
\end{scope}
\begin{scope}[every edge/.style={draw, very thick}]
        \path [->] (0) edge[] (1);
        \path [->] (0) edge[line width=2pt] (2);
        \path [->] (0) edge[dashed] (3);
        \path [->] (1) edge[] (4);
        \path [->] (1) edge[dashed] (5);
        \path [->] (1) edge[dashed] (6);
\end{scope}
\end{tikzpicture}}
\caption{After fixing violated constraints from second expansion.}
\label{fig:cgilao-pea-iter3}
\end{subfigure}
\begin{subfigure}[t]{0.5\textwidth}
\centering
\scalebox{0.7}{
\begin{tikzpicture}
\begin{scope}[every node/.style={circle split, thick, draw, minimum size=0.6cm, inner sep=0.04cm}]
        \node[line width=2pt] (0) at (0, 0) {\(\s_0\) \nodepart{lower} \(7\)};
        \node[] (1) at (-3, -2) {\(\s_1\) \nodepart{lower} \(7\)};
        \node[line width=2pt] (2) at (0, -2) {\(\s_2\) \nodepart{lower} \(6\)};
        \node[dashed] (3) at (3, -2) {\(\s_3\) \nodepart{lower} \(8\)};
        \node[] (4) at (-3-1, -4) {\(\s_4\) \nodepart{lower} \(6\)};
        \node[dashed] (5) at (-3, -4) {\(\s_5\) \nodepart{lower} \(7\)};
        \node[dashed] (6) at (-3+1, -4) {\(\s_6\) \nodepart{lower} \(8\)};
        \node[line width=2pt] (7) at (-1, -4) {\(\s_7\) \nodepart{lower} \(5\)};
        \node[dashed] (8) at (0, -4) {\(\s_8\) \nodepart{lower} \(6\)};
        \node[dashed] (9) at (+1, -4) {\(\s_9\) \nodepart{lower} \(9\)};
\end{scope}
\begin{scope}[every edge/.style={draw, very thick}]
        \path [->] (0) edge[] (1);
        \path [->] (0) edge[line width=2pt] (2);
        \path [->] (0) edge[dashed] (3);
        \path [->] (1) edge[] (4);
        \path [->] (1) edge[dashed] (5);
        \path [->] (1) edge[dashed] (6);
        \path [->] (2) edge[line width=2pt] (7);
        \path [->] (2) edge[dashed] (8);
        \path [->] (2) edge[dashed] (9);
\end{scope}
\end{tikzpicture}}
\caption{Third expansion (expanding \(\s_2\)).}
\label{fig:cgilao-pea-iter4}
\end{subfigure}

\caption{\cgilao's behaviour on deterministic problem (comparing \cgilao and \peastar). Thick lines indicate the greedy policy envelope.}
\label{fig:pea-vs-cgilao-cgilao}
\end{figure}

\clearpage

\section{Experiments}\label{sec:experiments}

In this section, we experimentally determine the most efficient variants of \cgilao, demonstrate how these variants compete with the state-of-the-art, and analyse these results in detail.
We start by presenting the methodology of our experiments (\cref{sec:experiments-methodology}), describing the benchmark domains that we consider (\cref{sec:experiments-domains}), and we explain the tables and plots that we use to present the results (\cref{sec:experiments-presentation-of-results}).
Then, we compare the versions of \cgilao to identify the best performers~(\cref{sec:experiments-best-version-of-cgilao}), and show that \cgilao's best performers are competitive with the state-of-the-art~(\cref{sec:experiments-state-of-the-art}).
We investigate \cgilao further and show how many actions it is able to ignore~(\cref{sec:experiments-partial-ssp-sizes}), and how its mechanism for ignoring actions compares to action elimination~(\cref{sec:experiments-action-elimination}).

\subsection{Methodology}\label{sec:experiments-methodology}

We compare different versions of \cgilao with the state-of-the-art optimal heuristic-search algorithms \ilao~\cite{Hansen2001:ilao} and \lrtdp~\cite{Bonet2003:lrtdp}.
We will give more details about the algorithms we use in the relevant sections.
Heuristics have a major impact on the performance of heuristic-search algorithms, so we consider the following suite of admissible heuristics from the literature: \lmcut~\cite{Helmert2009:lmcut}; \roc~\cite{trevizan17:hpom}; and \pdb~\cite{Kloessner2021:MAXPROB-PDBs,Kloessner2021:SSP-PDBs}.

Following the original authors of \roc, on problems with dead ends we use \roc with \hMax~\cite{Bonet2000:hmax} as a dead-end detection mechanism.
We only consider PDBs generated up to size 2, following the results of~\citeAuthor{Kloessner2021:SSP-PDBs}.
We convert SSPs into dead-end free SSPs with the fixed-penalty transformation~\cite{trevizan17:mcmp} with a penalty of
\begin{itemize}
\item \(D = 10^4\) for \elevators, \recttireworld, and \schedule;
\item \(D = 10^7\) for \parcr, \parcn, and \sysadmin;
\item \(D = 500\) for all other domains.
\end{itemize}
The domains with \(D > 500\) require large penalties for different reasons.
\parcn and \parcr have very large action costs.
On problems such as \recttireworld, the optimal policy \(\p^*\)'s probability of reaching a goal (without using give-up actions) is very low, with the effect that the cost of \(\p^*\) is within \(\epsilon\) of \(500\), so \(D\) needs to be increased.
On problems such as \elevators, an optimal policy requires the agent to repeat a sequence of actions many times before it reaches a goal, so that the optimal policy's cost exceeds \(500\).
As the parameter for \econsistency, we use \(\epsilon = 0.0001\), and we set \cgilao's parameter \(\eta = \epsilon\).

For each triple of algorithm, heuristic, and problem, we run it \data{5} times with different random seeds.
These triples coupled with a random seed are called \textbf{instances}.
We run multiple instances per problem because, even if all other parameters are fixed, the random seed can cause a significant amount of variance: for algorithms that rely on random choices this is obvious (e.g., \lrtdp), but even for seemingly deterministic algorithms, random seeds have a significant impact on behaviour in practice (e.g., on tie breaking).
To emphasise this point, we have observed a difference of over \data{1000} seconds on the same problem, algorithm, and heuristic, only changing the random seed.
For each instance, the execution is cut off at \data{30 minutes of CPU time and 8GB of memory}.
The experiments were conducted in a cluster of \data{Intel Xeon 3.2 GHz} CPUs and each run used a single CPU core.
The LP solver used for computing \roc was \data{CPLEX version 20.1}.

Our code, benchmarks, and results are available online~\cite{Schmalz2024:CgilaoSourceCode}.
We now give descriptions of our benchmark domains.

\subsection{Benchmark Domains}\label{sec:experiments-domains}

We use the benchmark problems from \citeAuthor{Kloessner2021:Benchmarks}, which is a compilation of domains from the International Probabilistic Planning Competitions (IPPC) that were run in 2004~\cite{Younes2005:ippc}, 2006~\cite{Bonet2006:ippc}, and 2008~\cite{Bryce2008:ippc}.
These IPPCs encoded problems in Probabilistic PDDL~\cite{Younes2005:ippc}; we do not consider later competitions because they use RDDL~\cite{Sanner2010:RDDL}, which is incompatible with our heuristics.
We consider three additional domains, which we also describe in this section.
To reduce wasted compute time, we eliminate problems that are too difficult to solve.
For a domain with problems enumerated from \(1\) to \(n\), we consider the problems \(i\) to \(n\) too difficult if none of these were solved by \lrtdp, \ilao, or \cgilao with any heuristics, given an extended deadline of 45 minutes.
We now describe each domain and list each IPPC where the domain appeared.
We omit the domain ``Boxworld'' because all of its problems are too difficult, and we omit ``Drive'' and ``Tireworld'' because their problems are too easy: all algorithm configurations solved all problems in under 30 seconds (with a few exceptions).

\textbf{Blocks World (\bw) [IPPC 2004, 2006, 2008].}
Deterministic Blocks World has featured in many of the (deterministic) International Planning Competitions (IPCs).
A Blocks World problem consists of a table that has \(n\) unique blocks on it which can be stacked on top of each other, i.e., a block can be placed on top of any other, as long as there is no block on top of it already.
The agent is presented with an initial configuration of blocks, and its task is to rearrange them into a specified configuration, e.g., block 1 should be on block 2, and block 2 should be on the table.
To achieve this, the agent controls a ``gripper'' that can pick up one block at a time that has no other blocks on top of it, and then the gripper can put this block down on the table or on top of another block.
The probabilistic version of Blocks World, which we consider, introduces to each of these actions a 0.25 probability that the block being handled slips out of the gripper and falls on the table.
The probabilistic version also adds high-risk high-reward actions that enable the agent to handle a tower of two blocks, i.e., it can pick up a tower of two and then place it on the table or on top of a block, but it only succeeds with probability 0.1.
In this domain, any action's effects can be reversed, e.g., if the agent tries to place block 1 but drops it, it can repeat the pick-up action until it is holding block 1, returning it to its previous state.
Consequently, the problems have no dead ends.

\textbf{Core Security Pentesting (\coresec)~\cite{Steinmetz2016,Kloessner2021:Benchmarks}.}
This domain represents a penetration testing (pentesting) scenario, where the agent aims to exploit vulnerabilities in a network to compromise specific targets in the fewest possible steps.
Each exploit can be used once, succeeding or failing with some probability.
Thus, \coresec problems violate reachability (\cref{assump:reachability}), because any exploit that is necessary to breach the network may fail, and since the exploit can not be used again, the agent has no way to achieve its goal.
Pentesting is a real-world application of SSPs, where real-world networks are modelled and presented to the agent; the agent's policies for breaching the network help administrators identify and address vulnerabilities.
This particular domain is indirectly based on a test scenario from Core Security~(\url{http://www.coresecurity.com/}).
This domain has not featured in any IPPC.

\textbf{Elevators (\elevators) [IPPC 2006].}
The agent is tasked with collecting all coins distributed within a building.
The building can be visualised as a grid with \(y\) floors and \(x\) horizontal positions.
Within a floor, the agent can freely walk between the horizontal positions.
To move vertically between floors, the agent must use an elevator, which are available at particular horizontal positions.
If the agent is at the same floor and horizontal position as an elevator, it can enter and move between floors as expected.
The coins that the agent must collect are distributed throughout locations of the building, and the agent must try to navigate between them in the minimum number of steps.
The probabilistic factor is that all locations except the ground floor have a ``gate'', and whenever the agent walks past such a gate it has a \(0.5\) probability of falling to the left-most part of the ground floor.
Thus, the agent should avoid walking past gates unnecessarily.
This domain has no dead ends because the agent can simply try to move to the next coin repeatedly, regardless of how many times it gets sent back to the left-most part of the ground floor.
Note that policies can get very expensive, particularly if there is a coin to which the closest elevator is far away horizontally, since it forces the agent to walk past many gates.

\textbf{Exploding Blocks World (\exbw) [IPPC 2004, 2006, 2008].}
This domain, like \bw, is also based on the deterministic Blocks World from IPC.
Instead of the block slipping out of the agent's gripper, the probabilistic element is that each block is rigged with an explosive that can detonate once, instantly destroying the table or the block directly beneath it.
When a block is placed, it detonates with probability 0.4 if placed on the table and 0.1 if placed on another block.
Once the table or a block is destroyed, the agent can no longer interact with them, and if they were not in their goal position, the goal becomes unreachable.
This means \exbw problems can have unavoidable dead ends: there may be a non-zero probability of reaching a state where a necessary block or the table have been destroyed.
Under the fixed-penalty transformation, planners must avoid policies that incur the give-up penalty with high probability.
Earlier versions of this domain had a bug where a block can be placed on top of itself; this bug is fixed in all of the problems we consider.

\textbf{Probabilistic PARC Printer (\parc)~\cite{trevizan17:hpom}.}
The deterministic PARC Printer domain from the IPC models a modular printer consisting of various components, where each page scheduled for printing must pass through multiple components in a specific order.
The agent's task is to print a document of \(s\) pages in as few steps as possible, where each page can have different requirements which require different components in the printer.
For example, the printer has coloured and black-and-white printing components, so pages in the document must be routed accordingly.
In the probabilistic variant, \(c\) components have a 0.1 probability of jamming, rendering the component unusable and requiring the affected page to be reprinted.
The domain has two variants:
\begin{itemize}
\item \parcr (with repair): Jammed components can be repaired and reused with a high cost.
\item \parcn (without repair): Jammed components remain permanently unusable.
\end{itemize}
This domain does not come from \citeAuthor{Kloessner2021:Benchmarks}, and has not featured in any IPPC.

\textbf{Random (\random) [IPPC 2006].}
In this domain, problems are generated randomly with no associated semantics.
\random is exceptional among the domains we consider, because it can have significantly more applicable actions in each state.
The other domains have a number applicable actions on the order of \(10^1\), occasionally \(10^2\); whereas \random has on the order of \(10^4\) (see \cref{tab:partial-ssp-sizes-some-problems} for the number of states and actions considered in a particular \random problem).

\textbf{Schedule (\schedule) [IPPC 2006, 2008].}
This is a network scheduling scenario.
There are \(m\) distinct packets being sent through the network, and \(n\) ``classes'' that can receive them.
The agent's task is to ensure that each class receives and serves one or more packets, with the network staying alive.
In each timestep, each class may randomly receive an incoming packet, which sets the packet's ``time-to-live'' to \(k\) timesteps.
Then, the agent must choose whether the class should serve the packet, or hold onto it until the time-to-live expires, which causes the packet to be reclaimed, i.e., the current class no longer has access to the packet, and the packet gets randomly sent to one of the classes, as before.
A complicating factor is that, whenever a packet gets reclaimed, the previous owner class has a small probability of locking up, which makes the network not alive, i.e., the agent fails its task.
Different classes have different probabilities of locking up, so the agent should use the low-probability ones for reclaiming packets, and avoid reclaiming on the high-probability ones.

\textbf{Search and Rescue (\sar) [IPPC 2008].}
An autonomous helicopter must find a safe spot to land in order to save a survivor.
There are \(n\) many locations, and it is not known a priori whether it is possible to land there or not.
The agent must explore the locations -- each explore action probabilistically determines whether it is possible to land at that location, or not.
Once the agent finds a safe location to land, it can pick up and deliver the survivor to safety.
The more actions the agent applies, the less likely the human survives.
The agent's task is to maximise the human's chance of survival.

\textbf{Sysadmin (\sysadmin) [IPPC 2008].}
The agent must manage a network of servers.
In each timestep, a server has a probability of failing.
If the server's neighbouring servers are all operational, then this server's probability of failing is low (probability \(0.05\)), but if any of its neighbours have failed, then this server's probability of failing is significantly higher (probability \(0.25\)).
The only decision available to the agent is to reboot one problematic server per timestep, fixing it with probability \(0.9\).
In the version we consider, all servers are initially broken, and the agent must make all servers operational while minimising the number of reboots over expectation.
Note that \sysadmin has zero-cost actions to handle the bookkeeping of which servers fail, which violates our assumption that costs are strictly positive.
However, this is not an issue because we manually verified that there are no zero-cost cycles and \cref{assump:infinite-improper} holds.

\textbf{Triangle Tire World (\tritireworld) [IPPC 2008]}
The agent is presented with a network of locations arranged in a triangular layout with corners \(A, B, C\), as in \cref{fig:tireworld}, and the agent's task is to drive from corner \(A\) to \(C\).
Each time the agent moves, the car gets a flat tyre with probability $0.5$.
If the car gets a flat tyre the agent must replace the flat with a spare tyre if it has one; when the agent has no spare tyre, the car is stuck, i.e., it is in a dead end with no actions available.
The agent can only load spare tyres in specific locations (circles in \cref{fig:tireworld}), and can only keep one spare tyre at a time.
This problem was introduced by \citeAuthor{Little2007:PlannerVsReplanner} to be probabilistically interesting: if the probabilistic effects are ignored, the agent will take the direct path from \(A\) to \(C\), which maximises the probability of encountering a dead end; whereas an optimal policy navigates from \(A\) to \(B\) to \(C\), so that spare tyres are always available and it can reach the goal with probability 1.
Thus, \tritireworld has dead ends, but satisfies the reachability assumption (\cref{assump:reachability}) because a proper policy exists.
\tritireworld problems are specified by \(n \in \Naturals\).
We show the layouts of \tritireworld with \(n=1,2,3\) in \cref{fig:tireworld}; the problem with \(n=1\) can be considered a base case, and the problems with \(n>1\) are constructed by connecting \(n\) copies of the \(n=1\) problem along each edge of the triangle.
With this construction, the state-space grows exponentially with respect to \(n\), which is inconvenient for running experiments: it can happen that problem \(n\) is solved easily by all algorithms, but all algorithms time-out for \(n+1\).
To address this shortcoming, we introduced \tritireworld with Head-start~\cite{Schmalz2024:cgilao}, which gives a finer granularity of problem difficulty.
This extension was not taken from \citeAuthor{Kloessner2021:Benchmarks}, and has not featured in any IPPCs.
A \tritireworld with Head-start problem is defined by \(\tritireworld(n, d)\), where \(n \in \Naturals\) denotes the problem size as before, and \(d \in \Naturals\) denotes that the agent is given a head-start along the edge \(AB\), thus the agent has the initial location \(x\)-\(y\) with \(x=(d+1)\) and \(y=1\) in \cref{fig:tireworld}.
For fixed \(n\), \(d\) can take the values \(0 \leq d \leq 2n\) where
\begin{itemize}
\item \(d=0\) defines the original \tritireworld problem, with no head-start, where the agent starts at corner \(A\),
\item \(d=2n\) is the easiest variant of the problem where the agent is given the biggest head-start starting at corner \(B\).
\end{itemize}

So, by starting at \(d=0\) we have the original \tritireworld problem, and by increasing \(d\) we make the problem easier, giving us problems of intermediate difficulty between the standard \tritireworld problems.

\newcommand\triangleScale{0.6}
\newcommand\labelTightness{0.1cm}
\begin{figure}[ht!]
\centering
\scalebox{\triangleScale}{
\begin{tikzpicture}
\Vertex[x=0.350,y=0.350,color={230,230,230},label=1-1,shape=rectangle,RGB]{l-1-1}
\Vertex[x=4.150,y=0.350,color={230,230,230},label=1-3,shape=rectangle,RGB]{l-1-3}
\Vertex[x=2.250,y=0.350,color={230,230,230},label=1-2,shape=rectangle,RGB]{l-1-2}
\Vertex[x=1.300,y=1.500,color={230,230,230},label=2-1,shape=circle,RGB]{l-2-1}
\Vertex[x=3.200,y=1.500,color={230,230,230},label=2-2,shape=circle,RGB]{l-2-2}
\Vertex[x=2.250,y=2.650,color={230,230,230},label=3-1,shape=circle,RGB]{l-3-1}
\node [below = \labelTightness of l-1-1]{A};
\node [right = \labelTightness of l-3-1]{B};
\node [below = \labelTightness of l-1-3]{C};
\Edge[,Direct](l-1-1)(l-1-2)
\Edge[,Direct](l-1-1)(l-2-1)
\Edge[,Direct](l-1-2)(l-1-3)
\Edge[,Direct](l-1-2)(l-2-2)
\Edge[,Direct](l-2-1)(l-1-2)
\Edge[,Direct](l-2-1)(l-3-1)
\Edge[,Direct](l-2-2)(l-1-3)
\Edge[,Direct](l-3-1)(l-2-2)
\end{tikzpicture}
\hspace{5mm}
\begin{tikzpicture}
\Vertex[x=0.350,y=0.350,color={230,230,230},label=1-1,shape=rectangle,RGB]{l-1-1}
\Vertex[x=7.150,y=0.350,color={230,230,230},label=1-5,shape=rectangle,RGB]{l-1-5}
\Vertex[x=2.050,y=0.350,color={230,230,230},label=1-2,shape=rectangle,RGB]{l-1-2}
\Vertex[x=3.750,y=0.350,color={230,230,230},label=1-3,shape=rectangle,RGB]{l-1-3}
\Vertex[x=5.450,y=0.350,color={230,230,230},label=1-4,shape=rectangle,RGB]{l-1-4}
\Vertex[x=1.200,y=1.425,color={230,230,230},label=2-1,shape=circle,RGB]{l-2-1}
\Vertex[x=2.900,y=1.425,color={230,230,230},label=2-2,shape=circle,RGB]{l-2-2}
\Vertex[x=4.600,y=1.425,color={230,230,230},label=2-3,shape=circle,RGB]{l-2-3}
\Vertex[x=6.300,y=1.425,color={230,230,230},label=2-4,shape=circle,RGB]{l-2-4}
\Vertex[x=2.050,y=2.500,color={230,230,230},label=3-1,shape=circle,RGB]{l-3-1}
\Vertex[x=5.450,y=2.500,color={230,230,230},label=3-3,shape=circle,RGB]{l-3-3}
\Vertex[x=3.750,y=2.500,color={230,230,230},label=3-2,shape=rectangle,RGB]{l-3-2}
\Vertex[x=2.900,y=3.575,color={230,230,230},label=4-1,shape=circle,RGB]{l-4-1}
\Vertex[x=4.600,y=3.575,color={230,230,230},label=4-2,shape=circle,RGB]{l-4-2}
\Vertex[x=3.750,y=4.650,color={230,230,230},label=5-1,shape=circle,RGB]{l-5-1}
\node [below = \labelTightness of l-1-1]{A};
\node [right = \labelTightness of l-5-1]{B};
\node [below = \labelTightness of l-1-5]{C};
\Edge[,Direct](l-1-1)(l-1-2)
\Edge[,Direct](l-1-1)(l-2-1)
\Edge[,Direct](l-1-2)(l-1-3)
\Edge[,Direct](l-1-2)(l-2-2)
\Edge[,Direct](l-1-3)(l-1-4)
\Edge[,Direct](l-1-3)(l-2-3)
\Edge[,Direct](l-1-4)(l-1-5)
\Edge[,Direct](l-1-4)(l-2-4)
\Edge[,Direct](l-2-1)(l-1-2)
\Edge[,Direct](l-2-1)(l-3-1)
\Edge[,Direct](l-2-2)(l-1-3)
\Edge[,Direct](l-2-3)(l-1-4)
\Edge[,Direct](l-2-3)(l-3-3)
\Edge[,Direct](l-2-4)(l-1-5)
\Edge[,Direct](l-3-1)(l-3-2)
\Edge[,Direct](l-3-1)(l-2-2)
\Edge[,Direct](l-3-1)(l-4-1)
\Edge[,Direct](l-3-3)(l-2-4)
\Edge[,Direct](l-3-2)(l-3-3)
\Edge[,Direct](l-3-2)(l-4-2)
\Edge[,Direct](l-4-1)(l-3-2)
\Edge[,Direct](l-4-1)(l-5-1)
\Edge[,Direct](l-4-2)(l-3-3)
\Edge[,Direct](l-5-1)(l-4-2)
\end{tikzpicture}
\hspace{5mm}
\begin{tikzpicture}
\Vertex[x=0.350,y=0.350,color={230,230,230},label=1-1,shape=rectangle,RGB]{l-1-1}
\Vertex[x=10.150,y=0.350,color={230,230,230},label=1-7,shape=rectangle,RGB]{l-1-7}
\Vertex[x=1.983,y=0.350,color={230,230,230},label=1-2,shape=rectangle,RGB]{l-1-2}
\Vertex[x=3.617,y=0.350,color={230,230,230},label=1-3,shape=rectangle,RGB]{l-1-3}
\Vertex[x=5.250,y=0.350,color={230,230,230},label=1-4,shape=rectangle,RGB]{l-1-4}
\Vertex[x=6.883,y=0.350,color={230,230,230},label=1-5,shape=rectangle,RGB]{l-1-5}
\Vertex[x=8.517,y=0.350,color={230,230,230},label=1-6,shape=rectangle,RGB]{l-1-6}
\Vertex[x=1.167,y=1.400,color={230,230,230},label=2-1,shape=circle,RGB]{l-2-1}
\Vertex[x=2.800,y=1.400,color={230,230,230},label=2-2,shape=circle,RGB]{l-2-2}
\Vertex[x=4.433,y=1.400,color={230,230,230},label=2-3,shape=circle,RGB]{l-2-3}
\Vertex[x=6.067,y=1.400,color={230,230,230},label=2-4,shape=circle,RGB]{l-2-4}
\Vertex[x=7.700,y=1.400,color={230,230,230},label=2-5,shape=circle,RGB]{l-2-5}
\Vertex[x=9.333,y=1.400,color={230,230,230},label=2-6,shape=circle,RGB]{l-2-6}
\Vertex[x=1.983,y=2.450,color={230,230,230},label=3-1,shape=circle,RGB]{l-3-1}
\Vertex[x=8.517,y=2.450,color={230,230,230},label=3-5,shape=circle,RGB]{l-3-5}
\Vertex[x=3.617,y=2.450,color={230,230,230},label=3-2,shape=rectangle,RGB]{l-3-2}
\Vertex[x=5.250,y=2.450,color={230,230,230},label=3-3,shape=rectangle,RGB]{l-3-3}
\Vertex[x=6.883,y=2.450,color={230,230,230},label=3-4,shape=rectangle,RGB]{l-3-4}
\Vertex[x=2.800,y=3.500,color={230,230,230},label=4-1,shape=circle,RGB]{l-4-1}
\Vertex[x=4.433,y=3.500,color={230,230,230},label=4-2,shape=circle,RGB]{l-4-2}
\Vertex[x=6.067,y=3.500,color={230,230,230},label=4-3,shape=circle,RGB]{l-4-3}
\Vertex[x=7.700,y=3.500,color={230,230,230},label=4-4,shape=circle,RGB]{l-4-4}
\Vertex[x=3.617,y=4.550,color={230,230,230},label=5-1,shape=circle,RGB]{l-5-1}
\Vertex[x=6.883,y=4.550,color={230,230,230},label=5-3,shape=circle,RGB]{l-5-3}
\Vertex[x=5.250,y=4.550,color={230,230,230},label=5-2,shape=rectangle,RGB]{l-5-2}
\Vertex[x=4.433,y=5.600,color={230,230,230},label=6-1,shape=circle,RGB]{l-6-1}
\Vertex[x=6.067,y=5.600,color={230,230,230},label=6-2,shape=circle,RGB]{l-6-2}
\Vertex[x=5.250,y=6.650,color={230,230,230},label=7-1,shape=circle,RGB]{l-7-1}
\node [below = \labelTightness of l-1-1]{A};
\node [right = \labelTightness of l-7-1]{B};
\node [below = \labelTightness of l-1-7]{C};
\Edge[,Direct](l-1-1)(l-1-2)
\Edge[,Direct](l-1-1)(l-2-1)
\Edge[,Direct](l-1-2)(l-1-3)
\Edge[,Direct](l-1-2)(l-2-2)
\Edge[,Direct](l-1-3)(l-1-4)
\Edge[,Direct](l-1-3)(l-2-3)
\Edge[,Direct](l-1-4)(l-1-5)
\Edge[,Direct](l-1-4)(l-2-4)
\Edge[,Direct](l-1-5)(l-1-6)
\Edge[,Direct](l-1-5)(l-2-5)
\Edge[,Direct](l-1-6)(l-1-7)
\Edge[,Direct](l-1-6)(l-2-6)
\Edge[,Direct](l-2-1)(l-1-2)
\Edge[,Direct](l-2-1)(l-3-1)
\Edge[,Direct](l-2-2)(l-1-3)
\Edge[,Direct](l-2-3)(l-1-4)
\Edge[,Direct](l-2-3)(l-3-3)
\Edge[,Direct](l-2-4)(l-1-5)
\Edge[,Direct](l-2-5)(l-1-6)
\Edge[,Direct](l-2-5)(l-3-5)
\Edge[,Direct](l-2-6)(l-1-7)
\Edge[,Direct](l-3-1)(l-3-2)
\Edge[,Direct](l-3-1)(l-2-2)
\Edge[,Direct](l-3-1)(l-4-1)
\Edge[,Direct](l-3-5)(l-2-6)
\Edge[,Direct](l-3-2)(l-3-3)
\Edge[,Direct](l-3-2)(l-4-2)
\Edge[,Direct](l-3-3)(l-3-4)
\Edge[,Direct](l-3-3)(l-2-4)
\Edge[,Direct](l-3-3)(l-4-3)
\Edge[,Direct](l-3-4)(l-3-5)
\Edge[,Direct](l-3-4)(l-4-4)
\Edge[,Direct](l-4-1)(l-3-2)
\Edge[,Direct](l-4-1)(l-5-1)
\Edge[,Direct](l-4-2)(l-3-3)
\Edge[,Direct](l-4-3)(l-3-4)
\Edge[,Direct](l-4-3)(l-5-3)
\Edge[,Direct](l-4-4)(l-3-5)
\Edge[,Direct](l-5-1)(l-5-2)
\Edge[,Direct](l-5-1)(l-4-2)
\Edge[,Direct](l-5-1)(l-6-1)
\Edge[,Direct](l-5-3)(l-4-4)
\Edge[,Direct](l-5-2)(l-5-3)
\Edge[,Direct](l-5-2)(l-6-2)
\Edge[,Direct](l-6-1)(l-5-2)
\Edge[,Direct](l-6-1)(l-7-1)
\Edge[,Direct](l-6-2)(l-5-3)
\Edge[,Direct](l-7-1)(l-6-2)
\end{tikzpicture}

}
\caption{Triangle Tire World problems 1, 2, 3 (left to right).}
\label{fig:tireworld}
\end{figure}
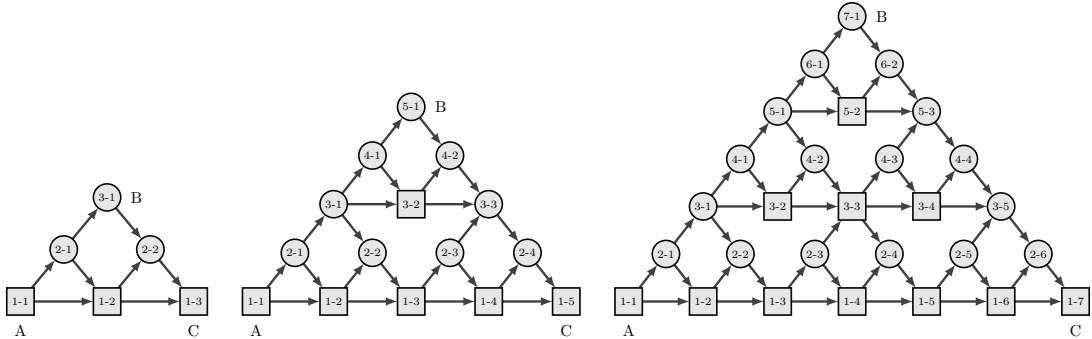

\textbf{Rectangle Tire World (\recttireworld) [IPPC 2008]}

The agent must drive from the bottom-left corner of a rectangular network of roads to the top-right, avoiding dangerous areas as much as possible.
This domain is inspired by \tritireworld~\cite{Little2007:PlannerVsReplanner}, but with two key changes.
First, as the name suggests, the road network is rectangular, rather than triangular, which makes it easier to encode large road systems compactly.
On this network, the agent can drive to orthogonally adjacent cities, as well as diagonally adjacent ones.
Second, the agent can no longer get a flat tyre, instead there are dangerous locations.
If the agent enters a dangerous location, it crashes with certainty, killing the agent, so it can not achieve any goal.
Entire rows and columns of the rectangular grid are specified to be unsafe.
Driving diagonally is ``somewhat dangerous,'' so anytime the agent performs this action it crashes with probability \(0.2\).
The agent should minimise the number of drives between cities, while minimising the probability of crashing.
Unsafe locations may be unavoidable in this domain, so \recttireworld does not satisfy the reachability assumption (\cref{assump:reachability}).
Moreover, randomly generated instances of this problem may be unsolvable in the sense that there is no way for the agent to reach the goal with probability \(> 0\).

\textbf{Zenotravel (\zenotravel) [IPPC 2004, 2006, 2008].}
The agent must route aeroplanes to move people between cities.
When grounded, aeroplanes board and debark passengers as expected.
Aeroplanes can fly at a normal speed or ``zoom'' at a faster speed, allowing the vehicle to reach its destination faster at the cost of more fuel consumption.
Once an aeroplane's fuel is exhausted, it must refuel.
The boarding, debarking, flying to a destination (at any speed), and refuelling actions all succeed only with a specified probability, and upon failure nothing happens.
This means the domain has no dead ends, since any action can be repeated until it succeeds.

\subsection{Presentation of Results}\label{sec:experiments-presentation-of-results}

In the following sections, we will be using similar tables and plots to summarise the results of our experiments:
\begin{itemize}
\item \textbf{Rank tables.}
To obtain these tables, we start by creating a ranking of algorithms' runtimes for each instance, where the fastest algorithm on an instance gets rank \(1\), the second-fastest gets \(2\), and so on.
If multiple algorithms tie with identical runtimes, then these algorithms receive the average rank of the group, e.g., if algorithm \(a\) is strictly fastest, algorithms \(b, c, d\) are tied second, and algorithm \(e\) is strictly slowest, then \(a\) receives rank 1, \(b, c, d\) all receive \(\frac{2+3+4}{3} = 3\), and \(e\) gets \(5\).
If an algorithm times out on an instance, it receives the largest rank for that instance, tied among all algorithms that timed out.
Our rank tables present each algorithm's average (mean) rank over all relevant instances.
Rankings are an intuitive way to determine which algorithms are fastest overall, and abstract away from the concrete times taken.
Different rank tables \textit{are not comparable}.
\item \textbf{Coverage tables.}
This is a standard way to summarise results in the planning community.
For each domain, we present the number of instances that an algorithm and heuristic were able to solve before timing out.
This gives a good indication of how well an algorithm scales, again without worrying about concrete timings.
Different coverage tables \textit{are comparable}.
\item \textbf{Cumulative plots.}
These plots show how many instances an algorithm and heuristic was able to solve with a budget over time, number of \qvalues, or number of calls to the heuristic.
In the cumulative plot over time, every point \(x,y\) on a curve denotes that the relevant algorithm and heuristic solved \(y\)-many instances within \(x\) seconds, and the other plots are similar w.r.t. their respective budgets.
Note that these plots effectively show total coverage, as in the coverage table, but over slices of time (or \qvalues, or heuristic calls).
Different cumulative plots \textit{are comparable}.
\item \textbf{Partial SSP size percentage tables.}
These tables show how large an algorithm and heuristic's final partial SSP is, as a percentage of a baseline's final partial SSP sizes.
Thus, the \(|\partstates|\%\) and \(|\partactions|\%\) give the number of states \(|\partstates|\) and actions \(|\partactions| = \sum_{\s \in \partstates} |\partactions(\s)|\) in the relevant algorithm's final partial SSP, as a percentage of the baseline's counterparts.
We use a percentage \wrt the baseline rather than the entire SSP because we want to emphasise the difference from the baseline, rather than how much smaller the partial SSPs are than the original SSP.\footnote{Our SSPs are so large, that it is often impractical to compute the reachable states to calculate its size --- recall that we use Probabilistic PDDL to represent our problems, and the state-space can be exponentially large \wrt the encoding.}
Note that these tables can only feature \ilao and \cgilao type algorithms, because we have not defined partial SSPs for others.
Different partial SSP size tables \textit{are only comparable if the baseline is the same}.
We consider a single baseline across all these tables, so they are indeed comparable.
\item \textbf{Per-problem tables.}
We present more detailed tables that give the coverage, average runtime, and average \qvalues for each algorithm and heuristic per problem (rather than per domain) at \url{https://github.com/schmlz/cgilao/tree/main/results} and \cite{Schmalz2024:CgilaoSourceCode}.
These are only given externally, but we will occasionally refer to them in this section.
\end{itemize}

\subsection{What are the best versions of \cgilao?}\label{sec:experiments-best-version-of-cgilao}

We investigate which settings of \cgilao perform best, in order to eliminate unsuccessful variants and focus on only successful ones in the following sections.
In particular, we are comparing \cgilao with different expansion settings:
\begin{itemize}
\item \cgilaoBellmanSingle: expand with a single greedy action (\bellmanSingle from \cref{sec:bellman-expansions}).
\item \cgilaoBellmanTied: expand with all tied-greedy actions (\bellmanTied from \cref{sec:bellman-expansions}).
\item \cgilaoTrial: expand with rollout using trial (\cref{sec:expansion-rollout}).
We use \(t_{\max} = 100\), because preliminary results suggested this was a reasonable choice.
\item \cgilaoFFAO: expand with rollout using FF on all-outcomes determinisation (\cref{sec:expansion-rollout}).
\item \cgilaoFFMLO: expand with rollout using FF on most-likely-outcomes determinisation (\cref{sec:expansion-rollout}).
\end{itemize}

\Cref{tab:ranks-cgilao-variants} is a rank table that presents the average ranks and \(95\%\) Confidence Interval (CI) for the variants of \cgilao.
For all heuristics, the ordering of ranks is identical: from 1 to 5, we have \cgilaoBellmanTied, \cgilaoBellmanSingle, \cgilaoTrial, \cgilaoFFAO, \cgilaoFFMLO.
The \(95\%\) CI shows that our experiments were unable to find a statistical difference between the average rank of the top two (\cgilaoBellmanTied and \cgilaoBellmanSingle), but there is a statistically significant difference between the average ranks of the top two and \cgilaoTrial, and between \cgilaoTrial and the variants with FF expansion.
In the coverage table \cref{tab:coverage-cgilao-variants}, we see that \cgilaoBellmanTied, \cgilaoBellmanSingle, and \cgilaoTrial have similar total coverage, and the FF expansions are clustered together with significantly lower total coverage.
Observe that with \roc, \cgilaoBellmanSingle has a slightly higher total coverage than \cgilaoBellmanTied by \data{7} instances, even though it has a lower average rank.
This is not a contradiction, and happens because \cgilaoBellmanTied tends to be faster on the other instances.
Furthermore, the instances that \cgilaoBellmanSingle solves and \cgilaoBellmanTied fails to solve are not solved by the other algorithms either, so \cgilaoBellmanTied's tied-last rank remains small there.
Overall, the coverage supports that \cgilaoBellmanTied and \cgilaoBellmanSingle are the fastest variants.
Finally, we consider the cumulative plot over time in \cref{fig:main-cumulative-cgilao-variants}.
This clearly shows that \cgilaoBellmanTied, \cgilaoBellmanSingle, and \cgilaoTrial outperform the FF expansions with a significant margin over all heuristics.
This performance gap becomes clear for easy problems that take around \data{\(50\)} seconds to solve, and is carried through all larger problems.
Among the top three algorithms (\cgilaoBellmanTied, \cgilaoBellmanSingle, \cgilaoTrial), it is difficult to determine a clear winner, but we observe that \cgilaoTrial's curve is underneath the curves of \cgilaoBellmanTied and \cgilaoBellmanSingle most of the time.
Combining these observations, we declare that overall, \cgilaoBellmanTied and \cgilaoBellmanSingle are the tied-best variants of \cgilao, \cgilaoTrial is a close second, and the FF expansions are last by a significant margin.
We now compare the variants in more detail.

\begin{table}[t!]
\centering
\adjustbox{max width=\linewidth}{

\begin{tabular}{|l S[table-format=1.2(1.2)]|}
\multicolumn{2}{c}{\roc} \\
\cgilaoBellmanTied & 2.12(0.08) \\
\cgilaoBellmanSingle & 2.17(0.07) \\
\cgilaoTrial & 2.68(0.09) \\
\cgilaoFFAO & 3.96(0.08) \\
\cgilaoFFMLO & 4.08(0.08) \\
\end{tabular}

\begin{tabular}{|l S[table-format=1.2(1.2)]|}
\multicolumn{2}{c}{\pdb} \\
\cgilaoBellmanTied & 2.24(0.08) \\
\cgilaoBellmanSingle & 2.39(0.07) \\
\cgilaoTrial & 2.78(0.09) \\
\cgilaoFFAO & 3.75(0.08) \\
\cgilaoFFMLO & 3.85(0.08) \\
\end{tabular}

\begin{tabular}{|l S[table-format=1.2(1.2)]|}
\multicolumn{2}{c}{\lmcut} \\
\cgilaoBellmanTied & 2.25(0.07) \\
\cgilaoBellmanSingle & 2.30(0.07) \\
\cgilaoTrial & 2.63(0.08) \\
\cgilaoFFAO & 3.80(0.07) \\
\cgilaoFFMLO & 4.02(0.08) \\
\end{tabular}

}
\caption{Runtime ranking of \cgilao variants within a specified heuristic (mean and 95\% CI over all instances).}
\label{tab:ranks-cgilao-variants}
\end{table}

\begin{table}[t!]
\centering
\adjustbox{max width=\linewidth}{

\begin{tabular}{llrrrrrrrrrrrrrr}
 & & \rotatebox{90}{\bw} & \rotatebox{90}{\coresec} & \rotatebox{90}{\elevators} & \rotatebox{90}{\exbw} & \rotatebox{90}{\parcn} & \rotatebox{90}{\parcr} & \rotatebox{90}{\random} & \rotatebox{90}{\recttireworld} & \rotatebox{90}{\sar} & \rotatebox{90}{\schedule} & \rotatebox{90}{\sysadmin} & \rotatebox{90}{\tritireworld} & \rotatebox{90}{\zenotravel} & \rotatebox{90}{total} \\
 \cline{1-16}
 & \# of instances & 110 & 35 & 75 & 105 & 30 & 30 & 75 & 70 & 25 & 45 & 20 & 40 & 45 & 705 \\
\cline{1-16}
\multirow[c]{5}{*}{\rotatebox{90}{\roc}} & \cgilaoBellmanTied & \bestCovr{105} & 25 & \bestCovr{75} & \bestCovr{105} & \bestCovr{30} & \bestCovr{25} & 36 & 60 & 20 & \bestCovr{45} & \bestCovr{20} & 35 & \bestCovr{40} & 621 \\
 & \cgilaoBellmanSingle & \bestCovr{105} & 25 & \bestCovr{75} & \bestCovr{105} & \bestCovr{30} & \bestCovr{25} & \bestCovr{43} & 60 & 20 & \bestCovr{45} & \bestCovr{20} & 35 & \bestCovr{40} & \bestCovr{628} \\
 & \cgilaoTrial & \bestCovr{105} & 25 & 70 & \bestCovr{105} & \bestCovr{30} & 24 & 35 & 60 & 21 & \bestCovr{45} & \bestCovr{20} & 35 & \bestCovr{40} & 615 \\
 & \cgilaoFFAO & 100 & 20 & 59 & 95 & \bestCovr{30} & 20 & 35 & 60 & 20 & 35 & 10 & 25 & 30 & 539 \\
 & \cgilaoFFMLO & 85 & 20 & 61 & 95 & \bestCovr{30} & 20 & 40 & 60 & 20 & 35 & 15 & 26 & 26 & 533 \\

\cline{1-16} \multirow[c]{5}{*}{\rotatebox{90}{\pdb}} & \cgilaoBellmanTied & 90 & \bestCovr{30} & \bestCovr{75} & 90 & 0 & 0 & 30 & 60 & \bestCovr{24} & 30 & \bestCovr{20} & \bestCovr{38} & \bestCovr{40} & 527 \\
 & \cgilaoBellmanSingle & 90 & \bestCovr{30} & \bestCovr{75} & 90 & 0 & 0 & 30 & 60 & 20 & 30 & \bestCovr{20} & 37 & \bestCovr{40} & 522 \\
 & \cgilaoTrial & 90 & \bestCovr{30} & 70 & 90 & 0 & 0 & 30 & 60 & \bestCovr{24} & 30 & \bestCovr{20} & 35 & \bestCovr{40} & 519 \\
 & \cgilaoFFAO & 85 & \bestCovr{30} & 61 & 90 & 5 & 0 & 30 & 60 & 20 & 35 & 10 & 21 & 35 & 482 \\
 & \cgilaoFFMLO & 85 & \bestCovr{30} & 61 & 90 & 5 & 0 & 30 & 60 & 20 & 35 & 15 & 22 & 30 & 483 \\
\cline{1-16}

\cline{1-16} \multirow[c]{5}{*}{\rotatebox{90}{\lmcut}} & \cgilaoBellmanTied & 45 & 25 & 70 & 101 & \bestCovr{30} & 20 & 20 & \bestCovr{65} & \bestCovr{24} & \bestCovr{45} & \bestCovr{20} & 30 & 25 & 520 \\
 & \cgilaoBellmanSingle & 45 & 25 & 71 & 100 & \bestCovr{30} & 20 & 20 & \bestCovr{65} & 20 & \bestCovr{45} & \bestCovr{20} & 30 & 25 & 516 \\
 & \cgilaoTrial & 45 & 25 & 70 & 100 & \bestCovr{30} & 20 & 20 & \bestCovr{65} & 22 & \bestCovr{45} & \bestCovr{20} & 30 & 25 & 517 \\
 & \cgilaoFFAO & 45 & 20 & 63 & 95 & 25 & 15 & 25 & \bestCovr{65} & 20 & 35 & 10 & 20 & 20 & 458 \\
 & \cgilaoFFMLO & 45 & 20 & 62 & 95 & 22 & 15 & 25 & \bestCovr{65} & 20 & 35 & 15 & 20 & 15 & 454 \\
\cline{1-16}

\end{tabular}

}
\caption{Coverage for each \cgilao variant and considered heuristic over the benchmark domains.
The highest coverage for each problem is marked with boldface.
}\label{tab:coverage-cgilao-variants}
\end{table}

\begin{figure}[t!]
\raggedright
\includegraphics[scale=0.5, valign=t]{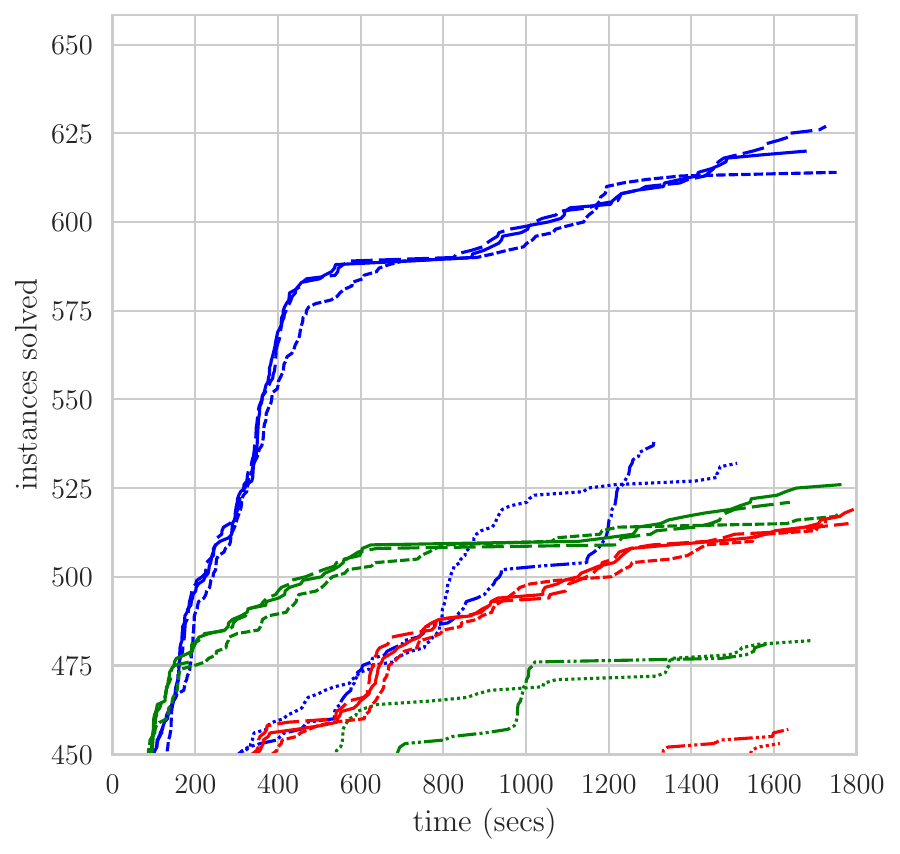}
\includegraphics[scale=0.5, valign=t]{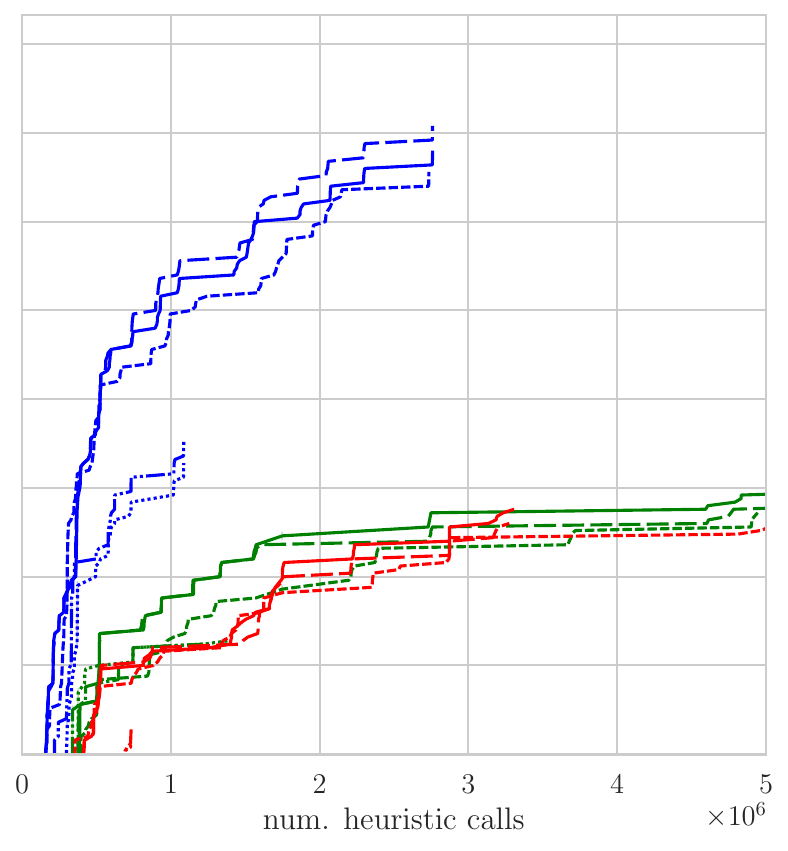}
\includegraphics[scale=0.5, valign=t]{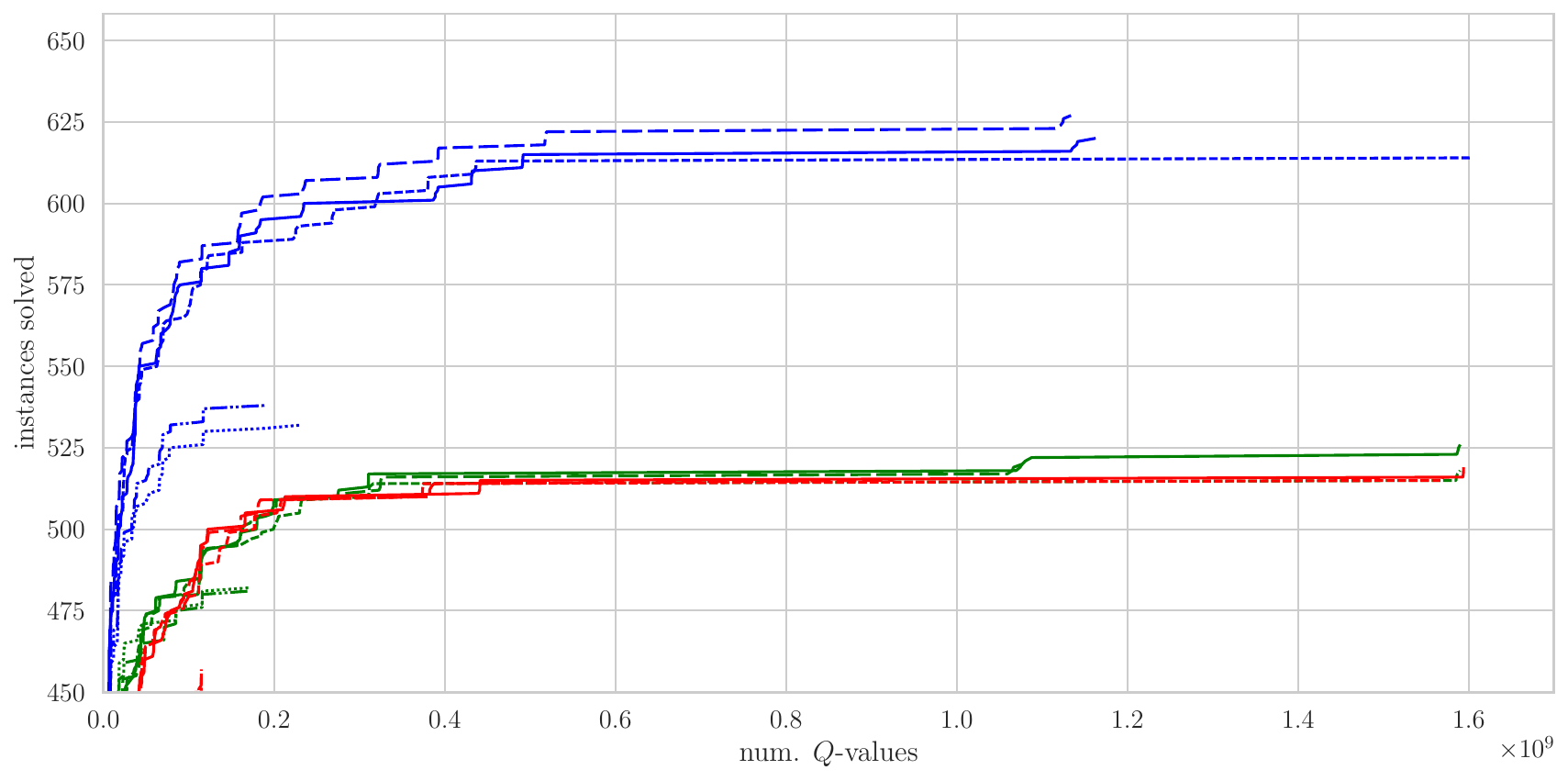}
\begin{center}
\includegraphics[scale=0.5, valign=t]{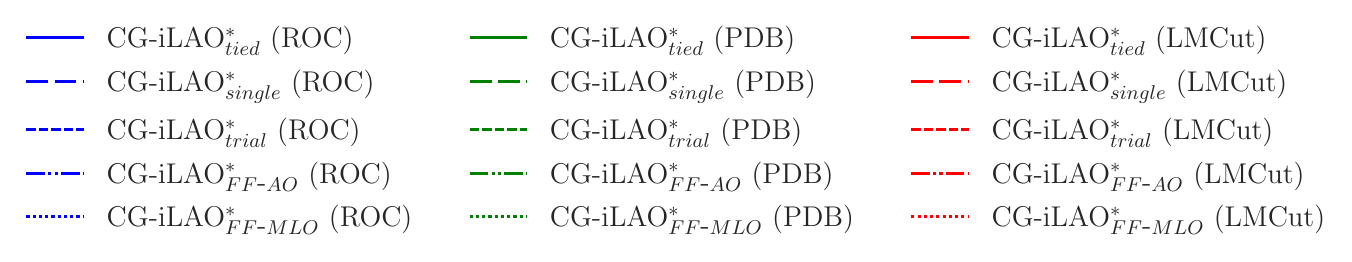}
\end{center}
\caption{
For each algorithm and heuristic, the cumulative plot of how many problems and seeds were solved w.r.t. time in seconds (top left), number of \qvalues (top right), and number of calls to heuristic (bottom).
To focus on where the algorithms are different we start the \(y\)-axis at \data{450} instances, and end the \(x\)-axis at \data{\(5 \times 10^6\)} heuristic calls.
}
\label{fig:main-cumulative-cgilao-variants}
\end{figure}

We have seen that \cgilaoBellmanTied and \cgilaoBellmanSingle are the best-performing variants of \cgilao, with no significant difference between them, overall.
However, we highlight that their behaviours are domain dependent, i.e., they have different strengths and weaknesses, and are not simply duplicates of each other.
In particular, we focus on \random with \roc, where \cgilaoBellmanSingle has a higher coverage than \cgilaoBellmanTied by \data{7} instances.
\random is an outlier among our domains, because it is the only one that is not inspired by real-world problems, and it has many more applicable actions per state than our other domains, on the order of \(10^4\) where the others have \(10^1\) or \(10^2\) (as we will see in \cref{tab:partial-ssp-sizes-some-problems}).
The key difference is that \cgilaoBellmanSingle has significantly fewer actions in its final partial SSP, by up to two orders of magnitude (we will see this later in \cref{sec:experiments-partial-ssp-sizes}).
Interestingly, this does not translate to savings in \qvalue computations nor heuristic calls, but rather, the difference in performance comes from the overhead of managing the partial SSP's data-structures.
Clearly, \cgilaoBellmanTied is adding many tied-greedy actions to its partial SSP that are not necessary for finding an optimal policy, and \cgilaoBellmanSingle avoids the issue in this case by adding only one greedy action in its expansion.
\cgilaoBellmanSingle also produces smaller partial SSPs on \random with the other heuristics, but neither of the algorithms are able to solve the large problems with these heuristics, so the effect of differently sized partial SSPs is not noticeable.
Over other domains, \cgilaoBellmanSingle generally produces smaller partial SSPs than \cgilaoBellmanTied (we investigate this further in \cref{sec:experiments-partial-ssp-sizes}), but the size difference is not as extreme, and does not result in a significant performance difference.
On the other hand, \cgilaoBellmanTied has a higher coverage on \sar with \lmcut and \pdb.
As we will see in \cref{sec:experiments-state-of-the-art}, these are problems where \ilao performs well, i.e., adding all applicable actions pays off.
In these problems, the partial SSP sizes are similar, but \cgilaoBellmanTied has fewer \qvalues, presumably because it avoids the overhead of inserting missing tied-greedy actions.
Outside these configurations, there is no significant difference between the two.
Overall, this comparison shows that \cgilaoBellmanTied and \cgilaoBellmanSingle, while similar, have different strengths and weaknesses that make them more or less suitable to specific problems.

Now, we investigate \cgilaoTrial.
Using the cumulative plots (\cref{fig:main-cumulative-cgilao-variants}), it is difficult to distinguish \cgilaoTrial from \cgilaoBellmanTied and \cgilaoBellmanSingle in terms of time and \qvalues, but we can clearly see that it is less performant in terms of heuristic calls from \data{\(0.53 \times 10^6\)} calls for \roc and \pdb, and for \lmcut it is again difficult to distinguish.
The similarity in \qvalues makes it clear that \cgilaoTrial's trials fail to add all useful actions in a way that saves \qvalues, but it is also not adding a significant number of unnecessary actions that would inflate the number of \qvalues.
The larger number of heuristic calls tells us that \cgilaoTrial considers more states than \cgilaoBellmanTied and \cgilaoBellmanSingle, which makes sense because its trials deliberately probe states beyond the policy envelope.
This is arguably a weakness because \cgilaoTrial needs to perform more heuristic computations without savings in \qvalues.
However, with the less informative \lmcut heuristic, this weakness disappears and \cgilaoTrial has a similar number of heuristic calls to the others.
This is because \cgilaoBellmanTied and \cgilaoBellmanSingle need to eventually expand the same states to prove optimality.
In this case, \cgilaoTrial is more competitive, and even has a slightly higher total coverage than \cgilaoBellmanSingle by \data{1} instance.
This suggests that the trials do not pay off with informative heuristics: it is more efficient to expand greedily; whereas with less informative heuristics, it can be beneficial to run a trial before deeming an action useful.

Overall, the FF expansions performed poorly over our benchmarks, which can be attributed to two factors: the FF expansion often adds unneeded states and actions, and calling FF for each expansion is expensive.
In the cumulative plots in \cref{fig:main-cumulative-cgilao-variants}, the FF expansions incur significantly more \qvalue computations and heuristic calls, which confirms that more states and actions are being considered than necessary.
Since \qvalue computations and heuristic calls are the main computation expenses of \cgilao, it is clear that the FF expansion variants can not keep up.
This increase in calls happens because FF's plans do not coincide with an optimal policy, which is to be expected, given that it completely ignores probabilistic effects with the determinisations.
These issues are only exacerbated by the additional cost of running FF for each expansion, rather than a single Bellman backup.
The only case where FF expansions pay off in our experiments is in \parcn with \pdb.
This is because the \pdb heuristic is very uninformative on the \parcn domain, and FF happens to expand states and actions that are sufficiently close to the optimal policy.

\subsection{Comparing \cgilao to the State-of-the-art}\label{sec:experiments-state-of-the-art}

We compare \cgilao with the state-of-the-art optimal heuristic-search algorithms \ilao~\cite{Hansen2001:ilao} and \lrtdp~\cite{Bonet2003:lrtdp}.
We only consider \cgilaoBellmanTied and \cgilaoBellmanSingle, the best-performing variants of \cgilao according to \cref{sec:experiments-best-version-of-cgilao}.
We consider two implementations of \ilao in this experiment: \ilao as implemented in the original paper~\cite{Schmalz2024:cgilao}, and \cgilaoBellmanComplete, which is \cgilao with the complete Bellman expansion (see \cref{sec:bellman-expansions}).
Although \cgilaoBellmanComplete implements \ilao, there are some subtle differences in implementation (discussed in \cref{sec:ilao} and \cref{sec:cgilao}) with large impacts on performance.

Using \cref{tab:ranks-cgilao-vs-state-of-the-art}, we can see that the ordering of the first three average ranks is consistent over all heuristics, from 1 to 3: \cgilaoBellmanTied, \cgilaoBellmanSingle, \cgilaoBellmanComplete.
The remaining orderings are not consistent across heuristics: \lrtdp and \ilao alternate.
The \(95 \%\) CI reveals that our experiments can not distinguish between \cgilaoBellmanTied and \cgilaoBellmanSingle (consistent with \cref{sec:experiments-best-version-of-cgilao}), nor can we distinguish between \lrtdp and \ilao.
For the other ordering of rankings there is no overlap between \(95 \%\) CIs: \cgilaoBellmanTied and \cgilaoBellmanSingle are ranked better than \cgilaoBellmanComplete, which is itself ranked better than \ilao and \lrtdp.

In terms of total coverage (\cref{tab:coverage-cgilao-vs-state-of-the-art}), \(\cgilaoBellmanSingle(\roc)\) is the leader with \data{628} instances solved, then with \data{621} instances solved, \(\cgilaoBellmanTied(\roc)\) and \(\cgilaoBellmanComplete(\roc)\) are close second.
The next best is \(\ilao(\roc)\) with \data{580} instances solved, \data{41} behind.
We have observed that \(\cgilaoBellmanTied(\roc)\) has a lower rank than \(\cgilaoBellmanComplete(\roc)\), even though they have the same coverage.
This is because for the instances that both algorithms solved, \(\cgilaoBellmanTied(\roc)\) tends to be faster, which can be seen in the cumulative plot over time (\cref{fig:main-cumulative-cgilao-vs-state-of-the-art}): the curve of \(\cgilaoBellmanComplete(\roc)\) lies underneath the curve of \(\cgilaoBellmanTied(\roc)\), i.e., \(\cgilaoBellmanTied(\roc)\) can solve more instances in the same time.
Thus, \(\cgilaoBellmanTied(\roc)\) has better performance than \(\cgilaoBellmanComplete(\roc)\), because it has the same coverage but a lower rank over the solved instances.
By similar argument \(\cgilaoBellmanSingle(\roc)\) also has better performance than \(\cgilaoBellmanComplete(\roc)\), strengthened by the fact that it has slightly higher coverage.
Thus, \(\cgilaoBellmanTied(\roc)\) and \(\cgilaoBellmanSingle(\roc)\) are the best-performing algorithms.

This trend is similar over the other heuristics: \cgilaoBellmanTied and \cgilaoBellmanSingle are ranked first, and have the highest coverage; in this case, with strictly larger coverage than the next-best \cgilaoBellmanComplete, strengthening the arguments from before.
The cumulative plot over time (\cref{fig:main-cumulative-cgilao-vs-state-of-the-art}) confirms this, because the curves for \cgilaoBellmanTied and \cgilaoBellmanSingle lie above the other algorithms' curves within the same heuristic.
Thus, we declare \cgilaoBellmanTied and \cgilaoBellmanSingle as the best algorithms overall.

Now, we analyse the different performances in more detail.
The fundamental operations that take up most of an algorithm's runtime are \qvalues computations and calls to the heuristic.
There are other operations that may take significant runtime, but they can be considered overhead, e.g., tracking the current partial SSP and ensuring the applicable actions are up-to-date.
It is important to recognise that timings are dependent on many factors, e.g., how optimised the implementation is for the hardware in use, whereas the number of \qvalues and heuristic calls is a more robust measure.
In that sense, the following discussions are more insightful than discussing the fastest algorithm.

\begin{table}[t!]
\centering
\adjustbox{max width=\linewidth}{

\begin{tabular}{|l S[table-format=1.2(1.2)]|}
\multicolumn{2}{c}{\roc} \\
\cgilaoBellmanTied & 2.13(0.08) \\
\cgilaoBellmanSingle & 2.23(0.08) \\
\cgilaoBellmanComplete & 2.96(0.08) \\
\ilao & 3.84(0.09) \\
\lrtdp & 3.84(0.11) \\
\end{tabular}

\begin{tabular}{|l S[table-format=1.2(1.2)]|}
\multicolumn{2}{c}{\pdb} \\
\cgilaoBellmanTied & 2.27(0.08) \\
\cgilaoBellmanSingle & 2.43(0.08) \\
\cgilaoBellmanComplete & 2.86(0.07) \\
\lrtdp & 3.63(0.10) \\
\ilao & 3.82(0.09) \\
\end{tabular}

\begin{tabular}{|l S[table-format=1.2(1.2)]|}
\multicolumn{2}{c}{\lmcut} \\
\cgilaoBellmanTied & 2.32(0.07) \\
\cgilaoBellmanSingle & 2.40(0.08) \\
\cgilaoBellmanComplete & 2.78(0.07) \\
\ilao & 3.68(0.09) \\
\lrtdp & 3.81(0.09) \\
\end{tabular}

}
\caption{Runtime ranking of \cgilao and state-of-the-art within a specified heuristic (mean and 95\% CI over all instances).}
\label{tab:ranks-cgilao-vs-state-of-the-art}
\end{table}

\begin{table}[t!]
\centering
\adjustbox{max width=\linewidth}{

\begin{tabular}{llrrrrrrrrrrrrrr}
 & & \rotatebox{90}{\bw} & \rotatebox{90}{\coresec} & \rotatebox{90}{\elevators} & \rotatebox{90}{\exbw} & \rotatebox{90}{\parcn} & \rotatebox{90}{\parcr} & \rotatebox{90}{\random} & \rotatebox{90}{\recttireworld} & \rotatebox{90}{\sar} & \rotatebox{90}{\schedule} & \rotatebox{90}{\sysadmin} & \rotatebox{90}{\tritireworld} & \rotatebox{90}{\zenotravel} & \rotatebox{90}{total} \\
\cline{1-16}
 & \# of instances & 110 & 35 & 75 & 105 & 30 & 30 & 75 & 70 & 25 & 45 & 20 & 40 & 45 & 705 \\
\cline{1-16}
\multirow[c]{5}{*}{\rotatebox{90}{\roc}} & \lrtdp & 55 & 25 & \bestCovr{75} & \bestCovr{105} & \bestCovr{30} & 20 & 33 & 55 & \bestCovr{25} & \bestCovr{45} & \bestCovr{20} & 35 & 20 & 543 \\
 & \ilao & \bestCovr{105} & 25 & 69 & 100 & \bestCovr{30} & 25 & 38 & 48 & 20 & 35 & \bestCovr{20} & 30 & 35 & 580 \\
 & \cgilaoBellmanComplete & \bestCovr{105} & 25 & 70 & \bestCovr{105} & \bestCovr{30} & \bestCovr{27} & 35 & 60 & \bestCovr{25} & \bestCovr{45} & \bestCovr{20} & 35 & 39 & 621 \\
 & \cgilaoBellmanTied & \bestCovr{105} & 25 & \bestCovr{75} & \bestCovr{105} & \bestCovr{30} & 25 & 36 & 60 & 20 & \bestCovr{45} & \bestCovr{20} & 35 & \bestCovr{40} & 621 \\
 & \cgilaoBellmanSingle & \bestCovr{105} & 25 & \bestCovr{75} & \bestCovr{105} & \bestCovr{30} & 25 & \bestCovr{43} & 60 & 20 & \bestCovr{45} & \bestCovr{20} & 35 & \bestCovr{40} & \bestCovr{628} \\

\cline{1-16} \multirow[c]{5}{*}{\rotatebox{90}{\pdb}} & \lrtdp & 84 & 30 & \bestCovr{75} & 90 & 0 & 0 & 30 & 60 & \bestCovr{25} & 35 & \bestCovr{20} & 35 & 30 & 514 \\
 & \ilao & 85 & \bestCovr{35} & 70 & 85 & 0 & 0 & 30 & 57 & 20 & 30 & \bestCovr{20} & 30 & 35 & 497 \\
 & \cgilaoBellmanComplete & 85 & 30 & 70 & 90 & 0 & 0 & 30 & 60 & \bestCovr{25} & 30 & \bestCovr{20} & 37 & \bestCovr{40} & 517 \\
 & \cgilaoBellmanTied & 90 & 30 & \bestCovr{75} & 90 & 0 & 0 & 30 & 60 & 24 & 30 & \bestCovr{20} & \bestCovr{38} & \bestCovr{40} & 527 \\
 & \cgilaoBellmanSingle & 90 & 30 & \bestCovr{75} & 90 & 0 & 0 & 30 & 60 & 20 & 30 & \bestCovr{20} & 37 & \bestCovr{40} & 522 \\
\cline{1-16}

\cline{1-16} \multirow[c]{5}{*}{\rotatebox{90}{\lmcut}} & \lrtdp & 45 & 20 & \bestCovr{75} & 100 & \bestCovr{30} & 5 & 20 & 59 & \bestCovr{25} & 40 & \bestCovr{20} & 25 & 15 & 479 \\
 & \ilao & 45 & 25 & 70 & 95 & \bestCovr{30} & 20 & 20 & 55 & 20 & 35 & \bestCovr{20} & 25 & 25 & 485 \\
 & \cgilaoBellmanComplete & 45 & 25 & 70 & 100 & \bestCovr{30} & 20 & 20 & \bestCovr{65} & \bestCovr{25} & 40 & \bestCovr{20} & 30 & 25 & 515 \\
 & \cgilaoBellmanTied & 45 & 25 & 70 & 101 & \bestCovr{30} & 20 & 20 & \bestCovr{65} & 24 & \bestCovr{45} & \bestCovr{20} & 30 & 25 & 520 \\
 & \cgilaoBellmanSingle & 45 & 25 & 71 & 100 & \bestCovr{30} & 20 & 20 & \bestCovr{65} & 20 & \bestCovr{45} & \bestCovr{20} & 30 & 25 & 516 \\
\cline{1-16}

\end{tabular}

}
\caption{Coverage for \cgilao and state-of-the-art with each considered heuristic over the benchmark domains.
The highest coverage for each problem is marked with boldface.}\label{tab:coverage-cgilao-vs-state-of-the-art}
\end{table}

\begin{figure}[t!]
\raggedright
\includegraphics[scale=0.5, valign=t]{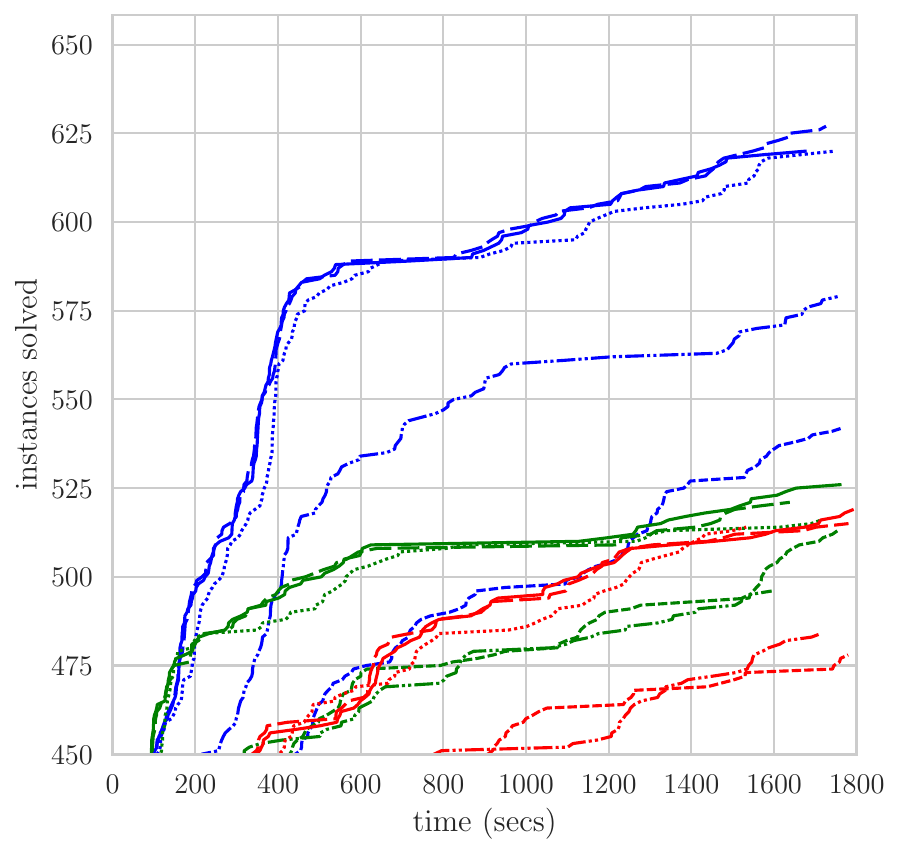}
\includegraphics[scale=0.5, valign=t]{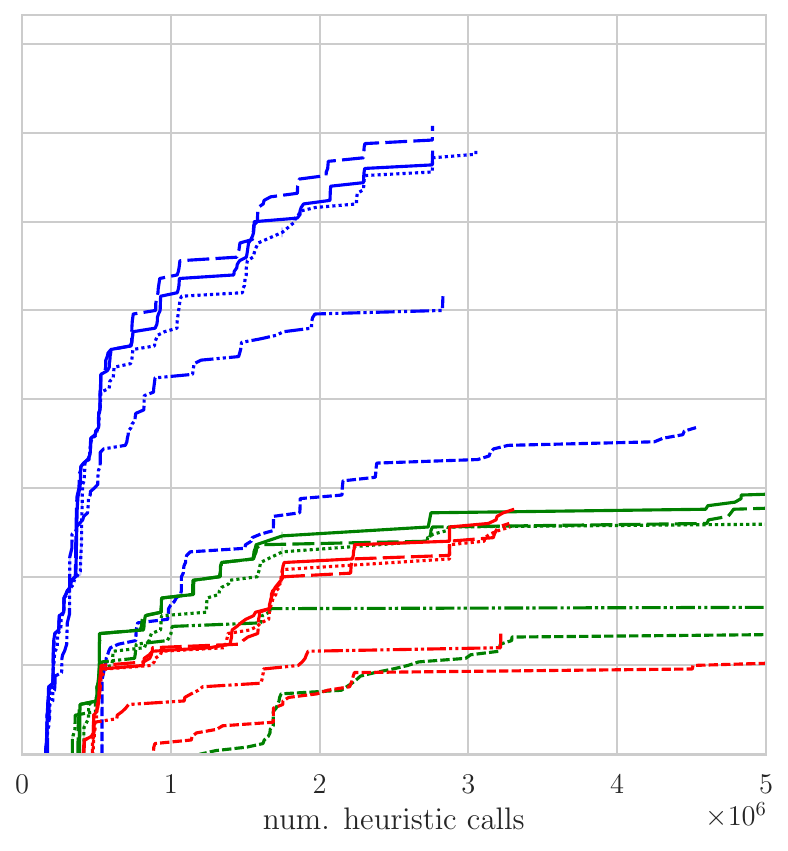}
\includegraphics[scale=0.5, valign=t]{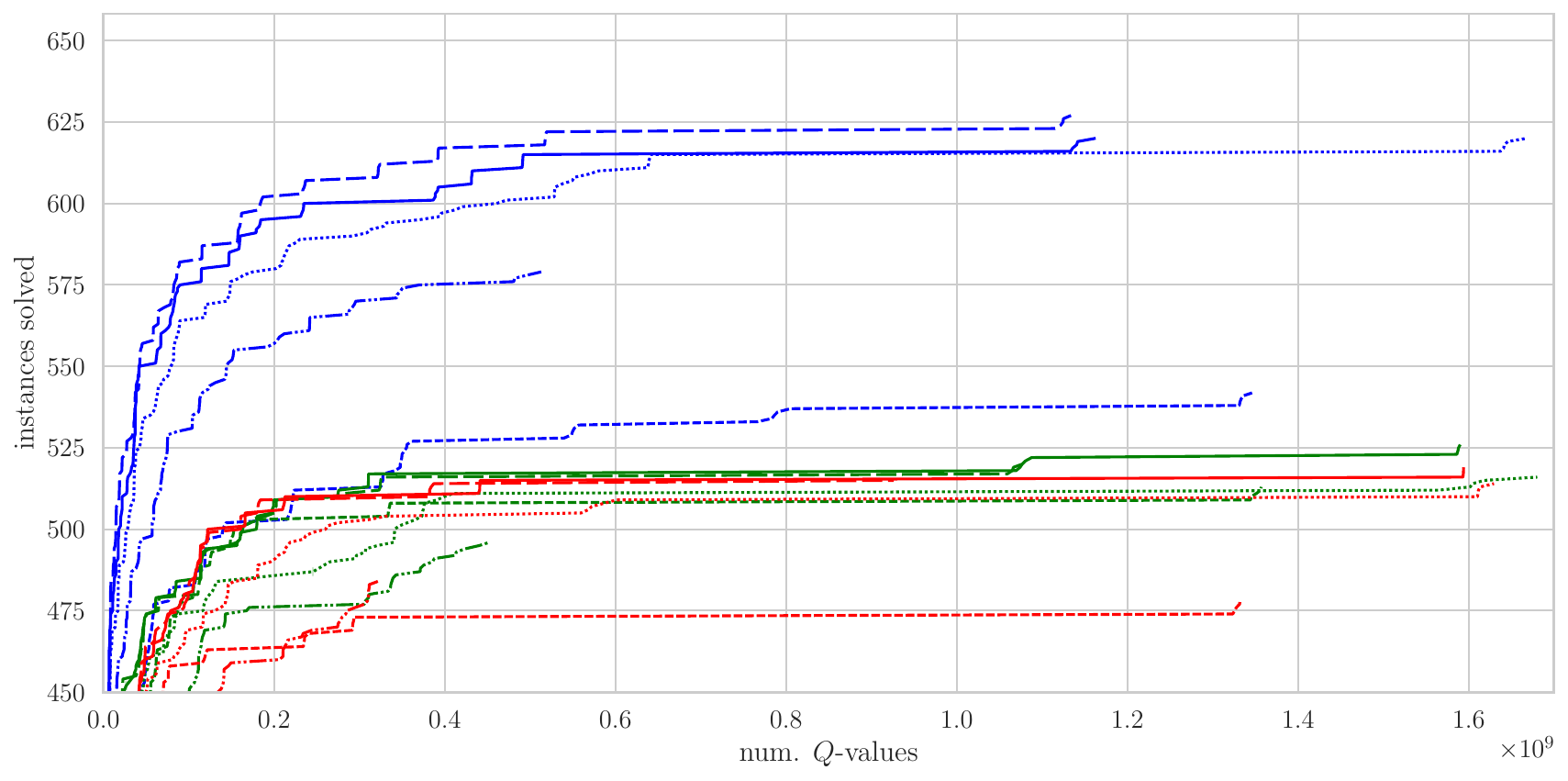}
\begin{center}
\includegraphics[scale=0.5, valign=t]{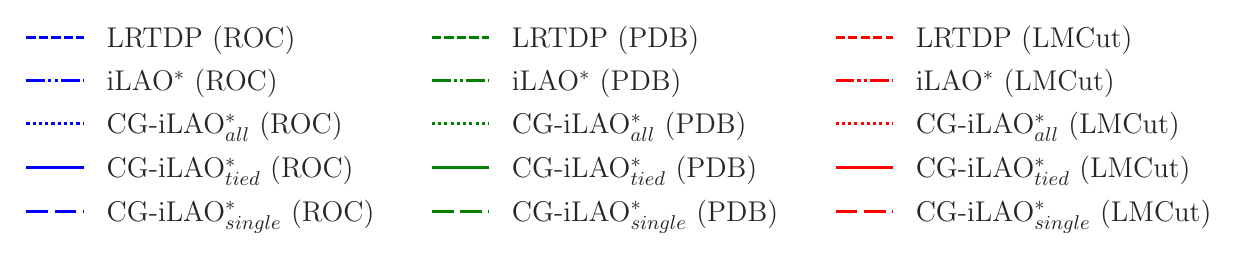}
\end{center}
\caption{
For each algorithm and heuristic, the cumulative plot of how many problems and seeds were solved w.r.t. time in seconds (top left), number of \qvalues (top right), and number of calls to heuristic (bottom).
To focus on where the algorithms are different we start the \(y\)-axis at \data{450} instances, and end the \(x\)-axis at \data{\(5 \times 10^6\)} heuristic calls.
}
\label{fig:main-cumulative-cgilao-vs-state-of-the-art}
\end{figure}

We start by looking at the number of \qvalues.
Focusing on \roc, we see in the cumulative plot over \qvalues in \cref{fig:main-cumulative-cgilao-vs-state-of-the-art} that \(\cgilaoBellmanSingle(\roc)\) and \(\cgilaoBellmanTied(\roc)\) require the fewest \qvalues to solve their instances.
Between \(\cgilaoBellmanSingle(\roc)\) and the third-best \(\cgilaoBellmanComplete(\roc)\), the lead is clear almost immediately, and it maintains a large gap over all problems.
From \data{\(0.04 \times 10^9\)} to \data{\(0.64 \times 10^9\)} \qvalues, \(\cgilaoBellmanTied(\roc)\) leads over \(\cgilaoBellmanComplete(\roc)\) with a gap of around \data{5-15} instances to \(\cgilaoBellmanComplete(\roc)\).
From \data{\(0.64 \times 10^9\)} to \data{\(1.13 \times 10^9\)} \qvalues \(\cgilaoBellmanComplete(\roc)\) catches up to \(\cgilaoBellmanTied(\roc)\), and then \(\cgilaoBellmanTied(\roc)\) makes a jump of \data{4} instances; this is happening because two problems (\elevators 04-13 and \sar 05) require significantly more \qvalues to solve than the rest.
In fact, this emphasises how many fewer \qvalues \(\cgilaoBellmanTied(\roc)\) requires, because it solves these large problems with \data{\(1.16 \times 10^9\)} \qvalues, and \(\cgilaoBellmanComplete(\roc)\) requires \data{\(1.67 \times 10^9\)}.
The gap between the \cgilao variants and other algorithms is even more significant.
Thus, \(\cgilaoBellmanTied(\roc)\) computes the fewest \qvalues.
For the other heuristics, we observe a similar trend, but more extreme in favour of \cgilaoBellmanTied and \cgilaoBellmanSingle.
Thus, the top two \cgilao variants are the algorithms that compute the fewest \qvalues.

To investigate the number of heuristic calls, we use the cumulative plot over heuristic calls (\cref{fig:main-cumulative-cgilao-vs-state-of-the-art}).
We see that \cgilaoBellmanTied, \cgilaoBellmanSingle, and \cgilaoBellmanComplete are decisively ahead of \ilao and \lrtdp over all heuristics, in as few calls as \data{\(0.5 \times 10^6\)}.
Between these top three, there is no consistent trend over heuristics.

\subsection{How many actions does \cgilao ignore?}\label{sec:experiments-partial-ssp-sizes}
The key idea of \cgilao is that it ignores inactive actions and only adds actions when they are needed, with the aim of producing a smaller partial SSP.
In this section, we confirm experimentally that indeed \cgilao has smaller partial SSPs than \ilao, and quantify by how much, i.e., how many actions \cgilao ignores.

\begin{table}[t!]
\newcommand{\midruleOffset}{-2mm} %
\centering
\adjustbox{max width=\linewidth}{

\begin{tabular}{ ll S[table-format=3.1(1.1)]S[table-format=3.1(1.1)] S[table-format=3.1(2.1)]S[table-format=3.1(1.1)] S[table-format=4.1(4.1)]S[table-format=4.1(4.1)] }
 & & \multicolumn{2}{c}{\cgilaoBellmanTied} & \multicolumn{2}{c}{\cgilaoBellmanSingle} & \multicolumn{2}{c}{\ilao} \\
\cmidrule(l{-\midruleOffset}r){3-4}
\cmidrule(l{-\midruleOffset}r){5-6}
\cmidrule(l{-\midruleOffset}r){7-8}
 &  & {\(|\partstates|\%\)} & {\(|\partactions|\%\)} & {\(|\partstates|\%\)} & {\(|\partactions|\%\)} & {\(|\partstates|\%\)} & {\(|\partactions|\%\)} \\
\Xhline{1pt}
\multirow[c]{3}{*}{{\bw}} & {\roc} & 91.8(1.3) & 40.8(1.2) & 98.7(3.5) & 33.2(2.8) & 86.8(1.5) & 85.3(1.5) \\
 & {\lmcut} & 97.5(0.6) & 45.5(1.4) & 97.6(0.5) & 42.5(2.0) & 88.3(0.8) & 84.7(1.2) \\
 & {\pdb} & 90.9(1.4) & 39.7(0.9) & 90.2(3.0) & 30.9(1.6) & 85.9(1.4) & 84.1(1.5) \\
\cline{1-8} \multirow[c]{3}{*}{{\coresec}} & {\roc} & 100.2(0.3) & 33.6(1.9) & 100.2(0.4) & 20.5(1.1) & 85.7(0.6) & 142.0(1.8) \\
 & {\lmcut} & 100.0(0.1) & 39.0(1.9) & 99.9(0.1) & 24.9(1.1) & 95.8(0.7) & 193.4(2.2) \\
 & {\pdb} & 99.6(0.5) & 29.0(1.6) & 100.0(0.4) & 22.2(1.6) & 87.1(0.5) & 113.9(1.5) \\
\cline{1-8} \multirow[c]{3}{*}{{\elevators}} & {\roc} & 97.6(1.1) & 60.5(1.6) & 98.1(1.3) & 53.3(2.2) & 94.9(1.1) & 94.9(1.1) \\
 & {\lmcut} & 93.1(3.0) & 45.0(3.1) & 96.0(2.1) & 40.4(3.0) & 90.2(1.4) & 90.3(1.4) \\
 & {\pdb} & 97.0(1.5) & 56.9(1.7) & 98.2(2.0) & 50.1(2.2) & 92.3(1.6) & 92.5(1.6) \\
\cline{1-8} \multirow[c]{3}{*}{{\exbw}} & {\roc} & 95.8(1.4) & 44.8(1.5) & 99.9(3.2) & 39.6(1.7) & 97.9(0.7) & 103.5(1.8) \\
 & {\lmcut} & 93.9(2.0) & 39.2(2.1) & 99.2(3.3) & 37.1(1.8) & 97.8(0.9) & 102.2(2.1) \\
 & {\pdb} & 95.7(1.9) & 53.9(3.4) & 98.7(3.4) & 49.9(3.7) & 97.6(1.1) & 101.7(2.0) \\
\cline{1-8} \multirow[c]{2}{*}{{\parcn}} & {\roc} & 96.6(0.6) & 44.7(0.9) & 97.0(0.6) & 32.7(0.9) & 87.5(1.1) & 92.7(1.1) \\
 & {\lmcut} & 93.5(0.9) & 44.0(1.2) & 94.5(0.8) & 35.2(1.3) & 75.3(1.5) & 80.9(1.8) \\
\cline{1-8} \multirow[c]{2}{*}{{\parcr}} & {\roc} & 99.6(0.2) & 44.1(1.0) & 99.6(0.2) & 43.0(1.0) & 86.9(1.7) & 86.6(1.6) \\
 & {\lmcut} & 101.1(0.6) & 51.9(1.0) & 101.9(0.9) & 50.1(1.2) & 95.7(1.0) & 95.8(0.9) \\
\cline{1-8} \multirow[c]{3}{*}{{\random}} & {\roc} & 89.8(8.6) & 9.1(1.8) & 93.7(11.3) & 0.9(0.3) & 96.3(3.1) & 96.3(3.1) \\
 & {\lmcut} & 100.0(0.0) & 9.9(3.9) & 100.0(0.0) & 1.8(0.7) & 100.0(0.0) & 100.0(0.0) \\
 & {\pdb} & 95.1(5.3) & 17.0(8.1) & 62.5(16.7) & 1.0(0.5) & 2369.7(3093.0) & 3987.8(5296.5) \\
\cline{1-8} \multirow[c]{3}{*}{{\recttireworld}} & {\roc} & 100.0(0.2) & 31.5(0.8) & 100.0(0.2) & 33.1(1.6) & 100.0(0.0) & 126.0(12.9) \\
 & {\lmcut} & 99.9(0.2) & 30.9(0.9) & 100.0(0.1) & 32.8(1.8) & 99.7(0.3) & 125.8(12.9) \\
 & {\pdb} & 97.9(1.0) & 31.0(1.5) & 98.8(1.1) & 31.5(1.8) & 100.0(1.4) & 127.0(11.3) \\
\cline{1-8} \multirow[c]{3}{*}{{\sar}} & {\roc} & 100.0(0.1) & 91.6(0.8) & 100.2(0.1) & 91.4(0.7) & 101.0(0.2) & 104.5(0.4) \\
 & {\lmcut} & 100.1(0.1) & 92.0(0.8) & 100.0(0.1) & 91.3(0.7) & 100.9(0.2) & 104.5(0.4) \\
 & {\pdb} & 100.3(0.1) & 92.2(0.8) & 100.5(0.2) & 91.4(0.7) & 101.2(0.2) & 104.2(0.3) \\
\cline{1-8} \multirow[c]{3}{*}{{\schedule}} & {\roc} & 100.7(0.3) & 78.1(0.7) & 100.9(0.5) & 50.7(1.1) & 99.0(0.8) & 100.0(0.1) \\
 & {\lmcut} & 99.0(0.5) & 73.8(3.2) & 101.4(0.8) & 50.2(1.2) & 99.3(0.6) & 100.2(0.1) \\
 & {\pdb} & 100.0(0.0) & 79.6(0.5) & 100.0(0.0) & 51.1(1.3) & 100.0(0.0) & 100.0(0.0) \\
\cline{1-8} \multirow[c]{3}{*}{{\sysadmin}} & {\roc} & 100.0(0.1) & 99.5(0.3) & 100.0(0.0) & 97.4(0.3) & 100.0(0.1) & 100.0(0.0) \\
 & {\lmcut} & 100.0(0.0) & 99.5(0.4) & 100.0(0.0) & 97.3(0.4) & 100.1(0.1) & 100.1(0.0) \\
 & {\pdb} & 100.0(0.0) & 100.0(0.0) & 100.0(0.0) & 98.4(0.4) & 100.0(0.0) & 100.0(0.0) \\
\cline{1-8} \multirow[c]{3}{*}{{\tritireworld}} & {\roc} & 98.0(0.8) & 64.4(0.8) & 97.2(1.0) & 63.8(0.9) & 99.7(0.4) & 99.8(0.4) \\
 & {\lmcut} & 100.7(0.6) & 68.6(1.5) & 100.0(0.9) & 67.0(1.5) & 98.0(0.9) & 98.3(0.9) \\
 & {\pdb} & 98.2(0.8) & 65.7(0.7) & 98.0(0.8) & 65.5(0.7) & 99.6(0.4) & 99.8(0.5) \\
\cline{1-8} \multirow[c]{3}{*}{{\zenotravel}} & {\roc} & 88.4(0.5) & 37.1(0.6) & 86.7(0.3) & 30.1(0.8) & 80.5(1.1) & 77.2(1.0) \\
 & {\lmcut} & 89.0(1.2) & 34.3(0.4) & 88.5(1.3) & 31.6(0.5) & 78.0(1.4) & 74.2(1.1) \\
 & {\pdb} & 89.9(1.0) & 44.0(1.3) & 88.0(0.9) & 34.0(1.1) & 82.1(1.6) & 79.1(1.4) \\
\Xhline{1pt}
\end{tabular}

}
\caption{Size of each algorithm's final partial SSP, as a percentage of \cgilaoBellmanComplete's final partial SSP. These values are means over instances where the considered algorithm and \cgilaoBellmanComplete both terminated.}
\label{tab:partial-ssp-sizes-percentages-cgilao}
\undef{\midruleOffset} %
\end{table}

For \cgilaoBellmanTied, \cgilaoBellmanSingle, and \ilao, we present the number of states and actions in their final partial SSPs as a percentage of the number of states and actions in \cgilaoBellmanComplete's final partial SSP in \cref{tab:partial-ssp-sizes-percentages-cgilao}.
The table gives for each domain, the percentages averaged over instances where the algorithm being considered and the baseline both terminate.
Since we require the baseline to terminate, we use \cgilaoBellmanComplete rather than \ilao, because it solves more instances, letting us consider more instances.
In \cref{tab:partial-ssp-sizes-percentages-cgilao} we see that the number of states in the final partial SSPs \(|\partstates|\%\), is around \(100\%\) for all algorithms and heuristics.
This is what we expect, because \cgilao ignores actions, and has no mechanism for ignoring more states than \ilao.
We acknowledge the notable outliers at \random with \pdb, where \cgilaoBellmanSingle has less than \(60\%\) and \ilao has over \(2000\%\) of \cgilaoBellmanComplete's states, but this is a unique exception which does not contribute much.

More importantly, the number of actions in the final partial SSPs \(|\partactions|\%\), is always smaller than \(101\%\) for both \cgilao variants, and below \(50\%\) in most problems.
This confirms that indeed \cgilao is able to leave out many actions that \ilao considers.
Exceptions to this are \sar and \sysadmin, where the \cgilao variants have over \(90\%\) of \cgilaoBellmanComplete's actions.
Note that this is reflected by \cgilaoBellmanComplete having higher coverage on \sar than the \cgilao variants.
It is not entirely clear why \cgilao has such large partial SSPs on these problems, but we conjecture that it is because the heuristics are not informative enough, and with more informative heuristics \cgilao would be able to ignore more actions.
Between \cgilaoBellmanTied and \cgilaoBellmanSingle, we see that \cgilaoBellmanSingle tends to have fewer actions in its final partial SSP.
This is unsurprising, because \cgilaoBellmanTied can add unnecessary actions in its expansion, whereas \cgilaoBellmanSingle only adds actions that are necessary at some point.
Note, that this difference in partial SSP size is not reflected in the number of \qvalues computed, as discussed in \cref{sec:experiments-best-version-of-cgilao}, because \cgilaoBellmanSingle incurs additional overhead in checking which actions should be added.
The exception is again \random where we have seen \(\cgilaoBellmanSingle(\roc)\) obtains highest coverage, precisely because its final partial SSP is significantly smaller.

To make the final partial SSP sizes more concrete, we present the actual final partial SSP sizes for select problems in \cref{tab:partial-ssp-sizes-some-problems}.
This gives a sense of scale of how many actions are being saved, bearing in mind that we picked smaller problems where all algorithms had close-to-complete coverage.

In summary, the \cgilao variants often have significantly smaller final partial SSPs than the \ilao variants, confirming that our mechanism for ignoring actions has a significant impact.

To gain a better insight of how many actions are ignored by \cgilao, we investigate how dense \cgilao's partial SSPs are, i.e., how many actions were added to \cgilao's partial SSP, out of the possible actions that could have been added.
We define the density of state \s as \({|\partactions(\s)|}/{|\A(\s)|}\).
To start, we restrict our attention to a single representative problem from each domain, solved by \cgilaoBellmanTied and \cgilaoBellmanSingle with \roc.
The representative problems were chosen to be the largest problem that was still solved in time (or some large problem that was solved in time, if there is no clear hierarchy).
We give cumulative plots over density for these problems in \cref{fig:density-per-state}, where each point \((x, y)\) represents that a proportion \(x\) of states in the final partial SSP has actions with density \(y\) or less, e.g., \((0.5, 0.3)\) tells us that \(50\%\) of the partial SSP's states have a density of \(0.3\) or less.
Note that the proportion \(x\) of states for \cgilaoBellmanTied and \cgilaoBellmanSingle are evaluated \wrt to their respective final partial SSP sizes, i.e., they may be referring to a different number of states.
However, we have seen in \cref{sec:experiments-partial-ssp-sizes} that the number of states in their partial SSPs is similar, so the curves are roughly comparable.
A plot that grows quickly at the start, with the extreme case \(y=1\) for \(x \geq 0\) (shaped \(\mathop\ulcorner\)), indicates that all states have high density.
Conversely, if the plot grows slowly and then jumps up for large \(x\), with the extreme case \(y=0\) for \(x < 1\) (shaped \(\mathop\lrcorner\)), it means that most states have low density.
Another way to understand this curve is that, if the area under the curve is large then most states are dense, and if the area under the curve is small then most states are not dense.
Variants of \ilao have the extreme \(\mathop\ulcorner\)-shaped curve because \ilao adds all actions, resulting in density \(1\) for all states.

Looking at \cref{fig:density-per-state}, we see that the density profiles vary dramatically across problems.
For problems such as \coresec, \random, and \recttireworld, both \cgilao variants have mostly low-density states, which is reflected by their small final partial SSPs in \cref{tab:partial-ssp-sizes-percentages-cgilao}.
In contrast, \elevators, \sar, and \sysadmin have mostly high-density states, which is respectively reflected by relatively large final partial SSPs in \cref{tab:partial-ssp-sizes-percentages-cgilao}.
This is particularly extreme in \sysadmin, where the \cgilao variants have over \(99\%\) of the number of actions as \cgilaoBellmanComplete.
\cgilaoBellmanTied and \cgilaoBellmanSingle have significantly different profiles for \bw, \coresec, \exbw, \parcn, \random, and \schedule; and nearly identical profiles for the other problems.%
This is generally reflected by the differences in partial SSP sizes for \roc in \cref{tab:partial-ssp-sizes-percentages-cgilao}.
This confirms, again, that \cgilaoBellmanSingle is generally able to ignore more actions than \cgilaoBellmanTied.
Otherwise, we can only conclude that \cgilao equipped with \roc is able to ignore many actions on some problems, and very few on others.

Recall \cref{thm:cgilao-minimal-with-hstar}, which tells us that \cgilaoBellmanSingle with the perfect heuristic adds only the actions necessary to define its optimal policy, i.e., with the perfect heuristic its density plots would have \(y=1\) for \(x \geq 0\).
In that sense, the density of \cgilaoBellmanSingle is heuristic-dependent, rather than problem-dependent.
Consequently, \cref{fig:density-per-state} can be interpreted as a measure of heuristic quality.
This motivates that \cgilao fails to produce small partial SSPs on some problems due to uninformative heuristics, rather than any property of the problems themselves.

To get a higher-level picture we now present a summary of density over entire domains in \cref{fig:density-box-and-whisker}.
Here, we generate the density curves for each algorithm, heuristic, and problem triple (averaged over instances in the problem), and then compute the area under these curves (AUC); we call this value the density AUC.
An instance with mostly dense states and therefore a \(\mathop\ulcorner\)-shaped curve will have AUC close to 1, and conversely an instance with few dense states and a \(\mathop\lrcorner\)-shaped curve will have AUC close to 0.
We aggregate density AUC over domains with box-and-whisker plots to show the distribution over different problems within the domain.
These results are generally consistent with the previous discussion and let us make similar conclusions: the density varies a lot over different domains, and generally \cgilaoBellmanSingle tends to have final partial SSPs with lower density that \cgilaoBellmanTied.
We note that in some domains the densities are very similar across problems within the domain, e.g., \parcn, \parcr, \sar; but in others the densities are varied, e.g., \elevators, \exbw, \schedule.
This does not give any deep insight, only that some of our benchmarks have problems of similar structure within a domain, and others do not.
One must be careful comparing the behaviour over different heuristics, because they are associated with different coverages and therefore have different datasets; nevertheless, it seems that the behaviour does not change significantly over different heuristics.

\begin{landscape}
\begin{table}[t!]
\newcommand{\midruleOffset}{-3mm} %
\setlength\tabcolsep{6px}
\resizebox{\linewidth}{!}{

\begin{tabular}{ l ll S[table-format=7.0(3.0)]S[table-format=7.0(4.0)]S[table-format=7.0(5.0)] S[table-format=7.0(3.0)]S[table-format=7.0(4.0)]S[table-format=7.0(5.0)] S[table-format=7.0(4.0)]S[table-format=7.0(5.0)] S[table-format=7.0(4.0)]S[table-format=7.0(5.0)]}
\multicolumn{2}{c}{} & & \multicolumn{3}{c}{\cgilaoBellmanTied} & \multicolumn{3}{c}{\cgilaoBellmanSingle} & \multicolumn{2}{c}{\ilao} & \multicolumn{2}{c}{\cgilaoBellmanComplete} \\
\cmidrule(l{-\midruleOffset}r){4-6}
\cmidrule(l{-\midruleOffset}r){7-9}
\cmidrule(l{-\midruleOffset}r){10-11}
\cmidrule(l{-\midruleOffset}r){12-13}
\multicolumn{2}{c}{} & & {\(|\partstates|\)} & {\(|\partactions|\)} & {\(|\partactions^{\max}|\)} & {\(|\partstates|\)} & {\(|\partactions|\)} & {\(|\partactions^{\max}|\)} & {\(|\partstates|\)} & {\(|\partactions|\)} & {\(|\partstates|\)} & {\(|\partactions|\)} \\
\Xhline{1pt}
\multirow[c]{4}{*}{\rotatebox{90}{\bw}} & \multirow[c]{2}{*}{04-10} & {\roc} & 38197(187) & 107485(619) & 264000(1337) & 38041(160) & 69964(384) & 263097(1068) & 35601(245) & 249229(1769) & 42269(332) & 310921(2560) \\
 &  & {\pdb} & 38604(308) & 112018(782) & 266356(2200) & 38141(274) & 70045(740) & 263724(1758) & 35514(198) & 248260(1479) & 42229(398) & 310213(3083) \\
\cline{2-13}  & 08-11 & {\roc} & 12961(776) & 42333(2215) & 107413(6639) & 12530(402) & 22106(549) & 104209(3392) & 12172(797) & 102326(6804) & 14114(907) & 116851(7586) \\
\cline{2-13}  & 08-12 & {\roc} & 12961(776) & 42333(2215) & 107413(6639) & 12530(402) & 22106(549) & 104209(3392) & 12172(797) & 102326(6804) & 14114(907) & 116851(7586) \\
\cline{1-13} \cline{2-13} \multirow[c]{3}{*}{\rotatebox{90}{\elevators}} & \multirow[c]{3}{*}{04-15} & {\roc} & 511935(243) & 2236293(8866) & 3063040(1474) & 512158(286) & 2143683(7449) & 3064324(1739) & 513039(171) & 3069532(959) & 515568(988) & 3084272(5788) \\
 &  & {\lmcut} & 441272(717) & 1652518(6622) & 2649696(4449) & 441162(810) & 1551021(8734) & 2648887(4855) & 425933(3021) & 2557899(18032) & 452195(3016) & 2714150(17807) \\
 &  & {\pdb} & 506141(399) & 2134977(8328) & 3028903(2315) & 506320(222) & 2045625(6661) & 3029960(1276) & 504455(808) & 3018725(4709) & 510224(1157) & 3052950(6776) \\
\cline{1-13} \cline{2-13} \multirow[c]{8}{*}{\rotatebox{90}{\random}} & \multirow[c]{3}{*}{04-07} & {\roc} & 8(5) & 40(31) & 707(582) & 8(5) & 12(9) & 707(582) & 8(5) & 707(582) & 8(5) & 707(582) \\
 &  & {\lmcut} & 3(0) & 54(49) & 264(52) & 3(0) & 4(1) & 264(52) & 3(0) & 264(52) & 3(0) & 264(52) \\
 &  & {\pdb} & 4(0) & 16(2) & 265(52) & 4(1) & 5(1) & 265(52) & 4(0) & 265(52) & 4(0) & 265(52) \\
\cline{2-13}  & \multirow[c]{2}{*}{04-09} & {\roc} & 4(1) & 506(33) & 3690(954) & 5(2) & 6(5) & 4731(2508) & 4(1) & 3690(954) & 4(1) & 3690(954) \\
 &  & {\pdb} & 10(2) & 468(50) & 9712(2326) & 10(5) & 15(9) & 9790(4901) & 10(2) & 9448(2300) & 10(2) & 9448(2300) \\
\cline{2-13}  & \multirow[c]{3}{*}{04-13} & {\roc} & 25(5) & 7634(1621) & 46300(9419) & 25(6) & 42(11) & 45070(10729) & 26(4) & 48143(7394) & 26(4) & 48143(7394) \\
 &  & {\lmcut} & 7(0) & 845(3) & 11992(137) & 7(0) & 10(0) & 11992(137) & 7(0) & 12001(42) & 7(0) & 12001(42) \\
 &  & {\pdb} & 128(12) & 37172(4838) & 208000(20635) & 98(23) & 171(41) & 162899(39785) & 197(29) & 325885(48159) & 197(29) & 325905(48161) \\
\cline{1-13} \cline{2-13} \multirow[c]{9}{*}{\rotatebox{90}{\sar}} & \multirow[c]{3}{*}{08-03} & {\roc} & 9910(8) & 35309(32) & 39577(4) & 9921(13) & 35132(30) & 39583(7) & 10022(15) & 39633(7) & 9904(5) & 38081(2) \\
 &  & {\lmcut} & 9914(8) & 35152(12) & 39580(4) & 9913(4) & 35088(27) & 39579(2) & 10024(24) & 39635(12) & 9919(14) & 38088(7) \\
 &  & {\pdb} & 9890(10) & 35316(34) & 39573(5) & 9894(6) & 35153(20) & 39575(3) & 9993(19) & 39625(9) & 9863(2) & 38102(1) \\
\cline{2-13}  & \multirow[c]{3}{*}{08-04} & {\roc} & 113810(17) & 527263(203) & 578498(8) & 113820(47) & 525092(167) & 578502(23) & 114249(76) & 578717(38) & 113769(30) & 560917(15) \\
 &  & {\lmcut} & 113817(22) & 525083(43) & 578501(11) & 113828(13) & 524661(99) & 578506(7) & 114234(78) & 578709(39) & 113799(26) & 560932(13) \\
 &  & {\pdb} & 113763(32) & 527146(311) & 578481(16) & 113766(44) & 524952(158) & 578483(22) & 114178(78) & 578689(39) & 113624(14) & 560916(7) \\
\cline{2-13}  & \multirow[c]{3}{*}{08-05} & {\roc} & {--} & {--} & {--} & {--} & {--} & {--} & {--} & {--} & 1251364(52) & 7485099(26) \\
 &  & {\lmcut} & 1251642(57) & 7078350(557) & 7682167(29) & {--} & {--} & {--} & {--} & {--} & 1251455(60) & 7485144(30) \\
 &  & {\pdb} & 1251444(65) & 7109673(538) & 7682078(33) & {--} & {--} & {--} & {--} & {--} & 1250771(40) & 7484911(20) \\
\cline{1-13} \cline{2-13} \multirow[c]{9}{*}{\rotatebox{90}{\sysadmin}} & \multirow[c]{3}{*}{08-02} & {\roc} & 1541(0) & 1725(0) & 1729(0) & 1541(0) & 1697(4) & 1729(0) & 1539(0) & 1727(0) & 1541(0) & 1729(0) \\
 &  & {\lmcut} & 1580(0) & 1768(0) & 1768(0) & 1580(0) & 1738(2) & 1768(0) & 1584(0) & 1772(0) & 1580(0) & 1768(0) \\
 &  & {\pdb} & 1592(0) & 1780(0) & 1780(0) & 1592(0) & 1760(3) & 1780(0) & 1592(0) & 1780(0) & 1592(0) & 1780(0) \\
\cline{2-13}  & \multirow[c]{3}{*}{08-03} & {\roc} & 4676(0) & 5231(0) & 5231(0) & 4676(0) & 5130(7) & 5231(0) & 4677(0) & 5232(0) & 4676(0) & 5231(0) \\
 &  & {\lmcut} & 4692(0) & 5247(0) & 5247(0) & 4692(0) & 5135(6) & 5247(0) & 4693(1) & 5248(1) & 4692(0) & 5247(0) \\
 &  & {\pdb} & 4699(1) & 5254(1) & 5254(1) & 4699(1) & 5159(8) & 5254(1) & 4703(2) & 5258(2) & 4699(1) & 5254(1) \\
\cline{2-13}  & \multirow[c]{3}{*}{08-04} & {\roc} & 33016(1) & 37041(1) & 37041(1) & 33017(0) & 35730(29) & 37042(0) & 33038(2) & 37063(2) & 33016(1) & 37041(1) \\
 &  & {\lmcut} & 33021(4) & 37046(4) & 37046(4) & 33023(4) & 35680(19) & 37048(4) & 33042(2) & 37067(2) & 33022(3) & 37047(3) \\
 &  & {\pdb} & 33074(4) & 37099(4) & 37099(4) & 33071(6) & 36030(28) & 37096(6) & 33096(1) & 37121(1) & 33074(4) & 37099(4) \\
\Xhline{1pt}
\end{tabular}

}
\caption{The final partial SSP sizes of \cgilao and \ilao.
The number of states in the final partial SSP \(|\partstates|\), the number of actions in the final partial SSP \(|\partactions| = \sum_{\s \in \partstates} |\partactions(\s)|\), and for \cgilao the number of actions available to its partial states \(|\partactions^{\max}| = \sum_{\s \in \partstates} |\A(\s)|\).
We do not count give-up actions from the fixed-penalty transformation.
We restrict the results to the larger problems of each domain.}
\label{tab:partial-ssp-sizes-some-problems}
\end{table}
\undef{\midruleOffset} %
\end{landscape}

\begin{figure*}[t!]
\raggedright
\includegraphics[scale=0.55]{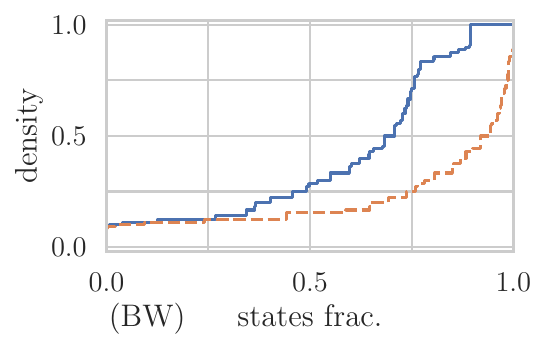}
\includegraphics[scale=0.55]{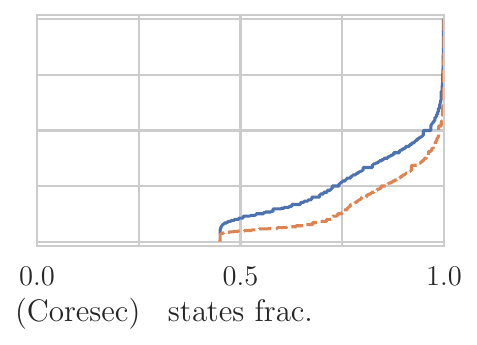}
\includegraphics[scale=0.55]{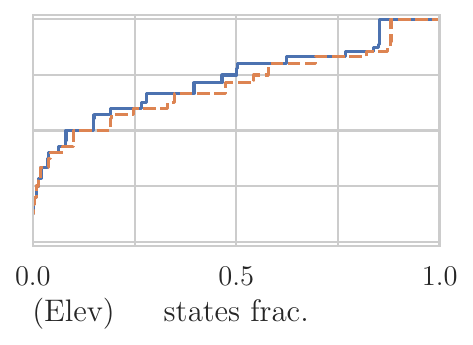}
\\
\includegraphics[scale=0.55]{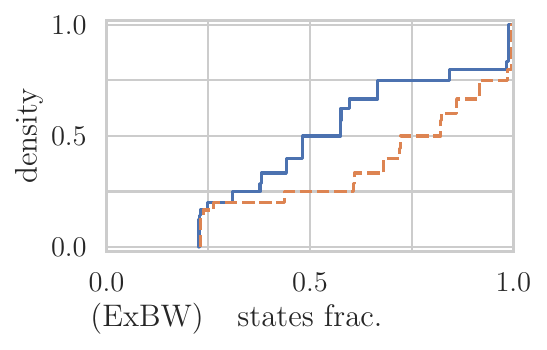}
\includegraphics[scale=0.55]{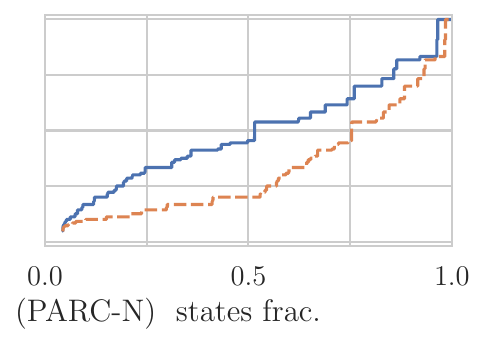}
\includegraphics[scale=0.55]{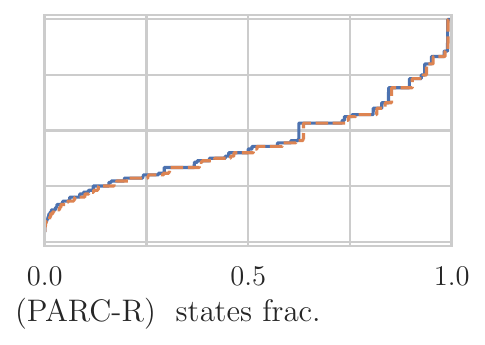}
\\
\includegraphics[scale=0.55]{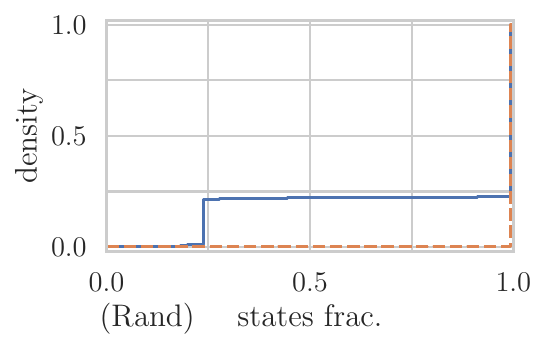}
\includegraphics[scale=0.55]{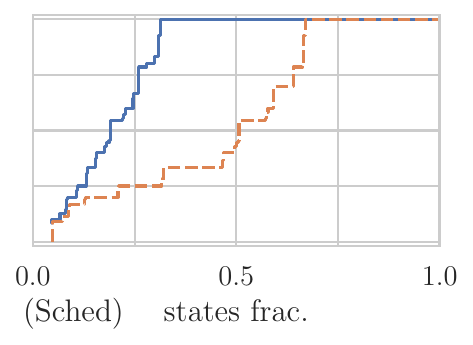}
\includegraphics[scale=0.55]{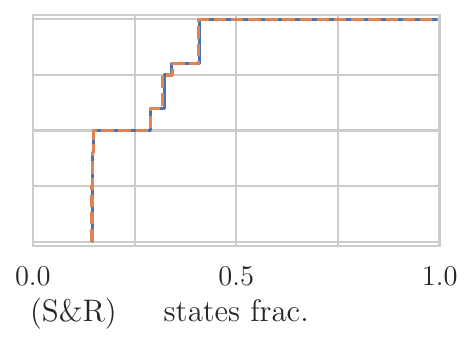}
\\
\includegraphics[scale=0.55]{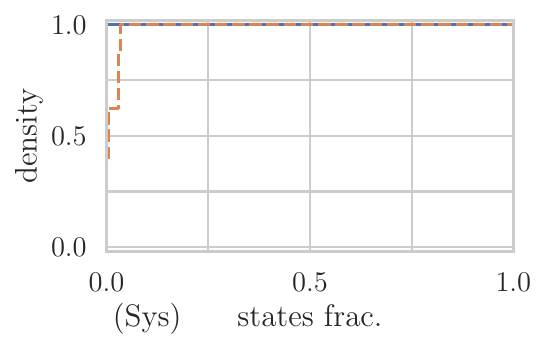}
\includegraphics[scale=0.55]{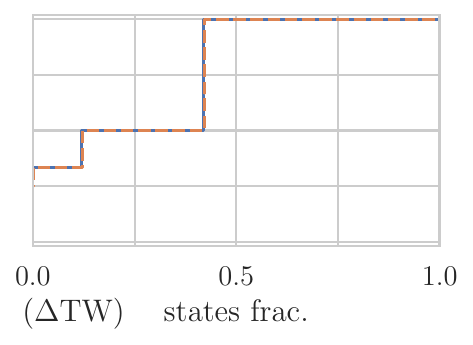}
\includegraphics[scale=0.55]{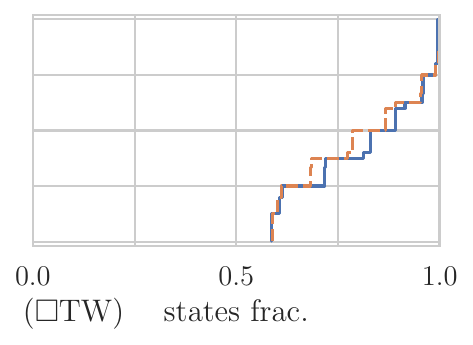}
\\
\includegraphics[scale=0.55]{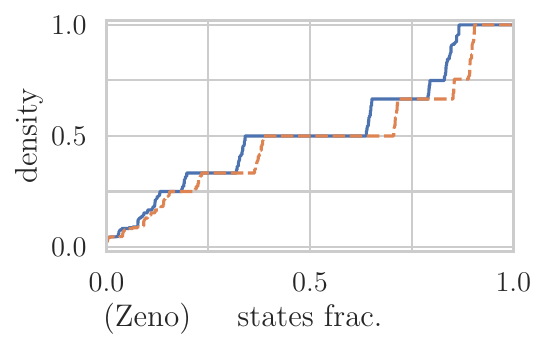}
\\
\begin{center}
\includegraphics[scale=0.55]{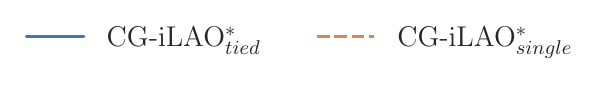}
\end{center}
\caption{Cumulative plot of state density \wrt the fraction of states in the final partial SSP over \data{5} instances for one representative problem per domain.}
\label{fig:density-per-state}
\end{figure*}

\begin{figure*}[t!]
\raggedright
\includegraphics[scale=0.36]{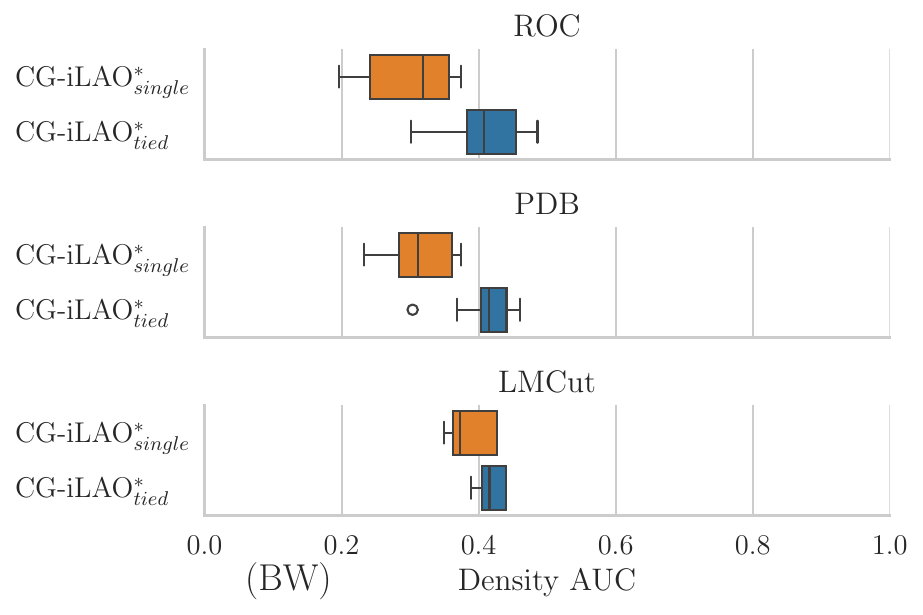}
\includegraphics[scale=0.36]{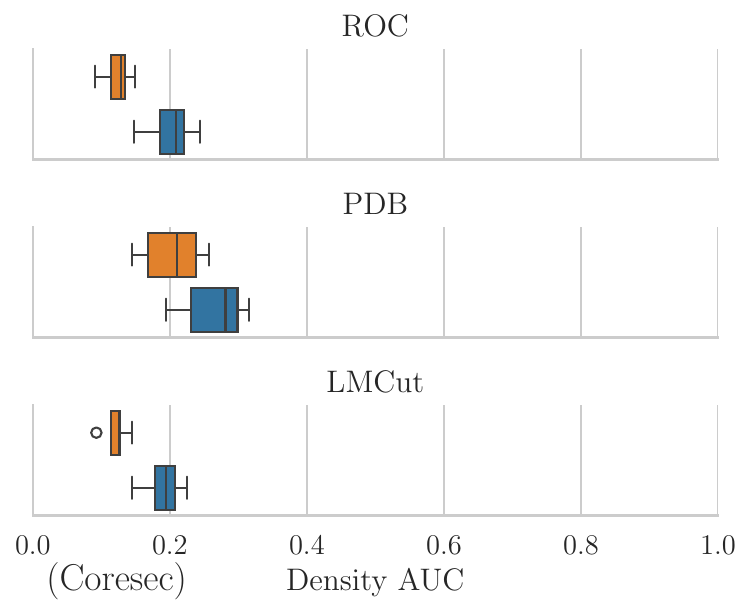}
\includegraphics[scale=0.36]{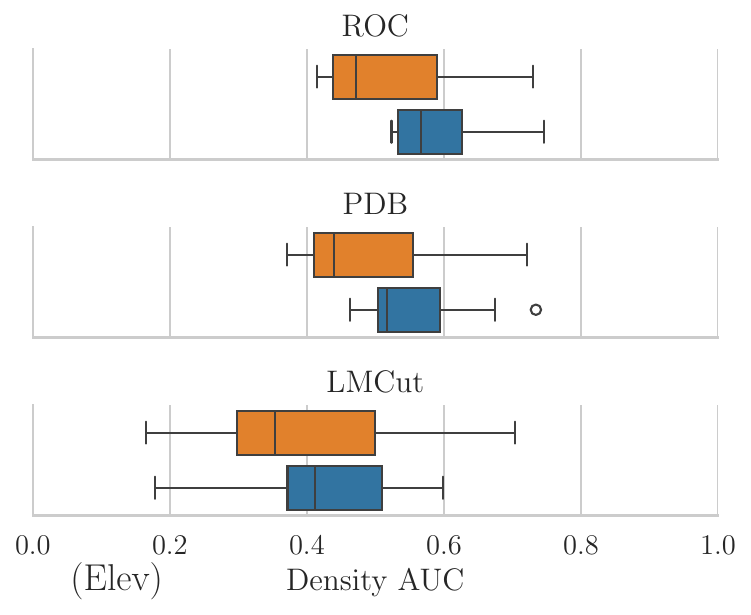}

\vspace{2mm}

\includegraphics[scale=0.36]{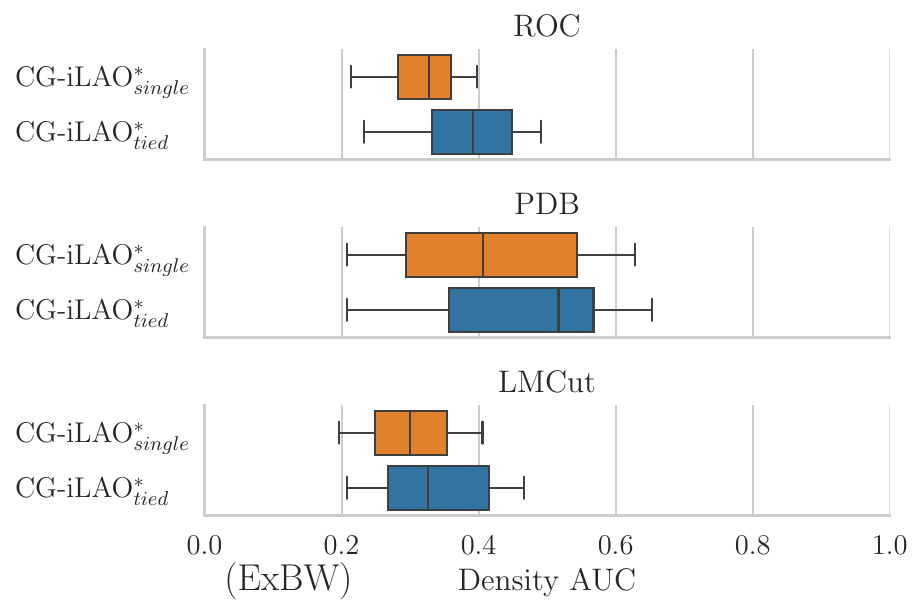}
\includegraphics[scale=0.36]{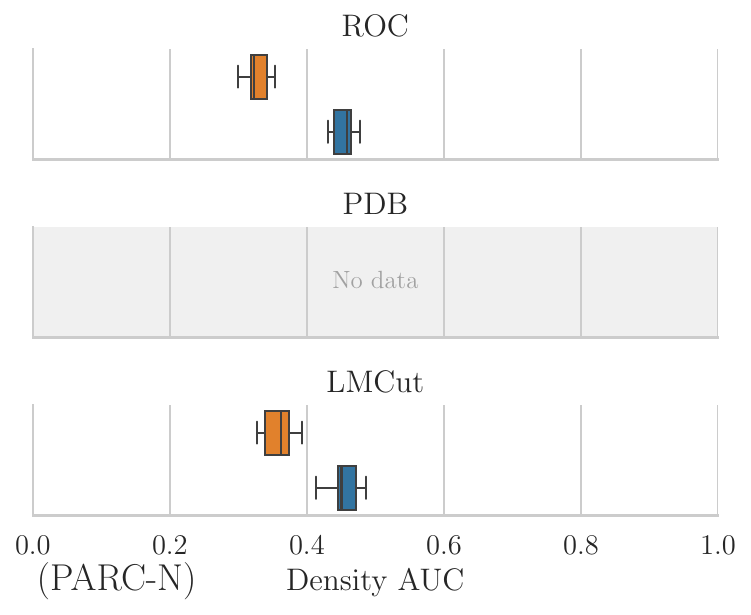}
\includegraphics[scale=0.36]{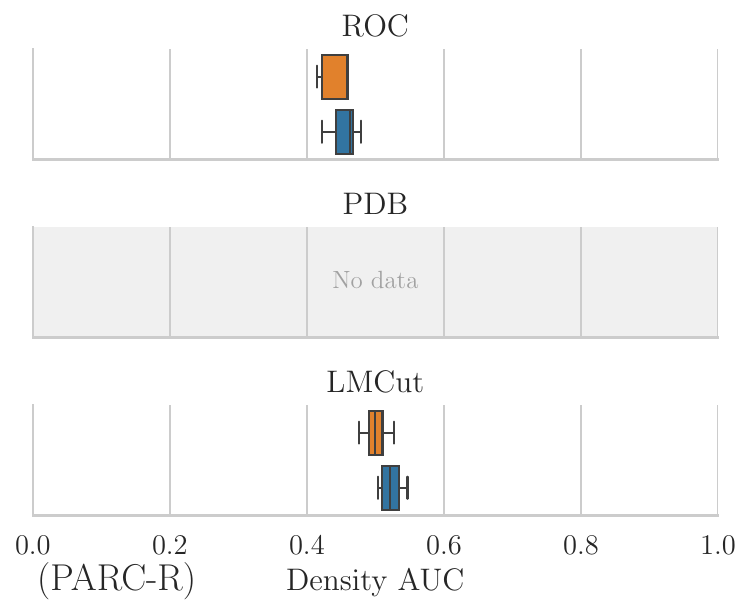}

\vspace{2mm}

\includegraphics[scale=0.36]{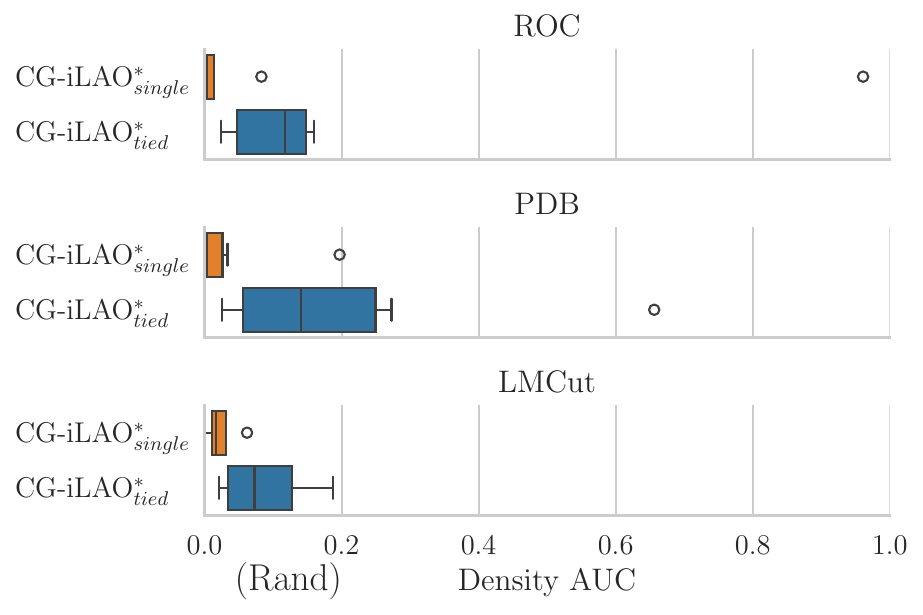}
\includegraphics[scale=0.36]{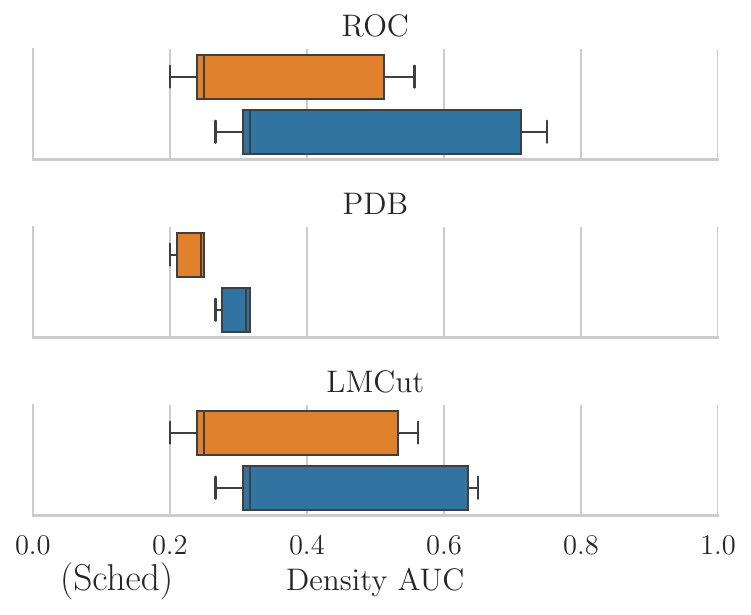}
\includegraphics[scale=0.36]{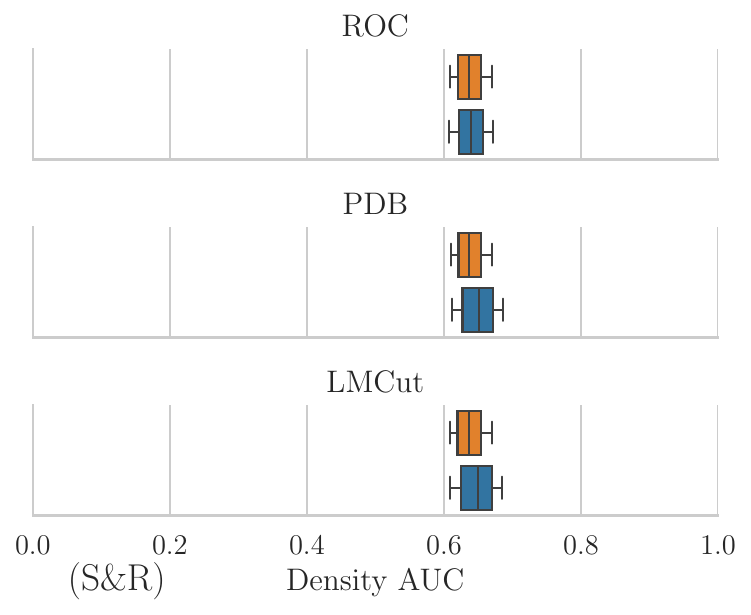}

\vspace{2mm}

\includegraphics[scale=0.36]{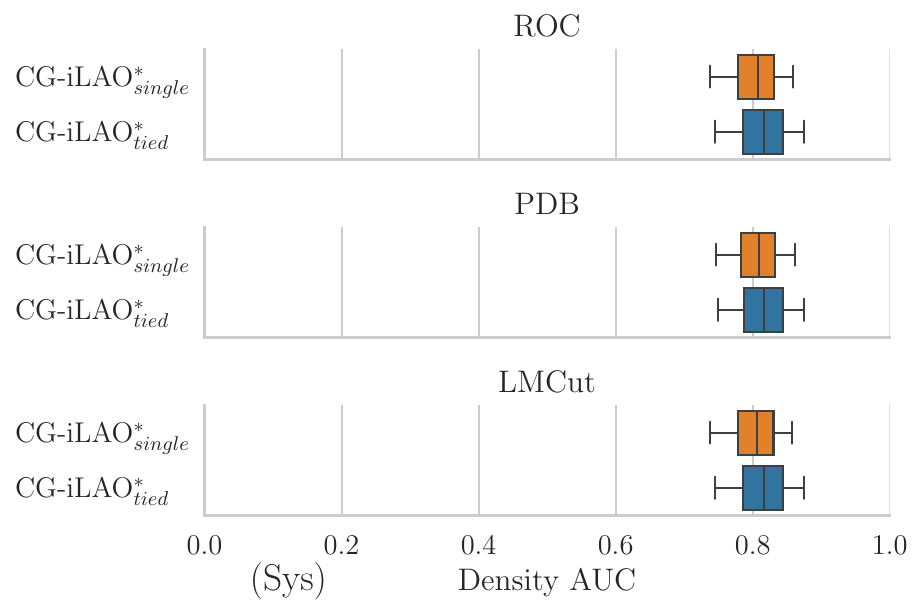}
\includegraphics[scale=0.36]{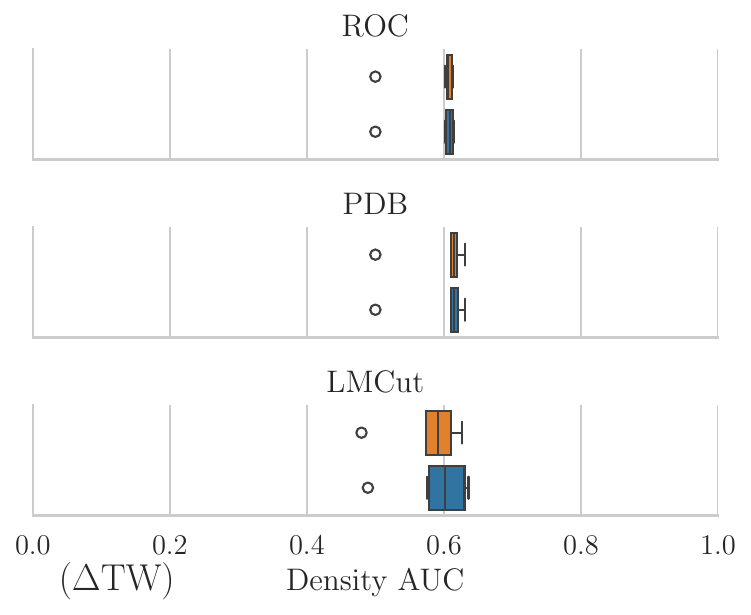}
\includegraphics[scale=0.36]{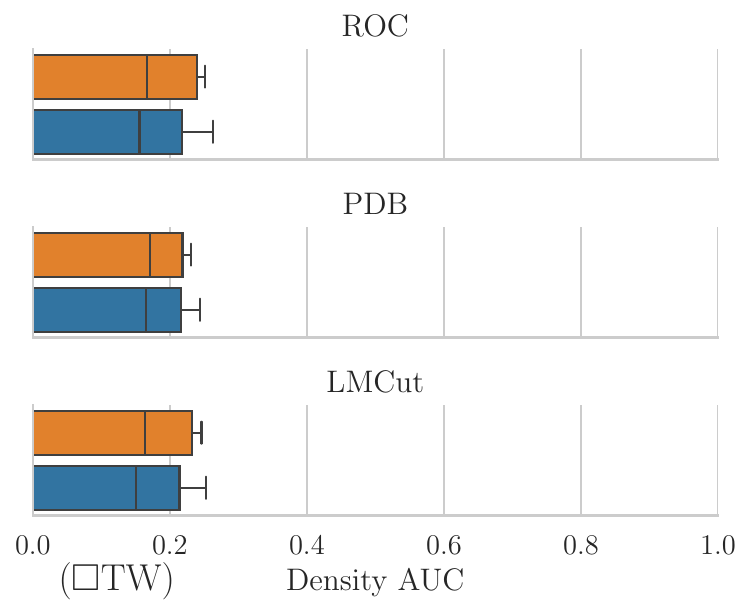}

\vspace{2mm}

\includegraphics[scale=0.36]{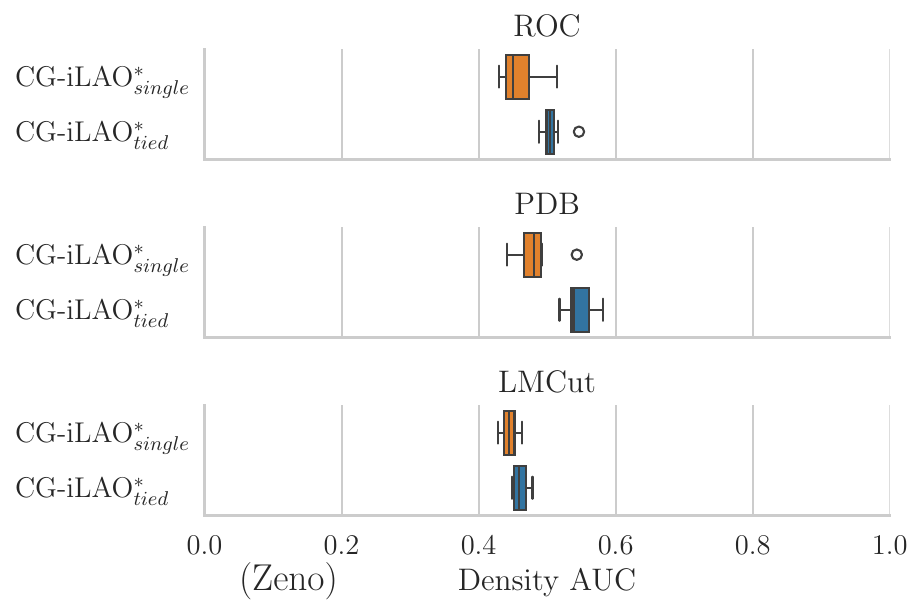}
\caption{Box-and-whisker plots of area under density curve (density AUC) aggregated over problems within each domain.}
\label{fig:density-box-and-whisker}
\end{figure*}

\clearpage %

\subsection{How does \cgilao compare to action elimination?}\label{sec:experiments-action-elimination}

We have discussed that \cgilao's mechanism for ignoring actions has two key advantages over action elimination: it does not require an upper bound, and it starts with a minimal set of actions and only adds more when they are required.
By adding actions as they are required, \cgilao enjoys small partial SSPs from the start and keeps them relatively small; compared to action elimination, which starts with all actions and only removes them when they are proved suboptimal, which is towards the end of the algorithm's lifetime.
These advantages result in \cgilao being more effective than action elimination.
This section confirms our claim experimentally, and furthermore shows that action elimination is able to remove only few actions that \cgilao inserts into its partial SSPs, demonstrating that \cgilao adds few provably suboptimal actions.
In order to compare \cgilao's mechanism with action elimination we now describe how to augment our algorithms with action elimination.

\paragraph{Adding Action Elimination to \ilao}
In general, Bellman-backup-based algorithms can be easily modified to use action elimination with the following modifications:
\begin{enumerate}
\item Rename the lower bound \V to \Vlb, and additionally track an upper bound \Vub.
We initialise \Vub with the trivial upper bound \(\Vub(g) = 0\) for all goals \(g \in \Sg\) and \(\Vub(\s) = \dCost\) for all other states \(\s \in \Ss \setminus \Sg\), where \(\dCost = \infty\) or the penalty term if we are using the fixed-penalty transformation.
\item For each backup to \Vlb, also apply a backup to \Vub (ignoring its residual).
\item After computing \(\Qlb(\s, \ac)\) and \(\Qub(\s, \ac)\) for each \(\ac \in \partactions(\s)\), permanently remove (eliminate) any action \(\tilde{\ac} \in \partactions\) from the partial SSP if it has \(\Qlb(\s, \tilde{\ac}) > \min_{\ac \in \A(\s)} \Qub(\s,\ac)\).
\end{enumerate}
If \(\Vlb \leq \V^* \leq \Vub\), then these modifications preserve the correctness of the algorithm, because we are only removing suboptimal actions with action elimination (\cref{thm:action-elimination}).
Indeed, if \ilao is initialised with \(\Vub \geq \V^*\), then it preserves \(\Vub \geq \V^*\) as an invariant, by using similar arguments to show \ilao has the invariant \(\Vlb \leq \V^*\) when initialised with an admissible heuristic~\cite{Hansen2001:ilao}.
Thus, this definition of \ilao with action elimination behaves as we expect and remains optimal.

\paragraph{Adding Action Elimination to \cgilao}
We can apply the same modifications to \cgilao, but with a subtle yet important change.
Recall that \cgilao does not guarantee \(\Vlb \leq \V^*\) in general, and if we apply action elimination when \(\Vlb \not\leq \V^*\) then optimal actions may be permanently removed, breaking the optimality of the algorithm.
So, we only allow \cgilao to apply action elimination when \(\colsToCheck = \emptyset\), at which point \(\Vlb \leq \V^*\) holds (see \cref{lemma:empty-cols-to-check-implies-admissibility}), and action elimination can be performed safely.\footnote{Actually, \cref{lemma:empty-cols-to-check-implies-admissibility} gives us \(\Vlb \leq \V^* + \epsilon \Nbar{\s}{\ssp}\), i.e., there is an error term. This error term is insignificant in practice, but can theoretically affect optimality.}
There is no analogous concern for \Vub, and it is always a valid upper bound because the absence of actions can only increase \Vub, since optimal actions may be missing.

Thus, we define \cgilaoBellmanTiedActionElim, \cgilaoBellmanSingleActionElim, and \cgilaoBellmanCompleteActionElim by adding action elimination to the relevant variant of \cgilao.
We make note of two technical details:
\begin{enumerate}
\item Initially, the only values of \Vub with \(\Vub(\s) < \dCost\) are goal states, so Bellman backups can not decrease \Vub until goal states enter the partial SSP.
Thus, in our implementation, we only start computing \qvalues for \Vub after a goal state enters the partial SSP, avoiding needless computation of \qvalues fixed at \dCost.
\item We still use \econsistency as the termination condition.
With access to \Vub, other termination conditions are possible, e.g., \(\Vub(\s) - \Vlb(\s) \leq \epsilon \; \forall \s \in \Ss\)~\cite{McMahan2005:BRTDP}, which have slightly different guarantees, and can in some cases be satisfied sooner.
\end{enumerate}

In these experiments, we consider the following algorithms:
\begin{itemize}
\item \cgilaoBellmanTied and \cgilaoBellmanTiedActionElim
\item \cgilaoBellmanSingle and \cgilaoBellmanSingleActionElim
\item \cgilaoBellmanComplete and \cgilaoBellmanCompleteActionElim
\item \ftvi~\cite{Dai2009:ftvi}.
\end{itemize}

To verify that \cgilao is more effective than action elimination, we compare the performance of \cgilaoBellmanTied and \cgilaoBellmanSingle with their action-elimination extended counterparts, and additionally compare them to \cgilaoBellmanComplete and its action-eliminated counterpart, which does not use the mechanism for ignoring inactive actions and implements \ilao.
We also include \ftvi, as a baseline of algorithms that use action elimination.
The ranks (\cref{tab:ranks-cgilao-vs-action-elimination}), coverage (\cref{tab:coverage-cgilao-action-elimination}), and cumulative plots (\cref{fig:main-cumulative-cgilao-vs-action-elimination}) paint a consistent picture:
\begin{itemize}
\item algorithms with action elimination are slower than their counterparts without action elimination,
\item \cgilaoBellmanTied and \cgilaoBellmanSingle are faster than any of the algorithms with action elimination that we consider.
\end{itemize}
The issue with action elimination is that it must track and compute Bellman backups for the additional value function \Vub, incurring an additional overhead of \qvalue computations, and this can only pay off if it eliminates sufficiently many actions to save more \qvalue computations.
By the cumulative plot over \qvalues (\cref{fig:main-cumulative-cgilao-vs-action-elimination}), we see that this is not the case.

\begin{table}[t!]
\centering
\adjustbox{max width=\linewidth}{

\begin{tabular}{|l S[table-format=1.2(1.2)]|}
\multicolumn{2}{c}{\roc} \\
\cgilaoBellmanTied & 3.01(0.12) \\
\cgilaoBellmanSingle & 3.14(0.13) \\
\cgilaoBellmanTiedActionElim & 3.47(0.12) \\
\cgilaoBellmanSingleActionElim & 3.49(0.13) \\
\cgilaoBellmanComplete & 4.33(0.13) \\
\cgilaoBellmanCompleteActionElim & 4.52(0.13) \\
\ftvi & 6.05(0.13) \\
\end{tabular}

\begin{tabular}{|l S[table-format=1.2(1.2)]|}
\multicolumn{2}{c}{\pdb} \\
\cgilaoBellmanTied & 3.01(0.12) \\
\cgilaoBellmanSingle & 3.30(0.13) \\
\cgilaoBellmanTiedActionElim & 3.63(0.11) \\
\cgilaoBellmanSingleActionElim & 3.86(0.12) \\
\cgilaoBellmanComplete & 4.17(0.12) \\
\cgilaoBellmanCompleteActionElim & 4.46(0.12) \\
\ftvi & 5.57(0.13) \\
\end{tabular}

\begin{tabular}{|l S[table-format=1.2(1.2)]|}
\multicolumn{2}{c}{\lmcut} \\
\cgilaoBellmanTied & 3.22(0.11) \\
\cgilaoBellmanSingle & 3.38(0.12) \\
\cgilaoBellmanSingleActionElim & 3.42(0.12) \\
\cgilaoBellmanTiedActionElim & 3.79(0.11) \\
\cgilaoBellmanComplete & 4.08(0.12) \\
\cgilaoBellmanCompleteActionElim & 4.45(0.11) \\
\ftvi & 5.67(0.13) \\
\end{tabular}

}
\caption{Runtime ranking of \cgilao and action elimination methods within a specified heuristic (mean and 95\% CI over all instances).}
\label{tab:ranks-cgilao-vs-action-elimination}
\end{table}

\begin{table}[t!]
\centering
\adjustbox{max width=\linewidth}{

\begin{tabular}{llrrrrrrrrrrrrrr}
 & & \rotatebox{90}{\bw} & \rotatebox{90}{\coresec} & \rotatebox{90}{\elevators} & \rotatebox{90}{\exbw} & \rotatebox{90}{\parcn} & \rotatebox{90}{\parcr} & \rotatebox{90}{\random} & \rotatebox{90}{\recttireworld} & \rotatebox{90}{\sar} & \rotatebox{90}{\schedule} & \rotatebox{90}{\sysadmin} & \rotatebox{90}{\tritireworld} & \rotatebox{90}{\zenotravel} & \rotatebox{90}{total} \\
\cline{1-16}
 & \# of instances & 110 & 35 & 75 & 105 & 30 & 30 & 75 & 70 & 25 & 45 & 20 & 40 & 45 & 705 \\
\cline{1-16}
\multirow[c]{7}{*}{\rotatebox{90}{\roc}} & \ftvi & 55 & 20 & 50 & 65 & 0 & 10 & 33 & 54 & 20 & 30 & \bestCovr{20} & 30 & 5 & 392 \\
 & \cgilaoBellmanComplete & \bestCovr{105} & 25 & 70 & \bestCovr{105} & \bestCovr{30} & \bestCovr{27} & 35 & 60 & \bestCovr{25} & \bestCovr{45} & \bestCovr{20} & 35 & 39 & 621 \\
 & \cgilaoBellmanCompleteActionElim & \bestCovr{105} & 25 & 70 & \bestCovr{105} & \bestCovr{30} & 25 & 35 & 60 & \bestCovr{25} & \bestCovr{45} & \bestCovr{20} & 35 & 35 & 615 \\
 & \cgilaoBellmanTied & \bestCovr{105} & 25 & \bestCovr{75} & \bestCovr{105} & \bestCovr{30} & 25 & 36 & 60 & 20 & \bestCovr{45} & \bestCovr{20} & 35 & \bestCovr{40} & 621 \\
 & \cgilaoBellmanTiedActionElim & \bestCovr{105} & 25 & 74 & \bestCovr{105} & \bestCovr{30} & 25 & 37 & 60 & 20 & \bestCovr{45} & \bestCovr{20} & 35 & 38 & 619 \\
 & \cgilaoBellmanSingle & \bestCovr{105} & 25 & \bestCovr{75} & \bestCovr{105} & \bestCovr{30} & 25 & \bestCovr{43} & 60 & 20 & \bestCovr{45} & \bestCovr{20} & 35 & \bestCovr{40} & \bestCovr{628} \\
 & \cgilaoBellmanSingleActionElim & \bestCovr{105} & 25 & 72 & \bestCovr{105} & \bestCovr{30} & 25 & \bestCovr{43} & 60 & 20 & \bestCovr{45} & \bestCovr{20} & 35 & 38 & 623 \\

\cline{1-16} \multirow[c]{7}{*}{\rotatebox{90}{\pdb}} & \ftvi & 45 & 20 & 50 & 57 & 0 & 0 & 22 & 55 & 20 & 30 & \bestCovr{20} & 30 & 5 & 354 \\
 & \cgilaoBellmanComplete & 85 & \bestCovr{30} & 70 & 90 & 0 & 0 & 30 & 60 & \bestCovr{25} & 30 & \bestCovr{20} & 37 & \bestCovr{40} & 517 \\
 & \cgilaoBellmanCompleteActionElim & 85 & \bestCovr{30} & 70 & 90 & 0 & 0 & 30 & 60 & \bestCovr{25} & 30 & \bestCovr{20} & 35 & \bestCovr{40} & 515 \\
 & \cgilaoBellmanTied & 90 & \bestCovr{30} & \bestCovr{75} & 90 & 0 & 0 & 30 & 60 & 24 & 30 & \bestCovr{20} & \bestCovr{38} & \bestCovr{40} & 527 \\
 & \cgilaoBellmanTiedActionElim & 90 & \bestCovr{30} & 74 & 90 & 0 & 0 & 30 & 60 & \bestCovr{25} & 30 & \bestCovr{20} & 35 & \bestCovr{40} & 524 \\
 & \cgilaoBellmanSingle & 90 & \bestCovr{30} & \bestCovr{75} & 90 & 0 & 0 & 30 & 60 & 20 & 30 & \bestCovr{20} & 37 & \bestCovr{40} & 522 \\
 & \cgilaoBellmanSingleActionElim & 90 & \bestCovr{30} & 72 & 90 & 0 & 0 & 30 & 60 & 20 & 30 & \bestCovr{20} & 35 & \bestCovr{40} & 517 \\
\cline{1-16}

\cline{1-16} \multirow[c]{7}{*}{\rotatebox{90}{\lmcut}} & \ftvi & 45 & 20 & 50 & 70 & 0 & 1 & 18 & 60 & 20 & 30 & \bestCovr{20} & 25 & 5 & 364 \\
 & \cgilaoBellmanComplete & 45 & 25 & 70 & 100 & \bestCovr{30} & 20 & 20 & \bestCovr{65} & \bestCovr{25} & 40 & \bestCovr{20} & 30 & 25 & 515 \\
 & \cgilaoBellmanCompleteActionElim & 45 & 25 & 70 & 100 & \bestCovr{30} & 20 & 20 & \bestCovr{65} & \bestCovr{25} & 40 & \bestCovr{20} & 30 & 25 & 515 \\
 & \cgilaoBellmanTied & 45 & 25 & 70 & 101 & \bestCovr{30} & 20 & 20 & \bestCovr{65} & 24 & \bestCovr{45} & \bestCovr{20} & 30 & 25 & 520 \\
 & \cgilaoBellmanTiedActionElim & 45 & 25 & 70 & 100 & \bestCovr{30} & 20 & 20 & \bestCovr{65} & 21 & 40 & \bestCovr{20} & 30 & 25 & 511 \\
 & \cgilaoBellmanSingle & 45 & 25 & 71 & 100 & \bestCovr{30} & 20 & 20 & \bestCovr{65} & 20 & \bestCovr{45} & \bestCovr{20} & 30 & 25 & 516 \\
 & \cgilaoBellmanSingleActionElim & 45 & 25 & 70 & 101 & \bestCovr{30} & 20 & 20 & \bestCovr{65} & 20 & 42 & \bestCovr{20} & 30 & 25 & 513 \\
\cline{1-16}

\end{tabular}

}
\caption{Coverage for action elimination experiment with each considered heuristic over the benchmark domains.
The highest coverage for each problem is marked with boldface.
}\label{tab:coverage-cgilao-action-elimination}
\end{table}

\begin{figure}[t!]
\raggedright
\includegraphics[scale=0.5, valign=t]{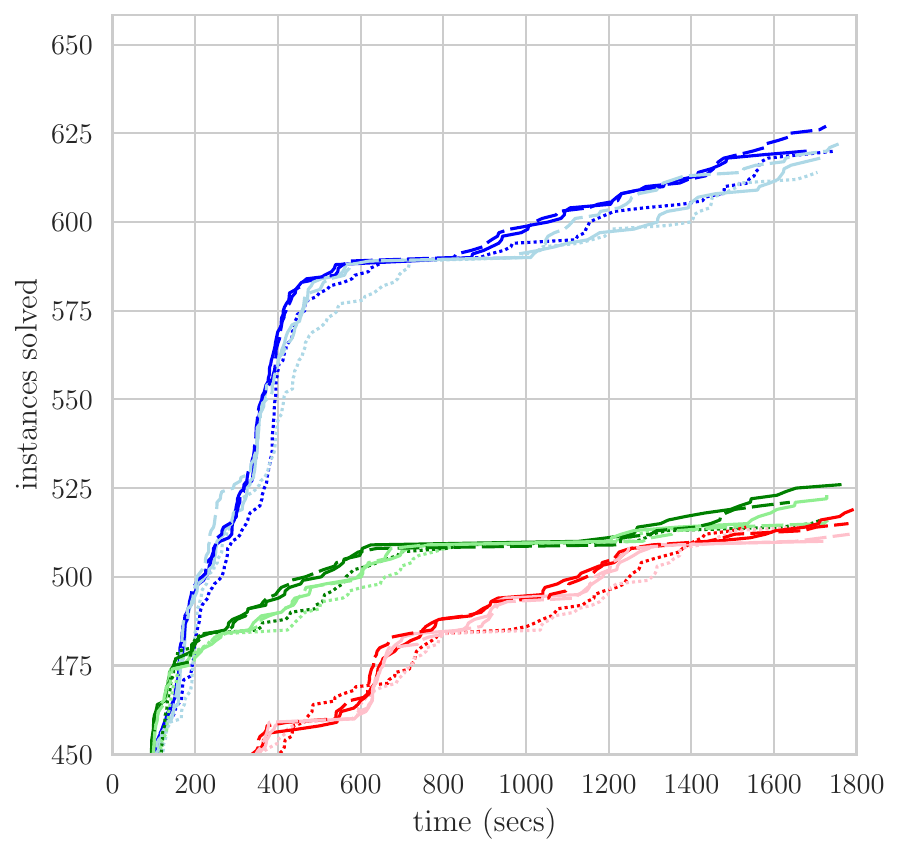}
\includegraphics[scale=0.5, valign=t]{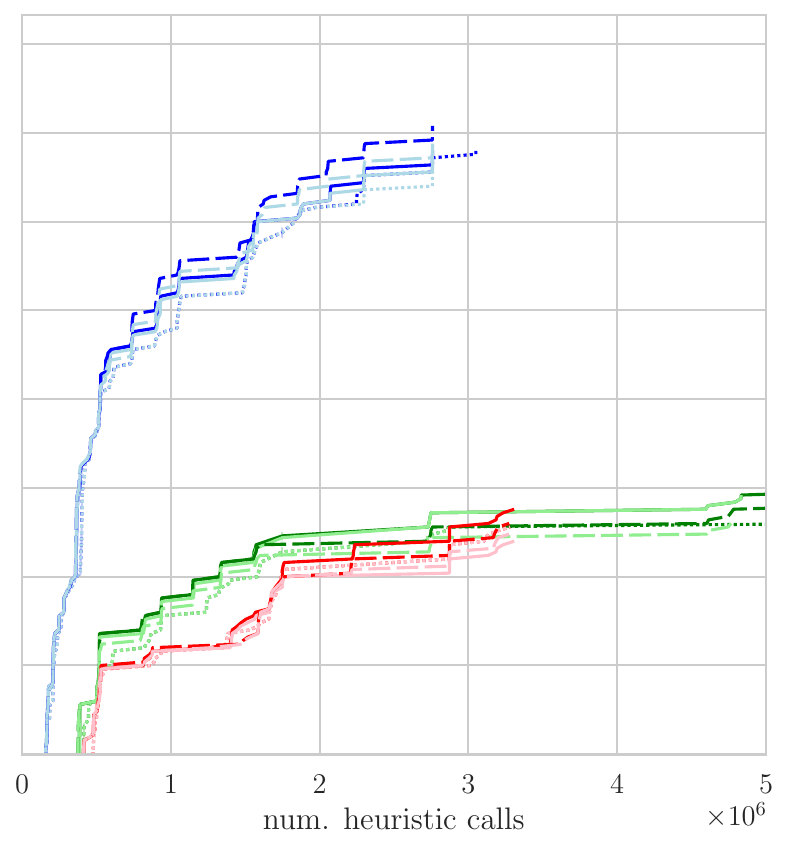}
\includegraphics[scale=0.5, valign=t]{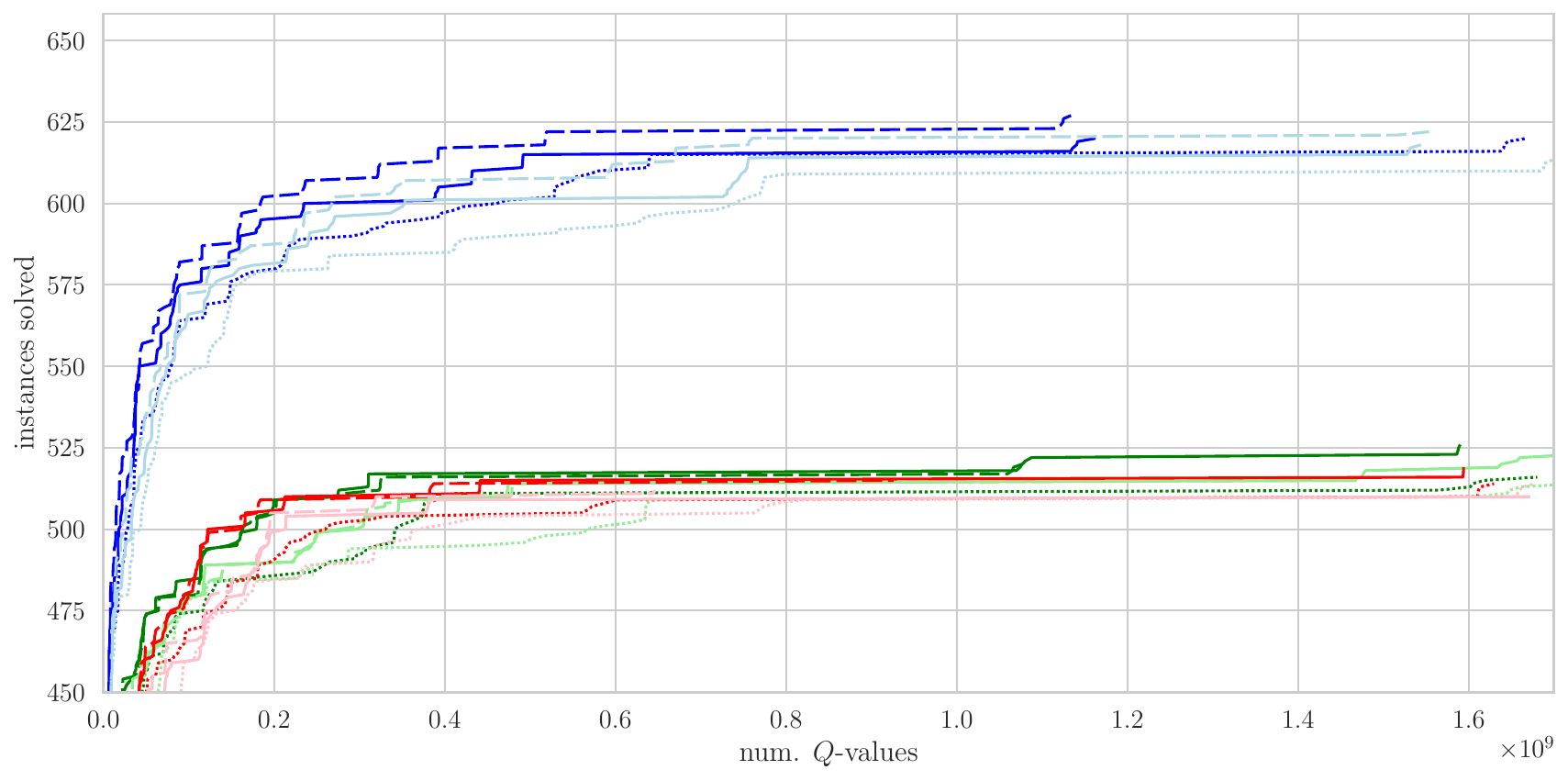}
\begin{center}
\includegraphics[scale=0.5, valign=t]{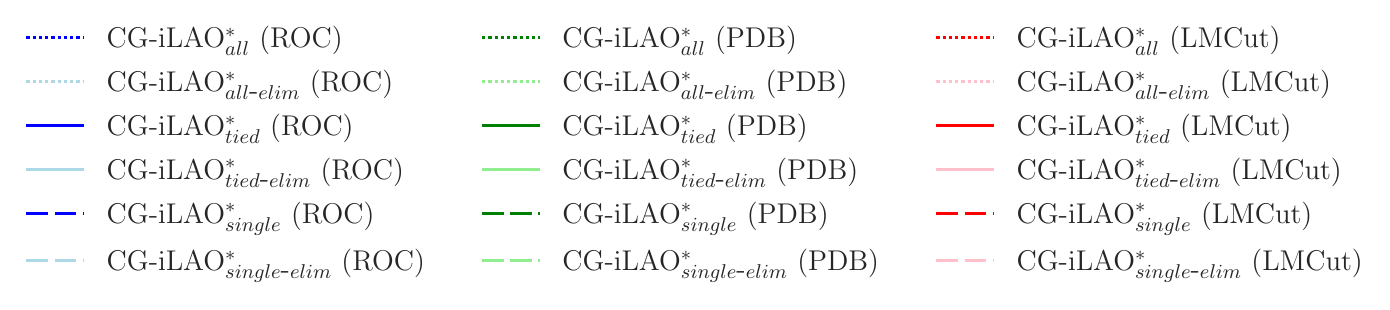}
\end{center}
\caption{
For each algorithm and heuristic, the cumulative plot of how many problems and seeds were solved w.r.t. time in seconds (top left), number of \qvalues (top right), and number of calls to heuristic (bottom).
To focus on where the algorithms are different we start the \(y\)-axis at \data{450} instances, and end the \(x\)-axis at \data{\(5 \times 10^6\)} heuristic calls.
}
\label{fig:main-cumulative-cgilao-vs-action-elimination}
\end{figure}

We now investigate how many actions are removed by action elimination, compared to how many actions are ignored by \cgilao.
\Cref{tab:partial-ssp-sizes-percentages-action-elimination} shows the number of actions in the final partial SSPs of the \cgilao variants.
We do not include states in this table -- action elimination has no way to reduce the number of states, so the percentage of states is equal between algorithms and their counterpart with action elimination (excluding some noise).
In terms of actions, we see that \cgilaoBellmanCompleteActionElim never has significantly fewer actions in its final partial SSP than \cgilaoBellmanTied and \cgilaoBellmanSingle, and often has significantly more actions.
This tells us that \cgilao is more effective at reducing partial SSP sizes than action elimination.
Moreover, if we compare \cgilaoBellmanTied and \cgilaoBellmanSingle with their action elimination counterparts, we see that action elimination is able to shrink the partial SSP slightly, but never significantly, i.e., \cgilao adds very few actions that can be removed by action elimination.
This is a strong argument in favour of \cgilao: its mechanism for ignoring actions induces similarly sized final partial SSPs to action elimination, but it does not require upper bounds, and it avoids these actions from the start, rather than eliminating them towards the end of the algorithm's lifetime.

\begin{table}[t!]
\centering
\adjustbox{max width=\linewidth}{

\begin{tabular}{ll S[table-format=2.1(1.1)] S[table-format=2.1(1.1)] S[table-format=2.1(1.1)] S[table-format=2.1(1.1)] S[table-format=2.1(2.1)] }
 & & \cgilaoBellmanTied & \cgilaoBellmanTiedActionElim & \cgilaoBellmanSingle & \cgilaoBellmanSingleActionElim & \cgilaoBellmanCompleteActionElim \\
\Xhline{1pt}
\multirow[c]{3}{*}{{\bw}} & {\roc} & 40.8(1.2) & 40.7(1.3) & 33.2(2.8) & 33.1(2.8) & 98.8(0.3) \\
 & {\lmcut} & 45.5(1.4) & 44.6(1.3) & 42.5(2.0) & 41.5(1.8) & 97.6(0.3) \\
 & {\pdb} & 39.7(0.9) & 39.3(0.9) & 30.9(1.6) & 30.6(1.6) & 97.4(0.5) \\
\cline{1-7} \multirow[c]{3}{*}{{\coresec}} & {\roc} & 33.6(1.9) & 33.5(1.8) & 20.5(1.1) & 20.4(1.0) & 77.6(4.0) \\
 & {\lmcut} & 39.0(1.9) & 37.4(1.5) & 24.9(1.1) & 23.8(0.9) & 82.6(3.1) \\
 & {\pdb} & 29.0(1.6) & 28.4(1.5) & 22.2(1.6) & 22.0(1.5) & 73.0(3.3) \\
\cline{1-7} \multirow[c]{3}{*}{{\elevators}} & {\roc} & 60.5(1.6) & 60.1(1.5) & 53.3(2.2) & 53.0(2.1) & 98.0(0.6) \\
 & {\lmcut} & 45.0(3.1) & 44.5(3.0) & 40.4(3.0) & 40.0(2.8) & 97.0(1.1) \\
 & {\pdb} & 56.9(1.7) & 56.5(1.6) & 50.1(2.2) & 49.8(2.2) & 97.5(0.8) \\
\cline{1-7} \multirow[c]{3}{*}{{\exbw}} & {\roc} & 44.8(1.5) & 44.1(1.5) & 39.6(1.7) & 38.8(1.7) & 85.6(2.2) \\
 & {\lmcut} & 39.2(2.1) & 38.6(2.0) & 37.1(1.8) & 36.5(1.7) & 88.3(2.4) \\
 & {\pdb} & 53.9(3.4) & 53.3(3.4) & 49.9(3.7) & 49.4(3.7) & 93.4(1.5) \\
\cline{1-7} \multirow[c]{2}{*}{{\parcn}} & {\roc} & 44.7(0.9) & 34.8(1.0) & 32.7(0.9) & 23.9(0.9) & 44.8(1.6) \\
 & {\lmcut} & 44.0(1.2) & 37.1(1.3) & 35.2(1.3) & 29.0(1.1) & 62.0(1.7) \\
\cline{1-7} \multirow[c]{2}{*}{{\parcr}} & {\roc} & 44.1(1.0) & 42.9(0.7) & 43.0(1.0) & 41.9(0.8) & 92.7(2.0) \\
 & {\lmcut} & 51.9(1.0) & 50.8(1.1) & 50.1(1.2) & 49.1(1.4) & 89.8(1.8) \\
\cline{1-7} \multirow[c]{3}{*}{{\random}} & {\roc} & 9.1(1.8) & 9.1(1.8) & 0.9(0.3) & 0.9(0.3) & 100.0(0.0) \\
 & {\lmcut} & 9.9(3.9) & 9.9(3.9) & 1.8(0.7) & 1.8(0.7) & 76.1(18.6) \\
 & {\pdb} & 17.0(8.1) & 17.0(8.0) & 1.0(0.5) & 1.0(0.5) & 75.0(14.8) \\
\cline{1-7} \multirow[c]{3}{*}{{\recttireworld}} & {\roc} & 31.5(0.8) & 26.6(1.1) & 33.1(1.6) & 27.9(1.7) & 66.2(4.2) \\
 & {\lmcut} & 30.9(0.9) & 26.4(1.2) & 32.8(1.8) & 27.9(1.8) & 67.4(4.2) \\
 & {\pdb} & 31.0(1.5) & 26.3(1.3) & 31.5(1.8) & 26.7(1.8) & 66.2(3.9) \\
\cline{1-7} \multirow[c]{3}{*}{{\sar}} & {\roc} & 91.6(0.8) & 83.3(2.3) & 91.4(0.7) & 83.0(2.3) & 91.1(1.9) \\
 & {\lmcut} & 92.0(0.8) & 83.5(2.3) & 91.3(0.7) & 82.9(2.3) & 90.8(2.1) \\
 & {\pdb} & 92.2(0.8) & 85.1(2.5) & 91.4(0.7) & 82.7(2.4) & 91.3(2.0) \\
\cline{1-7} \multirow[c]{3}{*}{{\schedule}} & {\roc} & 78.1(0.7) & 77.6(0.9) & 50.7(1.1) & 50.2(1.1) & 81.4(0.9) \\
 & {\lmcut} & 73.8(3.2) & 73.2(3.5) & 50.2(1.2) & 49.8(1.3) & 79.0(0.5) \\
 & {\pdb} & 79.6(0.5) & 79.6(0.5) & 51.1(1.3) & 51.1(1.3) & 79.6(0.5) \\
\cline{1-7} \multirow[c]{3}{*}{{\sysadmin}} & {\roc} & 99.5(0.3) & 95.8(0.8) & 97.4(0.3) & 93.6(0.8) & 95.9(0.8) \\
 & {\lmcut} & 99.5(0.4) & 95.4(0.9) & 97.3(0.4) & 93.3(0.8) & 95.5(0.9) \\
 & {\pdb} & 100.0(0.0) & 95.6(1.0) & 98.4(0.4) & 93.8(0.7) & 95.6(1.0) \\
\cline{1-7} \multirow[c]{3}{*}{{\tritireworld}} & {\roc} & 64.4(0.8) & 63.0(0.9) & 63.8(0.9) & 62.3(0.9) & 86.3(2.9) \\
 & {\lmcut} & 68.6(1.5) & 67.8(1.8) & 67.0(1.5) & 66.7(1.6) & 88.5(3.9) \\
 & {\pdb} & 65.7(0.7) & 64.2(0.8) & 65.5(0.7) & 64.0(0.8) & 86.2(3.2) \\
\cline{1-7} \multirow[c]{3}{*}{{\zenotravel}} & {\roc} & 37.1(0.6) & 37.0(0.6) & 30.1(0.8) & 30.1(0.8) & 99.9(0.1) \\
 & {\lmcut} & 34.3(0.4) & 34.3(0.4) & 31.6(0.5) & 31.6(0.5) & 99.9(0.1) \\
 & {\pdb} & 44.0(1.3) & 44.0(1.3) & 34.0(1.1) & 34.0(1.1) & 99.8(0.1) \\
\Xhline{1pt}
\end{tabular}

}
\caption{Number of actions in each algorithm's final partial SSP, as a percentage of \cgilaoBellmanComplete's final partial SSP. These values are means over instances where the considered algorithm and \cgilaoBellmanComplete both terminated.}
\label{tab:partial-ssp-sizes-percentages-action-elimination}
\end{table}

\section{Conclusion}\label{sec:conclusion}

In this paper, we addressed an open problem in optimal heuristic search for SSPs: existing methods consider all applicable actions, even when the heuristic makes it clear that some are not needed.
To do this, we built on existing connections between operations research and planning to reframe \ilao, a state-of-the-art optimal heuristic-search algorithm, as constraint and variable generation for LPs.
Under this lens, we were able to refine the existing method's separation oracle, which enables the algorithm to ignore inactive actions, and only add actions when they are deemed necessary.
Bringing this back into a dynamic programming implementation yields our optimal heuristic-search algorithm \cgilao.
\cgilao's novel ability to ignore inactive actions gives it different theoretical properties from existing methods, e.g., it does not guarantee that its value function remains admissible, and the algorithm is not monotonic.
Nevertheless, we have proved that it is an optimal algorithm.
We showed experimentally that \cgilao's mechanism for ignoring actions pays off, and it outperforms the state-of-the-art on our benchmarks.
Going into more detail, we showed that \cgilao is able to ignore a significant proportion of actions, resulting in significantly smaller search spaces than \ilao, which in turn results in fewer \qvalue computations and savings in time.
In comparison, the existing technique of action elimination, which proves that actions are suboptimal in order to permanently remove them, fails to pay off.
We showed that \cgilao's mechanism for ignoring actions is faster, generally ignores the same actions that action elimination removes, and in addition \cgilao not require an upper bound.
Thus, \cgilao successfully addresses the open problem, and represents an important step in heuristic-search that lets us use heuristics to select promising actions, not only states.

\section{Future Work}\label{sec:future-work}

As a direction for future research, we aim to generalise \cgilao to more complex models.
We have seen that \cgilao's mechanism for ignoring inactive actions and adding them as necessary yields in savings of \qvalue computations, which is a promising benefit for problems where \qvalues are expensive to compute.
For example, consider models with imprecise parameters, such as MDPIPs and MDPSTs~\cite{white94mdpip,trevizan07:mdpst}.
These models have \textit{minimax} semantics for the Bellman equations, which means that their value function minimises the expected cost-to-go while accounting for an adversary who selects the values of the imprecise parameters in a way that maximises the cost-to-go.
Thus, computing each \qvalue\ is an expensive operation that requires solving a maximisation problem.
Consequently, \cgilao's mechanism for avoiding actions and saving on \qvalues could potentially improve performance significantly.

Other candidate models include SSPs with \textit{PLTL constraints}~\cite{baumgartner18:pltldual,mallet21:pltlheuristics}.
Typically, these models are solved by extending the state and action space of the original SSP with information about the relevant PLTL formulae, in order to track whether the PLTL constraints are satisfied.
It may be possible to use the concept of inactive actions to avoid actions that lead to constraint violations in their partial problems.
In fact, the methods presented in this paper may be applicable to model checking more broadly.
For example, previous work has explored the use of heuristics to guide the search for probabilistic reachability~\cite{Brazdil2014}, where action elimination can be directly applied.

\section*{Acknowledgements}
This research was undertaken with the assistance of resources and services from the National Computational Infrastructure (NCI), which is supported by the Australian Government.
Johannes Schmalz received funding from DFG grant 389792660 as part of TRR 248 (see \url{https://perspicuous-computing.science}).

\bibliography{references}

\end{document}